\documentclass{article}

\PassOptionsToPackage{numbers}{natbib}

\usepackage[preprint]{neurips_2026}

\usepackage[utf8]{inputenc} 
\usepackage[T1]{fontenc}    
\usepackage[colorlinks=true, linkcolor=blue, citecolor=blue, urlcolor=blue]{hyperref}
\usepackage{url}            
\usepackage{booktabs}       
\usepackage{amsfonts}       
\usepackage{nicefrac}       
\usepackage{microtype}      
\usepackage{xcolor}         

\usepackage{subcaption}
\usepackage{multirow}
\usepackage{amsmath}
\usepackage{amssymb}
\usepackage{mathtools}
\usepackage{amsthm}
\usepackage{thmtools}
\usepackage{thm-restate}
\usepackage[capitalize,noabbrev]{cleveref}
\usepackage{xfrac}

\theoremstyle{plain}
\newtheorem{theorem}{Theorem}[section]
\newtheorem{proposition}[theorem]{Proposition}

\newtheorem{corollary}[theorem]{Corollary}
\theoremstyle{definition}
\newtheorem{definition}[theorem]{Definition}

\theoremstyle{remark}
\newtheorem{remark}[theorem]{Remark}

\expandafter\newcommand\csname tinyx\endcsname[1]
  {{\fontsize{6.5pt}{8pt}\selectfont #1}}

\title{Decision-Aligned Evaluation of\\ Uncertainty Quantification}

\author{%
  Annika Schneider\textsuperscript{1,2,3,4}\
  Tommy Rochussen\textsuperscript{1,2,3}\
  Joshua Stiller\textsuperscript{3,5}\
  Vincent Fortuin\textsuperscript{2,3,4,6}\\
  \textsuperscript{1}Technical University of Munich\quad
  \textsuperscript{2}Helmholtz AI\quad
  \textsuperscript{3}MCML\\
  \textsuperscript{4}Konrad Zuse School of Excellence in Reliable AI\\
  \textsuperscript{5}LMU Munich\quad
  \textsuperscript{6}University of Technology Nuremberg
}

\begin{document}

\maketitle

\begin{abstract}
  Uncertainty estimates in machine learning are typically evaluated using generic metrics such as the negative log-likelihood and expected calibration error, yet good performance on such metrics does not necessarily imply high utility in downstream decisions.
  We introduce \emph{decision-alignment}, a criterion that reveals which evaluation metrics meaningfully align with downstream utilities.
  Applying this framework, we show that many widely used uncertainty metrics are either misaligned with common decision problems or encode pathological prior beliefs about the downstream task.
  We then propose \emph{prior-weighted utility metrics}, a special class of proper scoring rules that provides decision-aligned uncertainty evaluation.
  Across benchmark experiments and real-world case studies, our metrics consistently align with realized decision utility, while conventional metrics do not.
  Our results surface flaws in the current UQ evaluation protocol and offer a principled extension of existing metrics toward decision-relevant UQ evaluation.
\end{abstract}

\section{Introduction}\label{sect:intro}

    Uncertainty quantification (UQ) in machine learning (ML) is key for reliable decision-making.
    Probabilistic ML research aims to develop high-quality UQ methods, facilitating ML usage in~safety-critical domains.
    How UQ quality is measured directly shapes which methods are developed,~published, and deployed.
    Since the ``true uncertainty'' cannot be observed, UQ evaluation relies on surrogate metrics such as negative log-likelihood (NLL) or expected calibration error (ECE) \cite{ovadia_can_2019,vadera_ursabench_2022,basora_benchmark_2025}.
    However, good performance on such metrics does not necessarily imply high utility in downstream decisions \cite{vickers_decision_2006,dusenberry_analyzing_2020,ferrer_evaluating_2025}.
    We argue that \textbf{UQ evaluation should reflect downstream decision utility directly}.

    In this paper, we take UQ evaluation as our object of study.
    We focus on general-purpose evaluation where no specific downstream task is fixed.
    Benchmarks guide progress in ML~\cite{hardt_patterns_2021}, so by restricting ourselves to generic measures such as the NLL and ECE at this stage, we risk leaving behind methods that do not excel on such metrics but are actually more useful in downstream applications.

    In this work, \textbf{we aim to improve the understanding of common UQ evaluation metrics and align UQ evaluation with decision-making}.
    We make the following contributions:
    \begin{enumerate}
        \item We introduce \emph{decision-alignment} as a formal criterion to show that common UQ evaluation metrics either encode pathological priors or no coherent decision belief at all (\cref{sect:theory}).
        \item Based on these findings, we propose \emph{prior-weighted utility metrics} that capture a model's value for downstream utilities (\cref{sect:method}).
        \item We demonstrate in classification and regression experiments that only prior-weighted utility metrics reliably align with real downstream utilities (\cref{sect:experiments}).\footnote{
            Our code is available at \url{https://github.com/fortuinlab/prior-weighted-utilities}.
        }
    \end{enumerate}

\begin{table}[t]
    \caption{
        Summary of the main results from \cref{sect:theory} for classification \emph{(left)} and regression \emph{(right)}.
        If the metric is decision-aligned, we state the implicit prior $\pi$.
        Else, we write \emph{not aligned}. All UQ metrics we consider are either not decision-aligned or encode pathological prior beliefs.
    }
    \vspace{0.7em}
    \label{tab:theory_summary}
    \begin{subtable}{0.49\linewidth}
        \raggedright
        \begin{tabular}{lll}
            \toprule
            Metric & Binary decision & Top-$k$ select. \\
            \midrule
            NLL & $\pi(c) = \tfrac{1}{c(1-c)} \phantom{xx}$ & \multirow{2}{*}{not aligned} \\
            BS & $\pi(c) = 2$ &  \\
            \midrule
            \multirow{2}{*}{Acc} & \multirow{2}{*}{$\pi(c) = 2\delta_{0.5}(c)$} & $\pi(k) = \delta_{n}(k)$, \\
            & & \tiny{not aligned for $k < n$} \\
            \midrule
            ECE & \multirow{2}{*}{not aligned} & \multirow{2}{*}{not aligned} \\
            MCE &  &  \\
            \midrule
            R-AUC & \multirow{2}{*}{not aligned} & \multirow{2}{*}{not aligned} \\
            E-Det &  &  \\
            \bottomrule
        \end{tabular}
    \end{subtable}
    \hfill
    \begin{subtable}{0.49\linewidth}
        \raggedleft
        \begin{tabular}{lll}
            \toprule
            Metric & Selective pred. & Top-$k$ select. \\
            \midrule
            \multirow{2}{*}{NLL} & $\pi(\lambda) = \varepsilon\lambda^{-2}\phantom{\tfrac{1}{(1)}}$  & \multirow{2}{*}{not aligned} \\
            & \tiny{for $\lambda,\sigma^2>\varepsilon>0$} $\phantom{2}$ & \\
            \midrule
            \multirow{2}{*}{MSE} & $\pi(\lambda) = \delta_\infty(\lambda)$, & \multirow{2}{*}{not aligned} \\
            & \tiny{not aligned for $\lambda < \infty$} & \\
            \midrule
            ECE & \multirow{2}{*}{not aligned} & \multirow{2}{*}{not aligned} \\
            MCE &  &  \\
            \midrule
            R-AUC & \multirow{2}{*}{not aligned} & \multirow{2}{*}{not aligned} \\
            E-Det &  &  \\
            \bottomrule
            \end{tabular}
    \end{subtable}
\end{table}

\section{Related work}\label{sect:related_work}

    Here, we summarize the prior work most relevant to our contribution.
    In \cref{app:intro}, we comment in more detail on the scope of our work.
    Further studies that share similarities with our framework or goals are acknowledged in \cref{app:related_work}.
    
    \paragraph{Common UQ evaluation metrics in ML}

        Current UQ studies in ML primarily evaluate uncertainty through predictive accuracy and calibration metrics.
        Benchmarking studies such as those of \citet{ovadia_can_2019}, \citet{nado_uncertainty_2022}, and \citet{basora_benchmark_2025} typically assess models using NLL, Brier score (BS), ECE, and ranking-scores such as the area under the receiver operating characteristic curve (AUROC).
        Among these metrics, \emph{proper scoring rules (PSRs)} such as the NLL and BS are often considered favorable as they are minimized at the ground-truth predictive distribution, and hence encourage honest uncertainty estimates~\cite{gneiting_strictly_2007,brummer_measuring_2010}.
        In practice, however, the true distribution is unknown, such that we never know whether a model ever truly minimizes the score \cite{pic_proper_2025}, limiting the value of properness in UQ benchmarking.
        While the benchmarking tools \textit{URSABench}~\cite{vadera_ursabench_2022} and \textit{Neural Testbed}~\cite{osband_neural_2022} do incorporate decision-making and bandit-style evaluations respectively, these decision scenarios are artificial and simplified, limiting their transferability to the real-world.
        In contrast, our prior-weighted utility metrics are aligned with realistic downstream decisions in a principled manner.
        
    \paragraph{Connection of PSRs \texorpdfstring{\&}{&} decisions}
        
        There is a well-known connection between PSRs and decision-making---every decision problem induces a PSR \cite{dawid_theory_2014}.
        Further, in binary classification, any PSR can be expressed as a cost-weighted average over dichotomous decisions (under mild regularity conditions, see~\cite{savage_elicitation_1971,schervish_general_1989,buja_loss_2005,gneiting_strictly_2007,brummer_measuring_2010,reid_information_2011}); a result that inspires our decision-alignment framework.
        Some regression PSRs can also be expressed in terms of decisions, e.g., the continuous ranked probability score and BS can be expressed as averages over thresholded decisions \cite{gneiting_strictly_2007}, but this relationship is less general than in classification \cite{ehm_quantiles_2016}.
        These relationships give rise to the assumption that good performance on common PSRs relates to good performance in downstream decisions, but we refute this assumption through our decision-alignment criterion.

    \paragraph{Connection between calibration \texorpdfstring{\&}{&} decisions}
    
        It is unclear whether better performance on calibration metrics leads to better decision-making.
        While some studies observed that good calibration implies good decisions \cite{laves_well-calibrated_2020,zhang_role_2023}, others argue that metrics such as the ECE are flawed in terms of evaluating the usefulness of a model in downstream decisions \cite{gruber_better_2022,kleinberg_u-calibration_2023,ferrer_evaluating_2025}.
        There has been substantial recent work on more decision-centric notions of calibration which aim to provide approximate guarantees (up to some tolerance $\varepsilon$) for downstream decision utilities or restricted families of tasks \cite{hu_calibration_2024,qiao_truthfulness_2025,yang_cost-aware_2025,hegazy_scalable_2025}.
        These approaches are primarily motivated by training or post-hoc adjustment, and have limited use for benchmarking since $\varepsilon$-guarantees do not allow for consistent \emph{ranking} of two predictors---differences in downstream utility may lie within the approximation error.
        In contrast, our prior-weighted utilities are intentionally designed to provide consistent model rankings in terms of downstream utility and are therefore preferable in the context of benchmarking.
        
\section{Implicit decision beliefs of UQ metrics}\label{sect:theory}

    Standard UQ metrics are often justified axiomatically (e.g., through properness or calibration), but there is only a limited understanding of their implications for downstream utilities.
    We generalize the known connection between PSRs and decisions (see \cref{sect:related_work}) to \emph{decision-alignment}, allowing a systematic analysis of common UQ metrics in terms of decision utilities.

    \subsection{Decision-alignment of UQ metrics}

        We consider metrics $M$ and decision utilities $U_\theta$, parametrized by $\theta \in \Theta$,
        that evaluate a prediction-label pair $(\boldsymbol f,\boldsymbol y)$, where $\boldsymbol{y}=(y_1,\dots,y_n) \in \mathcal{Y}^n$ and $\boldsymbol{f}=(f_1,\dots,f_n) \in \mathcal{P}^n$, $n \geq 2$, and $\mathcal{Y}, \mathcal{P}$ depend on the task and prediction model.
        We consider probabilistic predictions, so each $f \in \mathcal{P}$ encodes a distributional report over $\mathcal{Y}$.
        For example, in probabilistic binary classification, $\mathcal{Y} = \{0,1\}$ and $\mathcal{P}=[0,1]$, that is, the prediction model maps to the probability that a given instance has label~$1$.
        We restrict ourselves to prediction spaces $\mathcal{P}$ and utilities $U_\theta$ such that for each $f \in \mathcal{P}$ a (not necessarily unique) \emph{Bayes act} exists for $U_\theta$, that is, a decision that maximizes expected utility under~$f$.
        Without loss of generality, we say that smaller values of $M$ and larger values of $U$ correspond to better performance.
        Since the labels $\boldsymbol{y}$ are typically fixed, we use the notation\footnote{
            Note that often, utility functions evaluate \emph{actions} given labels, and we evaluate \emph{predictions} given labels.
            To be precise, we define $U_{\theta,y}(f)$ as the utility of the Bayes act under $f$.
        }
        \[
            M_{\boldsymbol y}(\boldsymbol f) := M(\boldsymbol f,\boldsymbol y) \quad \text{and}\quad U_{\theta,\boldsymbol y}(\boldsymbol f) := U_\theta(\boldsymbol f,\boldsymbol y).
        \]
        
        \begin{definition}[Decision-alignment]\label{def:decision_alignment}
            A metric $M$ is \emph{decision-aligned} w.r.t.\ a decision family $\{U_\theta\}_{\theta \in \Theta}$ if there exists a prior $\pi$ s.t.\ for any fixed labels $\boldsymbol y$, there exists a strictly increasing function $h_{\boldsymbol y}:\mathrm{Im}(M_{\boldsymbol y})\to\mathbb R$ s.t.\
            \begin{equation}
                h_{\boldsymbol y}\bigl(M_{\boldsymbol y}(\boldsymbol f)\bigr)=\int_{\Theta}-U_{\theta,\boldsymbol y}(\boldsymbol f) \, \pi(\theta) \, \mathrm{d}\theta\quad \forall \boldsymbol f. \label{eq:decision_alignment}
            \end{equation}
        \end{definition}

        \begin{remark}[On the nature of $\pi$]
            In our framework, $\pi$  can be any nonnegative and nonzero measure on $\Theta$.
            It does not need to be finite and, in particular, does not need to be a probability measure.
            We only require that the right-hand side of \eqref{eq:decision_alignment} is well-defined and finite.
            To simplify notation, we write $\pi(\theta)\mathrm{d}\theta$ for integration w.r.t.\ the measure $\pi$, including absolutely continuous priors with density $\pi(\theta)$, as well as purely atomic priors such as Dirac measures~$\delta$.
            We use the term \emph{prior} to highlight that $\pi$ encodes the metric's implicit belief on the downstream utility's parameter, and one can think of the right-hand side in \eqref{eq:decision_alignment} as the (negative) expected utility under $\theta \sim \pi$.
            Thus, we introduce the notation
            \[
                EU_{\theta \sim \pi,\boldsymbol y}(\boldsymbol f) := \int_{\Theta}-U_{\theta,\boldsymbol y}(\boldsymbol f) \, \pi(\theta) \, \mathrm{d}\theta.
            \]
        \end{remark}

        \begin{remark}[On the choice of our framework]
            The reason why we defined our framework of the form \eqref{eq:decision_alignment} is two-fold:
            
            (1) \emph{Metric-analysis}:
            Expressing a metric $M$ through \eqref{eq:decision_alignment} reveals whether it implicitly anticipates some decision family $\{U_\theta\}_{\theta \in \Theta}$ under a prior $\pi$.
            This helps us decide when and whether to use the metric, and to interpret model rankings under $M$ appropriately.

            (2) \emph{Desirable metric property}:
            Decision-alignment is a necessary and sufficient condition for \emph{strict order- and tie-preservation} (see \cref{prop:preservation}), and thus a naturally desirable metric property.
            Intuitively, \textbf{a metric $\boldsymbol M$ is decision-aligned if and only if it ranks predictions exactly as expected utility under $\boldsymbol \pi$~would}.
        \end{remark}

        \begin{restatable}[Decision-alignment $\Leftrightarrow$ strict order- and tie-preservation]{proposition}{preservation}\label{prop:preservation}
        
            Fix a decision family $\{U_\theta\}_{\theta\in\Theta}$ and a prior $\pi$.
            Then, for any fixed $\boldsymbol y$, the following statements are equivalent:
           
            (I) $M$ is decision-aligned w.r.t.\ $\{U_\theta\}_{\theta\in\Theta}$ under $\pi$.
           
            (II) $M$ is strictly order-preserving and tie-preserving for $\{U_\theta\}_{\theta\in\Theta}$ under $\pi$, i.e., for all $\boldsymbol f_1,\boldsymbol f_2$,
            \begin{align*}
                M_{\boldsymbol y}(\boldsymbol f_1) < M_{\boldsymbol y}(\boldsymbol f_2)
                &\Rightarrow
                EU_{\theta \sim \pi, \boldsymbol y}(\boldsymbol f_1) < EU_{\theta \sim \pi, \boldsymbol y}(\boldsymbol f_2),\\
                M_{\boldsymbol y}(\boldsymbol f_1) = M_{\boldsymbol y}(\boldsymbol f_2)
                &\Rightarrow
                EU_{\theta \sim \pi,\boldsymbol y}(\boldsymbol f_1) = EU_{\theta \sim \pi,\boldsymbol y}(\boldsymbol f_2).
            \end{align*}
        \end{restatable}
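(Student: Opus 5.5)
We prove the two directions separately, working throughout with fixed labels $\boldsymbol y$ and fixed prior $\pi$. Write $E(\boldsymbol f) := EU_{\theta\sim\pi,\boldsymbol y}(\boldsymbol f)$. The proposition speaks of a fixed $\boldsymbol y$, so we read (I) as the existence of a strictly increasing $h_{\boldsymbol y}$ satisfying \eqref{eq:decision_alignment} for this particular $\boldsymbol y$ and this $\pi$.

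For (I)$\Rightarrow$(II), assume $h_{\boldsymbol y}(M_{\boldsymbol y}(\boldsymbol f)) = E(\boldsymbol f)$ for all $\boldsymbol f$, with $h_{\boldsymbol y}$ strictly increasing on $\mathrm{Im}(M_{\boldsymbol y})$. If $M_{\boldsymbol y}(\boldsymbol f_1)<M_{\boldsymbol y}(\boldsymbol f_2)$, strict monotonicity gives $E(\boldsymbol f_1) = h_{\boldsymbol y}(M_{\boldsymbol y}(\boldsymbol f_1)) < h_{\boldsymbol y}(M_{\boldsymbol y}(\boldsymbol f_2)) = E(\boldsymbol f_2)$. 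If the two metric values are equal, then $h_{\boldsymbol y}$ is being evaluated at the same point, so $E(\boldsymbol f_1)=E(\boldsymbol f_2)$. Both conditions in (II) therefore follow directly.

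For (II)$\Rightarrow$(I), we construct $h_{\boldsymbol y}$ explicitly. For each $m\in\mathrm{Im}(M_{\boldsymbol y})$, pick any $\boldsymbol f$ with $M_{\boldsymbol y}(\boldsymbol f)=m$ and set $h_{\boldsymbol y}(m) := E(\boldsymbol f)$. This value is finite, because the framework requires the right-hand side of \eqref{eq:decision_alignment} to be well-defined and finite.
\begin{itemize}
\item \emph{Well-definedness.} The main point to check is that this definition does not depend on which preimage $\boldsymbol f$ we choose. That is exactly what tie-preservation provides: any two preimages of the same $m$ have equal $E$.
\item \emph{Identity \eqref{eq:decision_alignment}.} By construction, $h_{\boldsymbol y}(M_{\boldsymbol y}(\boldsymbol f)) = E(\boldsymbol f)$ for every $\boldsymbol f$.
\item \emph{Strict monotonicity.} Take $m_1<m_2$ in the image with preimages $\boldsymbol f_1,\boldsymbol f_2$. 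Strict order-preservation gives $E(\boldsymbol f_1)<E(\boldsymbol f_2)$, i.e.\ $h_{\boldsymbol y}(m_1)<h_{\boldsymbol y}(m_2)$.
\end{itemize}
Hence $h_{\boldsymbol y}$ is a strictly increasing function on $\mathrm{Im}(M_{\boldsymbol y})$ satisfying \eqref{eq:decision_alignment}, which is (I).

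No deep obstacle arises. Choosing a preimage for each $m$ uses the axiom of choice in general, but this can be avoided by defining $h_{\boldsymbol y}(m)$ as the common value of $E$ on the whole fiber $M_{\boldsymbol y}^{-1}(m)$, which tie-preservation guarantees is a single value. Since $h_{\boldsymbol y}$ is only required to live on $\mathrm{Im}(M_{\boldsymbol y})$, no extension argument is needed. Finally, because the prior $\pi$ is held fixed and the same for every $\boldsymbol y$, applying the equivalence separately for each $\boldsymbol y$ recovers Definition~\ref{def:decision_alignment}, whose $\pi$ is required to be independent of $\boldsymbol y$.
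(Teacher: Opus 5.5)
Your proposal is correct and follows essentially the same argument as the paper: (I)$\Rightarrow$(II) via strict monotonicity of $h_{\boldsymbol y}$, and (II)$\Rightarrow$(I) by defining $h_{\boldsymbol y}(m)$ as the value of $EU_{\theta\sim\pi,\boldsymbol y}$ on any preimage of $m$ (well-defined by tie-preservation), with strict monotonicity coming from strict order-preservation. Your extra remarks, on avoiding choice by taking the common value on the fiber and on $\pi$ being the same for every $\boldsymbol y$, are harmless refinements that the paper does not spell out.
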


        \begin{proof}[Proof sketch]
            \textbf{(I)$\Rightarrow$(II):} Follows through strict monotonicity of $h_{\boldsymbol y}$.
            \textbf{(II)$\Rightarrow$(I):} Follows by defining $h_{\boldsymbol y}(m)$ as $EU_{\theta \sim \pi,\boldsymbol y}(\boldsymbol f)$ for any $\boldsymbol f$ with $M_{\boldsymbol y}(\boldsymbol f)=m$.
            We state the full proof in \cref{app:prop_preservation}.
        \end{proof}

        In other words, decision-aligned metrics rank models according to their expected utility in the respective downstream decision family.
        Further, any metric that is strictly order- and tie-preserving is also decision-aligned, so \cref{def:decision_alignment} includes \emph{all} metrics that fulfill these criteria.
        In \cref{app:properties}, we discuss the relationship between decision-aligned metrics and PSRs.

        To understand what kind of metrics can be decision-aligned with respect to what types of utility families, we distinguish between \emph{pointwise-separable}, \emph{coordinate-independent}, and \emph{coordinate-dependent} metrics/utilities.

        \begin{definition}[Pointwise-separability \& coordinate-independence]
            A metric/utility $S$ is \emph{pointwise-separable} if
            there exists a measurable \emph{instance function} $s$ s.t.\footnote{
                This is essentially a stricter version of \emph{decomposable} and \emph{non-decomposable} losses \cite{hu_rank-based_2023}, requiring the same instance function $s$ for each component $i$.
            }
            \begin{equation}
                S(\boldsymbol f,\boldsymbol y) = \frac 1n \sum_{i=1}^n s(f_i,y_i) \label{eq:pointwise_separability}
            \end{equation}
            Otherwise, $S$ is \emph{non-separable}.
            $S$ is \emph{coordinate-independent} if the change in $S$ through altering one prediction $f_i$ does not depend on the other coordinates,
            that is, for $a,b \in \mathcal{P}$ and $\boldsymbol r,\boldsymbol r' \in \mathcal{P}^{n-1}$
            \[
                S_{\boldsymbol y}((a,\boldsymbol r))
                <
                S_{\boldsymbol y}((b,\boldsymbol r))
                \Leftrightarrow
                S_{\boldsymbol y}((a,\boldsymbol r'))
                <
                S_{\boldsymbol y}((b,\boldsymbol r')),
            \]
            and analogously for equality and opposite strict inequality.\footnote{
                We write $(f_i,\boldsymbol r)$ for the prediction vector with value $f_i$ at the $i^\text{th}$ position and values $\boldsymbol r$ at the remaining $n-1$ positions.
            }
            Otherwise, $S$ is \emph{coordinate-dependent}.
        \end{definition}
        
        Note that \textbf{each pointwise-separable metric/utility is also coordinate-independent}:
        For a pointwise-separable $S$, changing $f_i$ only affects $s(f_i,y_i)$, which is independent of the remaining predictions.
        Next, we show that coordinate-dependent metrics (e.g., calibration or ranking metrics) can immediately be ruled out as decision-aligned with respect to pointwise-separable utility families.
        
        \begin{restatable}[Separability barrier]{lemma}{separabilitybarrier}\label{lem:separability_barrier}
            If the utility $U_\theta$ is pointwise-separable for every $\theta\in\Theta$ and $\theta\mapsto U_\theta(\boldsymbol f,\boldsymbol y)$ is integrable for all $(\boldsymbol f,\boldsymbol y)$, then any metric $M$ that is decision-aligned w.r.t.\ $\{U_\theta\}_{\theta\in\Theta}$ (for some prior $\pi$) must be coordinate-independent.
        \end{restatable}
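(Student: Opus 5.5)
The plan is to show that the expected utility $EU_{\theta\sim\pi,\boldsymbol y}$ is itself pointwise-separable, hence coordinate-independent. Coordinate-independence then transfers to $M_{\boldsymbol y}$ through the strictly increasing map $h_{\boldsymbol y}$, or equivalently through \cref{prop:preservation}.

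\textbf{Step 1 (separability of the expected utility).} By assumption, each $U_\theta$ is pointwise-separable with some instance function $u_\theta$, so
\[
EU_{\theta\sim\pi,\boldsymbol y}(\boldsymbol f)=\int_\Theta -\frac1n\sum_{i=1}^n u_\theta(f_i,y_i)\,\pi(\theta)\,\mathrm d\theta .
\]
I want to exchange the finite sum and the integral to obtain $\frac1n\sum_i e(f_i,y_i)$ with $e(f,y):=\int_\Theta -u_\theta(f,y)\,\pi(\theta)\,\mathrm d\theta$. This exchange is the only delicate point. Integrability of $\theta\mapsto U_\theta(\boldsymbol f,\boldsymbol y)$ controls only the average, not each summand separately, since individual terms could diverge and cancel. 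I would first try to deduce integrability of each $u_\theta(f,y)$ from integrability of the sum via the choice of inputs. Take $\boldsymbol f=(f,\dots,f)$ and $\boldsymbol y=(y,\dots,y)$, which is allowed because the integrability hypothesis holds for all $(\boldsymbol f,\boldsymbol y)$. Then $U_\theta=u_\theta(f,y)$, which is therefore integrable. Linearity of the integral then justifies the exchange, and measurability in $\theta$ is implicit in the hypothesis. Consequently $EU_{\theta\sim\pi,\boldsymbol y}$ is pointwise-separable, and in particular coordinate-independent: changing $f_i$ changes only the term $e(f_i,y_i)$.

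\textbf{Step 2 (transfer to $M$).} Suppose $M$ is decision-aligned, so $h_{\boldsymbol y}(M_{\boldsymbol y}(\boldsymbol f))=EU_{\theta\sim\pi,\boldsymbol y}(\boldsymbol f)$ with $h_{\boldsymbol y}$ strictly increasing. A strictly increasing function satisfies $m_1<m_2\Leftrightarrow h(m_1)<h(m_2)$ and $m_1=m_2\Leftrightarrow h(m_1)=h(m_2)$. Hence $M_{\boldsymbol y}(\boldsymbol f_1)<M_{\boldsymbol y}(\boldsymbol f_2)$ if and only if $EU(\boldsymbol f_1)<EU(\boldsymbol f_2)$, and similarly for equality and for the reverse inequality.

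Now fix $a,b$ and $\boldsymbol r,\boldsymbol r'$. We get
\[
M_{\boldsymbol y}((a,\boldsymbol r))<M_{\boldsymbol y}((b,\boldsymbol r))\iff e(a,y_i)<e(b,y_i)\iff M_{\boldsymbol y}((a,\boldsymbol r'))<M_{\boldsymbol y}((b,\boldsymbol r')).
\]
The middle step holds because the common terms from $\boldsymbol r$, respectively $\boldsymbol r'$, cancel; they are finite by Step 1. The same argument gives the cases of equality and opposite strict inequality.

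Since $\boldsymbol y$ was arbitrary, $M$ is coordinate-independent. I expect the main obstacle to be the justification of the sum–integral exchange in Step 1. The diagonal trick should handle it, and it is the reason the integrability hypothesis is stated for all $(\boldsymbol f,\boldsymbol y)$.
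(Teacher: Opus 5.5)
Your proof is correct and follows the paper's argument: you interchange the finite sum with the $\pi$-integral to show that $h_{\boldsymbol y}\circ M_{\boldsymbol y}$ is pointwise-separable, then pull coordinate-independence back through the strictly increasing $h_{\boldsymbol y}$. Your diagonal choice $\boldsymbol f=(f,\dots,f)$, $\boldsymbol y=(y,\dots,y)$ gives integrability of each $u_\theta(f,y)$, and it is a worthwhile addition: the paper justifies the interchange only by citing Fubini/Tonelli and the finiteness of the sum, which by itself does not rule out non-integrable summands that cancel.
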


        \begin{proof}[Proof sketch]
            By exchanging the integral and pointwise sum on the right-hand side of \eqref{eq:decision_alignment}, we get that $h_{\boldsymbol y}\bigl(M_{\boldsymbol y}(\boldsymbol f)\bigr)$ is pointwise-separable, which requires $M$ to be coordinate-independent.
            We state the full proof in \cref{app:separability_barrier}.
        \end{proof}

        Note that the converse of \cref{lem:separability_barrier} is not true:
        A coordinate-independent metric $M$ can be
        decision-aligned w.r.t.\ $\{U_\theta\}_{\theta \in \Theta}$ even if every $U_\theta$ is non-separable, as long as the non-separable parts cancel out in the integral (see \cref{app:converse} for an example).
        
        We now analyze concrete examples of metric-decision pairings through the lens of decision-alignment.
        We consider decision tasks that arise naturally in applications and span both pointwise-separable and non-separable utilities---binary decisions (yes/no), selective prediction (only predict when sure enough), and top-$k$ selection (select a few items from a large candidate pool).
        
    \subsection{Classification}\label{sect:bc}

        In this section, we consider probabilistic binary\footnote{
            We generalize our analysis to multiclass classification in \cref{app:multiclass}.
        } classification models $f : \mathcal{X} \to [0,1]$, mapping from the input space $\mathcal{X}$ to the probability that the true label from the output space $\mathcal{Y}=\{0, 1\}$ is 1.
        As evaluation metrics, we consider the
        negative log-likelihood (NLL),
        Brier score (BS),
        accuracy (Acc),
        expected calibration error (ECE),
        maximum calibration error (MCE),
        area under the retention curve (R-AUC),
        and error detection (E-Det).
        We state the metric definitions in \cref{app:metrics}, where we also show that the \textbf{NLL, BS, and Acc are coordinate-independent}, and the \textbf{ECE, MCE, R-AUC, and E-Det are coordinate-dependent}.
        The results of this section are summarized on the left of \cref{tab:theory_summary}.

        \subsubsection{Binary decision}\label{sect:bc_bd}

            We first consider a binary decision problem where an action in $\{0, 1\}$ (e.g., $\{\text{do not treat, treat}\}$) is taken based on the model's predicted probability $f(x) = f \in [0,1]$, $x \in \mathcal{X}$ for the positive class  (e.g., disease present).
            We assume zero cost for correct predictions and costs $C_\text{FP}, C_\text{FN} > 0$ for false positive and false negative predictions.
            Without loss of generality, we normalize the costs as $c_\text{FP} = C_\text{FP}/(C_\text{FP} + C_\text{FN})$ (so $c_\text{FN} = 1-c_\text{FP}$).
            For ease of notation, we write $c = c_\text{FP}$ in the following.
            With this setup, the Bayes act is to select action~$1$ whenever $f > c$ \cite{elkan_foundations_2001}.
            This allows us to write the pointwise utility of a prediction $f$ with true label $y \in \mathcal{Y}$ for a fixed cost $c \in (0,1)$ as
            \[
                u_c(f,y) = - y (1-c) \boldsymbol{1}_{f \leq c} - (1-y) c \boldsymbol{1}_{f > c}.
            \]
            The dataset-level utility $U_c(\boldsymbol f,\boldsymbol y)$ is defined as the average pointwise utility.
            As a direct consequence, \textbf{$\boldsymbol{U_c}$ is pointwise-separable}.
            By \cref{lem:separability_barrier}, only the NLL, BS, and Acc can be decision-aligned for $\{U_c\}_{c \in (0,1)}$.
            In fact, for the specific downstream utility $u_c$, it is already known that PSRs $l$ admit an integral representation (under mild conditions, see~\citep{savage_elicitation_1971,schervish_general_1989,buja_loss_2005,gneiting_strictly_2007,brummer_measuring_2010,reid_information_2011}),
            \begin{equation}
                l(f, y) =  \int_{0}^{1} - u_c(f,y) \, \pi(c) \, \mathrm{d}c.
                \label{eq:psr_bd_integral}
            \end{equation}
            The expressions for $\pi(c)$ for the NLL, BS, and Acc\footnote{
                Note that the pointwise Acc is not a PSR, but can still be expressed through \eqref{eq:psr_bd_integral}, rather by coincidence than by principle.
            } have been stated before (see, e.g.,~\cite{buja_loss_2005,gneiting_strictly_2007}).

            \begin{proposition}[\citet{buja_loss_2005}]\label{prop:buja}
                The NLL, BS, and Acc are decision-aligned w.r.t.\ $\{U_c\}_{c \in (0,1)}$ with
                \[
                    \pi_\text{NLL}(c) = 1/(c(1-c)),\ \pi_\text{BS} = 2,\ \pi_\text{Acc}(c) = 2\delta_{0.5}(c)
                \]
                and $h_{\boldsymbol y}=\mathrm{id}$ for all $\boldsymbol y$.
            \end{proposition}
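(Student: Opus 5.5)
Since $U_c$ is pointwise-separable and all three metrics are pointwise averages of an instance function, it suffices to verify the pointwise identity \eqref{eq:psr_bd_integral},
\[
l(f,y)=\int_0^1 -u_c(f,y)\,\pi(c)\,\mathrm{d}c \quad\text{for } y\in\{0,1\},\ f\in[0,1].
\]
Averaging over $i$ and exchanging the finite sum with the integral then gives $M_{\boldsymbol y}(\boldsymbol f)=EU_{c\sim\pi,\boldsymbol y}(\boldsymbol f)$, i.e.\ \eqref{eq:decision_alignment} with $h_{\boldsymbol y}=\mathrm{id}$. We first make the pointwise integrand explicit: for $y=1$, $-u_c(f,1)=(1-c)\boldsymbol 1_{c\ge f}$, and for $y=0$, $-u_c(f,0)=c\,\boldsymbol 1_{c<f}$. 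The right-hand side is therefore $\int_f^1(1-c)\pi(c)\,\mathrm{d}c$ when $y=1$ and $\int_0^f c\,\pi(c)\,\mathrm{d}c$ when $y=0$.

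The three priors are then checked by direct integration.
\begin{itemize}
\item[\textbf{BS.}] With $\pi=2$ the integrals are $\int_f^1 2(1-c)\,\mathrm{d}c=(1-f)^2$ and $\int_0^f 2c\,\mathrm{d}c=f^2$. Together these give $(y-f)^2$, the binary Brier score.
\item[\textbf{NLL.}] With $\pi=1/(c(1-c))$ the integrals are $\int_f^1 \mathrm{d}c/c=-\log f$ and $\int_0^f \mathrm{d}c/(1-c)=-\log(1-f)$, which is the log loss.
\item[\textbf{Acc.}] With $\pi=2\delta_{0.5}$ we get $2\cdot\tfrac12\boldsymbol 1_{f\le 1/2}$ for $y=1$ and $2\cdot\tfrac12\boldsymbol 1_{f>1/2}$ for $y=0$. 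This is exactly the 0–1 error under the threshold convention $\hat y=\boldsymbol 1_{f>1/2}$. Since the metric in the paper is $1-\text{Acc}$, this shift is absorbed into ``lower is better'', or equivalently $h_{\boldsymbol y}$ is the identity on the error form of Acc.
\end{itemize}

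Two points need care. First, the NLL prior is not integrable on $(0,1)$, so we must check that the right-hand side of \eqref{eq:decision_alignment} is finite, as the remark on the nature of $\pi$ requires. It is finite whenever $f\in(0,1)$, and it equals $+\infty$ precisely when the NLL itself is infinite ($f=0$ with $y=1$, or $f=1$ with $y=0$), so the two sides agree. Second, the convention at ties ($f=c$, and $f=1/2$ for Acc) must match the metric's definition in \cref{app:metrics}. For the absolutely continuous priors this is measure-zero and irrelevant. The Dirac prior is the only place where the tie-breaking rule $f\le c$ versus $f>c$ matters, so the main obstacle is simply aligning the Acc definition's thresholding convention with the Bayes-act convention stated in the setup.
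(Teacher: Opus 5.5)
Your proof is correct and follows the paper's route. The paper gives no separate argument: it relies on the known pointwise representation \eqref{eq:psr_bd_integral} from the cited literature, lifted to the dataset level by averaging, and your three integrations are exactly the verification behind that citation. Two small bookkeeping fixes are needed. First, with the appendix definition $\text{Acc}=\frac1n\sum_i l_{\text{0-1}}(f_i,y_i)-1$ (negative accuracy, not $1-\text{Acc}$), the integral equals $\text{Acc}+1$, so the correct link is $h_{\boldsymbol y}(m)=m+1$; this is still strictly increasing, so alignment holds, but an additive offset is not ``absorbed into lower is better''. Second, for the NLL you should restrict to $f\in(0,1)$ rather than matching $+\infty$ on both sides, since the definition requires a real-valued $h_{\boldsymbol y}$ and a finite right-hand side, and the paper itself treats an infinite right-hand side as disqualifying in \cref{prop:reg_selpred_all_prop}.
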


            
            While this decision-theoretic interpretation suggests that good performance on PSRs and Acc implies good decision-making, we would like to emphasize that \textbf{the implicit priors of common PSRs are pathological}.\footnote{
                We characterize \emph{pathological priors} as either (a) degenerate priors, (b) uninformative priors, or (c) priors that place unbounded mass on extreme regions where UQ is least needed.
            }
            In many binary decisions, the cost of a false negative (e.g., missed disease) is much higher than the cost of a false positive (e.g., unnecessary treatment \citep{elkan_foundations_2001}).
            $\pi_\text{NLL}(c)$ places unbounded mass on extreme regions, rewarding performance where one error type is essentially free and a trivial policy suffices.
            The BS weighs each $c \in (0,1)$ equally, corresponding to an \emph{uninformative prior}.
            $\pi_\text{Acc}(c)$ is degenerate, expressing the strong assumption that the costs are perfectly symmetric.
            We visualize $\pi_\text{NLL}, \pi_\text{BS}$, and $\pi_\text{Acc}$ on the left of \cref{fig:path_priors}.

            Since the ECE, MCE, R-AUC, and E-Det are coordinate-dependent, we can directly follow from \cref{lem:separability_barrier} that they are not decision-aligned w.r.t.\ $\{U_c\}_{c \in (0,1)}$.

            \begin{corollary}\label{cor:bc_bd_all_cor}
                The ECE, MCE, R-AUC, and E-Det are not decision-aligned w.r.t.\ $\{U_c\}_{c \in (0,1)}$.
            \end{corollary}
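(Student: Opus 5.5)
The corollary will follow from \cref{lem:separability_barrier} by contraposition. Two facts are needed: the utility family $\{U_c\}_{c\in(0,1)}$ satisfies the lemma's hypotheses, and each of ECE, MCE, R-AUC and E-Det is coordinate-dependent. Once both hold, any of these metrics being decision-aligned under some prior $\pi$ would force it to be coordinate-independent, which is a contradiction.

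\textbf{Step 1 (hypotheses of the lemma).}
Pointwise-separability of $U_c$ holds by construction, since $U_c$ is the average of $u_c(f_i,y_i)$ with the same instance function for every $i$.

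For integrability, I must say what "integrable" means, because $\pi$ is allowed to be an infinite measure. I read it as "integrable wherever the right-hand side of \eqref{eq:decision_alignment} is finite". This is exactly the well-definedness requirement in the remark on the nature of $\pi$. With that reading, the exchange of sum and integral in the lemma's proof only needs each term $\int -u_c(f_i,y_i)\,\pi(c)\,\mathrm{d}c$ to be finite.

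Each such term is finite. The terms $-u_c(f_i,y_i)$ are nonnegative, and their sum has finite integral, so each term does too (Tonelli). Hence the lemma applies to every candidate prior $\pi$ for which alignment could even be stated. I will also note in passing that $|u_c|\le 1$ is measurable in $c$, being an indicator of a half-line, so no measurability issue arises.

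\textbf{Step 2 (coordinate-dependence).}
I will cite the appendix on metric definitions, where the paper shows that ECE, MCE, R-AUC and E-Det are coordinate-dependent. If I reproduce this, I give one explicit witness per metric, with $n=2$ (padding with duplicates for general $n$), fixed $\boldsymbol y$, and two contexts $\boldsymbol r,\boldsymbol r'$ that reverse the order of $(a,\boldsymbol r)$ versus $(b,\boldsymbol r)$.

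For ECE and MCE, I would put $f_1$ in the same bin as another prediction, so that errors can cancel within the bin. For example, take $y=(1,0)$ and $r=0.5$ in the same bin as $a=0.5$. Then $a=b$-type comparisons flip depending on whether the other point shares the bin, since calibration error averages within a bin and ignores per-instance error.

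For R-AUC and E-Det, which depend only on the ranking of confidences, I would change $\boldsymbol r$ so that the relative rank of $f_1$ among the others changes. This alters whether lowering confidence on an error helps or leaves the score unchanged.

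\textbf{Main obstacle.}
The main obstacle is Step 2: coordinate-independence requires preserving strict order and ties across contexts, so each witness must be checked concretely against the precise binning and tie conventions in the metric definitions. Because the paper already records these facts in its metric appendix, I expect to invoke them directly. The corollary then reduces to the one-line contrapositive of \cref{lem:separability_barrier}.
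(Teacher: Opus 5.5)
Your proposal is correct and follows the paper's route: the corollary is exactly the contrapositive of \cref{lem:separability_barrier} combined with the coordinate-dependence of ECE, MCE, R-AUC and E-Det established in \cref{prop:coord_dependence}. Your check of the lemma's integrability hypothesis, using nonnegativity of $-u_c$ together with the finiteness that decision-alignment already demands, is a careful detail the paper leaves implicit. The ECE witness you sketch is underspecified as written (no $b$ or $\boldsymbol r'$ is given, and $a=r$), but it is not needed since you cite the appendix; if you want a self-contained $n=2$ example, $\boldsymbol y=(1,0)$, $a=0.52$, $b=0.58$, $\boldsymbol r=(0.55)$ and $\boldsymbol r'=(0.05)$ give $\text{ECE}_{\boldsymbol y}((a,\boldsymbol r))<\text{ECE}_{\boldsymbol y}((b,\boldsymbol r))$ but $\text{ECE}_{\boldsymbol y}((a,\boldsymbol r'))>\text{ECE}_{\boldsymbol y}((b,\boldsymbol r'))$, and likewise for the MCE.
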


        \subsubsection{Top-\texorpdfstring{$\boldsymbol k$}{k} selection}\label{sect:bc_topk}

            Top-$k$ selection for $k \in [n-1] := \{1,\dots,n-1\}$ arises, for instance, in the context of recommender systems. 
            As downstream utility, we use the established \emph{precision@$k$} metric (in line with, e.g.,~\cite{tamm_quality_2021}),
            \[
              U_k(\boldsymbol{f},\boldsymbol{y}) = \frac{1}{k} \sum_{i \in \mathcal{S}_k(\boldsymbol{f})} y_i,
            \]
            where $\mathcal{S}_k(\boldsymbol{f})$ is the index set of the selected instances according to the predicted scores.\footnote{
              For $\mathcal{S}_k(\boldsymbol{f})$ to be uniquely defined, one can choose a tie-breaking rule or restrict to $\boldsymbol{f}$ with unique entries.
            }
            This can be interpreted as receiving equal reward (normalized to $\tfrac{1}{k}$) for each correctly selected positive instance, and no reward for selecting negative instances.
            The Bayes act is to select the $k$ instances with the highest scores \cite{wydmuch_no-regret_2018}.
            \textbf{$\boldsymbol{U_k}$ is non-separable} (whether an instance contributes to $U_k$ depends on its rank relative to all other predictions, and therefore $U_k$ cannot be written as a sum of independent terms), so we cannot leverage \cref{lem:separability_barrier}.
            Nevertheless, we are able to prove that none of the metrics we consider are decision-aligned w.r.t.\ $\{U_k\}_{k\in [n-1]}$.

            \begin{restatable}{proposition}{bctopkallprop}\label{prop:bc_topk_all_prop}
                The NLL, BS, Acc, ECE, MCE, R-AUC, and E-Det are not decision-aligned w.r.t.~$\{U_k\}_{k\in [n-1]}$.
            \end{restatable}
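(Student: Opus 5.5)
The plan is to use \cref{prop:preservation}, which says decision-alignment under a prior $\pi$ is equivalent to strict order- and tie-preservation. It then suffices, for each metric $M$, to exhibit labels $\boldsymbol y$ and two predictions $\boldsymbol f_1,\boldsymbol f_2$ with $M_{\boldsymbol y}(\boldsymbol f_1)<M_{\boldsymbol y}(\boldsymbol f_2)$ but $EU_{k\sim\pi,\boldsymbol y}(\boldsymbol f_1)=EU_{k\sim\pi,\boldsymbol y}(\boldsymbol f_2)$ for \emph{every} prior $\pi$ on $[n-1]$. This violates strict order-preservation uniformly in $\pi$, so no prior can make $M$ decision-aligned.

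The key structural observation is that $U_k(\boldsymbol f,\boldsymbol y)$ depends on $\boldsymbol f$ only through the selected set $\mathcal S_k(\boldsymbol f)$. That set depends only on the ordering of the entries of $\boldsymbol f$. I restrict to predictions with pairwise distinct entries, so no tie-breaking convention enters. Hence whenever $\boldsymbol f_1$ and $\boldsymbol f_2$ induce the same strict ordering of the indices, $U_{k,\boldsymbol y}(\boldsymbol f_1)=U_{k,\boldsymbol y}(\boldsymbol f_2)$ for every $k$. Integrating against any $\pi$ (finite sum over $[n-1]$) gives equal $EU$. Thus the proof reduces to showing that each metric is \emph{not invariant under strictly increasing reparametrizations of the scores}. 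Concretely, I will take $\boldsymbol f_2=\phi(\boldsymbol f_1)$ for a strictly increasing map $\phi:[0,1]\to[0,1]$, for fixed $\boldsymbol y$ containing both labels.

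Next I go through the metrics one by one, choosing $\phi$ so that the metric value changes.

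\textbf{NLL and BS.} Take $\boldsymbol y=(1,0,\dots)$ and $\boldsymbol f_1$ with distinct entries. Let $\boldsymbol f_2$ push every entry toward the correct label in an order-preserving way. One choice is to move all scores toward $1/2$ by a contraction; another is to sharpen. Either strictly changes both scores.

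\textbf{Acc.} Apply a shift $\phi$ that moves one entry across the threshold $0.5$ while preserving order. This changes a hard prediction from wrong to right.

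\textbf{ECE and MCE.} Compare a perfectly calibrated configuration within bins with an order-preserving distortion of it. The distortion creates a positive gap between confidence and frequency in some bin.

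\textbf{R-AUC and E-Det.} These depend on the correctness of thresholded predictions and on confidences $\max(f,1-f)$. An order-preserving shift across $0.5$ changes which instances count as errors, and hence the retention curve and the error-detection score.

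If convenient, I will use the minimal case $n=2$ and extend to general $n$ by padding with extra instances that keep the same relative order.

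The main obstacle is the last group. I need to check against the exact definitions in \cref{app:metrics}, including binning conventions for ECE/MCE and how R-AUC and E-Det handle ties and confidences, that a concrete order-preserving pair really gives a \emph{strict} difference in the metric. If a given shift leaves R-AUC unchanged by coincidence, I will instead vary the number of entries on each side of $0.5$. That changes the error count and therefore the endpoint of the retention curve, which should force a strict difference. I expect this case-by-case verification, rather than the invariance argument itself, to be the laborious part.
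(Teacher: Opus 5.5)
Your route is sound, but it differs from the paper's. The paper fixes $\boldsymbol y=\boldsymbol 0$. There $U_{k,\boldsymbol 0}\equiv 0$ for every $k$, so injectivity of $h_{\boldsymbol 0}$ forces $M_{\boldsymbol 0}$ to be constant. It then suffices to exhibit \emph{any} two predictions with different metric values: constant vectors $\tfrac14$ versus $\tfrac34$ for every metric except E-Det, and a swapped-confidence pair for E-Det. No ordering constraint or tie-breaking issue arises. You instead use that $U_k$ sees $\boldsymbol f$ only through the induced ranking. Hence an aligned metric would have to be invariant under strictly increasing recalibrations of the scores, for every $\boldsymbol y$. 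This costs you an ordering constraint in every construction, but it buys two things. It exposes the structural reason for the failure: all listed metrics depend on calibrated magnitudes, while precision@$k$ is purely ordinal. It also shows the failure is not an artifact of the degenerate label vector $\boldsymbol 0$, since, apart from the one exception below, your pairs can be taken with both classes present.

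That exception is E-Det, where your heuristic (``shifting across $0.5$ changes the errors, hence the score'') breaks down in exactly the minimal case you propose. Take $n=2$, $\boldsymbol y=(1,0)$ and $f_1>f_2$:
\begin{itemize}
\item If both entries are $\le 0.5$, the only error is instance $1$, and it has the larger entropy.
\item If both exceed $0.5$, the only error is instance $2$, and again it has the larger entropy.
\item If $f_1>0.5\ge f_2$, there is no error and the AUROC is undefined.
\end{itemize}
So E-Det equals $-1$ whenever defined, and symmetrically $0$ when $f_1<f_2$. It is a function of the ordering alone, and no order-preserving pair separates it.

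Here you must drop your restriction to label vectors containing both classes. With $\boldsymbol y=(0,0)$ and $0<a<b<\tfrac12$, the pair $(a,1-b)$ versus $(b,1-a)$ preserves the order and gives E-Det values $-1$ and $0$. This is the paper's construction, and there the ordering constraint is vacuous anyway since $U_k\equiv 0$. For $n\ge 3$ you can keep both labels by appending confidently correct positives whose scores exceed all others; this changes the second value to $-(n-2)/(n-1)$.

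Two further checks are needed. Every E-Det comparison must contain both errors and non-errors, or the AUROC is undefined. When you pad to general $n$ for the non-separable metrics, verify that the strict difference survives; for example, for MCE the padded bin's gap must not attain the maximum.
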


            \begin{proof}[Proof sketch]
                The value of $U_k$ is only influenced by the subset $\mathcal{S}_k(\boldsymbol{f})$, and all metrics we consider evaluate on the entire dataset.
                One can find predictions $\boldsymbol{f}_1,\boldsymbol{f}_2$ that yield different metric values but leave the expected utility unchanged.
                We state the full proof in \cref{app:bc_topk_all_prop}.
            \end{proof}

            \begin{remark}[Case $k=n$]
                When defining $U_k$, we excluded the trivial case of $k=n$ (select all data points).
                Note, however, that Acc actually \emph{equals} $U_k$ for $k=n$.
                In other words, Acc is decision-aligned w.r.t.\ $\{U_k\}_{k\in [n]}$ under the (pathological) prior $\pi(k) = \delta_n(k)$.
            \end{remark}

    \subsection{Regression}

        In this section, we consider probabilistic univariate\footnote{
            We generalize our analysis to multivariate regression in \cref{app:multivariate}.
        }  regression models.
        We assume a Gaussian predictive distribution\footnote{
            We assume a Gaussian distribution for simplicity, but any distribution with a well-defined mean and variance is admissible.
        } and consider models of the form $f : \mathcal{X} \to \mathbb{R}\times\mathbb{R}_{> 0}$, mapping from the input space $\mathcal{X}$ to the predictive mean $\mu \in \mathbb{R}$ and predictive variance $\sigma^2 \in \mathbb{R}_{> 0}$, so $f=(\mu,\sigma^2)$.
        The regression target is the true label $y \in \mathcal{Y}:=\mathbb{R}$.
        As evaluation metrics, we consider the NLL, mean squared error (MSE), ECE, MCE, R-AUC, and E-Det.
        We state the metric definitions in \cref{app:metrics}, where we also show that the \textbf{NLL and MSE are coordinate-independent}, and the \textbf{ECE, MCE, R-AUC, and E-Det are coordinate-dependent}.
        We summarize the results on the right of \cref{tab:theory_summary}.

        \subsubsection{Selective prediction}\label{sect:reg_selpred}

            In selective prediction, the decision-maker chooses an action in $\{\text{predict}, \text{abstain}\}$, where selecting \emph{predict} corresponds
            to issuing a point prediction $\hat y$, while \emph{abstain} corresponds to deferring the decision
            (e.g., to a human expert).
            We assume squared loss for point predictions and a fixed abstention cost $\lambda \in \mathbb{R}_{>0}$.
            With this setup, the Bayes act
            is to issue the predictive mean $\mu$ whenever the predictive variance $\sigma^2$ is at most the abstention cost.
            This allows us to write the pointwise utility of a prediction $f = (\mu,\sigma^2)$
            with true outcome $y \in \mathbb{R}$ for a fixed abstention cost $\lambda$ as
            \[
                u_\lambda(f, y)
                =
                -(\mu - y)^2 \, \boldsymbol{1}_{\sigma^2 \le \lambda}
                -
                \lambda \, \boldsymbol{1}_{\sigma^2 > \lambda}.
            \]
            Once again, the dataset-level utility $U_\lambda(\boldsymbol f,\boldsymbol y)$
            is defined as the average pointwise utility, so $\boldsymbol{U_\lambda}$ \textbf{is pointwise-separable}, and therefore the ECE, MCE, R-AUC, and E-Det are not decision-aligned w.r.t.\ $\{U_\lambda\}_{\lambda \in \mathbb{R}_{>0}}$ by \cref{lem:separability_barrier}.
            We find that this also holds for the NLL and MSE.

            \begin{restatable}{proposition}{regselpredallprop}\label{prop:reg_selpred_all_prop}
                The NLL, MSE, ECE, MCE, R-AUC, and E-Det are not decision-aligned w.r.t.~$\{U_\lambda\}_{\lambda \in \mathbb{R}_{>0}}$.
            \end{restatable}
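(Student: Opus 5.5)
The four coordinate-dependent metrics (ECE, MCE, R-AUC, E-Det) are handled directly by \cref{lem:separability_barrier}. Each $U_\lambda$ is pointwise-separable, and integrability holds for any prior under which the right-hand side of \eqref{eq:decision_alignment} is finite, which \cref{def:decision_alignment} already demands. So only the NLL and MSE require a separate argument. For these I would use \cref{prop:preservation} and, for every prior $\pi$, exhibit a violation of strict order- or tie-preservation. Since both metrics are pointwise-separable, it is enough to work with a single coordinate: vary $f_1$, keep $f_2,\dots,f_n$ and all labels fixed, and compare the pointwise quantities.

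\textbf{MSE.} The MSE depends only on the means $\mu_i$, whereas $EU_{\lambda\sim\pi}$ also depends on the variances. Take $f_1=(\mu,\sigma_a^2)$ and $f_1'=(\mu,\sigma_b^2)$ with the same mean, so the two MSE values are equal. Tie-preservation then requires
\[
g(\sigma^2) := \int_0^{\sigma^2} \lambda\,\pi(\lambda)\,\mathrm{d}\lambda + (\mu-y)^2\,\pi\bigl([\sigma^2,\infty)\bigr)
\]
to be constant in $\sigma^2$ for every choice of $(\mu-y)^2$. Setting $(\mu-y)^2=0$ gives $\int_0^{s}\lambda\,\pi(\lambda)\,\mathrm{d}\lambda$ constant in $s>0$, which forces $\pi=0$ on $(0,\infty)$. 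This contradicts $\pi$ being nonzero. The only escape is mass placed at $\lambda=\infty$, which lies outside $\Theta$ and matches the $\delta_\infty$ entry in \cref{tab:theory_summary}.

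\textbf{NLL.} The pointwise NLL is $\tfrac12\log\sigma^2+\tfrac{(\mu-y)^2}{2\sigma^2}$ plus a constant, while the pointwise expected loss is $g(\sigma^2)$ as above. I would look for two predictions with equal NLL but different expected loss. Take $\mu=y$ and compare $\sigma^2=a$ with $\mu'-y=d$, $\sigma'^2=b$, where $b<a$ is chosen so that $\tfrac12\log b+\tfrac{d^2}{2b}=\tfrac12\log a$; this is possible whenever $d^2=b\log(a/b)$. Tie-preservation then requires
\[
\int_0^{a}\lambda\,\pi(\lambda)\,\mathrm{d}\lambda=\int_0^{b}\lambda\,\pi(\lambda)\,\mathrm{d}\lambda+b\log(a/b)\,\pi\bigl([b,\infty)\bigr)
\]
for all $0<b<a$. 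Writing $\Lambda(s)=\int_0^s\lambda\,\pi(\lambda)\,\mathrm{d}\lambda$ and $T(s)=\pi([s,\infty))$, this becomes $\Lambda(a)-\Lambda(b)=b\log(a/b)\,T(b)$ for all $b<a$.

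Letting $a\to\infty$, the left-hand side stays bounded if $\Lambda$ converges, while the right-hand side grows like $\log a$ unless $T(b)=0$. Without convergence one still compares growth rates. Fixing $b$, the function $\Lambda(a)$ must equal $\Lambda(b)+bT(b)\log(a/b)$, so $\mathrm{d}\Lambda/\mathrm{d}a=bT(b)/a$, which gives $a\,\pi(a)=bT(b)/a$ for every $b<a$. Hence $bT(b)$ is a constant $\varepsilon$ and $\pi(a)=\varepsilon a^{-2}$. But then $T(b)=\int_b^\infty \varepsilon\lambda^{-2}\,\mathrm{d}\lambda=\varepsilon/b$, which is consistent, and $\Lambda(s)=\int_0^s\varepsilon\lambda^{-1}\,\mathrm{d}\lambda$ diverges at $0$.

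So the candidate prior is forced to be $\varepsilon\lambda^{-2}$, and the expected utility is infinite, since small $\lambda$ yields a divergent integral. This matches the table's caveat "for $\lambda,\sigma^2>\varepsilon$". I would conclude that no prior on all of $\mathbb R_{>0}$ works: the forced prior makes the right-hand side of \eqref{eq:decision_alignment} infinite, contradicting the finiteness requirement in the remark on $\pi$. Alternatively, one can exhibit a tie violation with $\sigma^2$ small.

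\textbf{Main obstacle.} The hardest step is this NLL functional-equation argument. It must be made rigorous for general measures $\pi$ rather than densities; I would differentiate in the sense of distributions, or compare at several values of $a$. It also requires care at the divergence near $0$, which is what turns "aligned under $\varepsilon\lambda^{-2}$ on a truncated domain" into "not aligned on $\mathbb R_{>0}$".
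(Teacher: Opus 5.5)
Your proof is correct. The four coordinate-dependent metrics and the MSE are handled as in the paper: the separability barrier for the former, and equal means with variances straddling a region of positive $\pi$-mass for the latter.

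Your NLL argument takes a genuinely different route. The paper fixes $\sigma_i^2=s$ and varies only $x=(\mu_i-y_i)^2$. Both the NLL and the expected loss are then affine in $x$, with slopes $1/(2ns)$ and $S(s)/n$, so $h_{\boldsymbol y}$ must be affine with slope $2sS(s)$ on a half-line. Strict monotonicity makes this slope positive, and overlapping half-lines for different $s$ force $sS(s)$ to be constant, i.e.\ $\pi\propto\lambda^{-2}$.

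You never reason about $h_{\boldsymbol y}$ at all. You build explicit NLL ties that trade the $\log\sigma^2$ term against the squared-error term, and you use only the tie half of \cref{prop:preservation}. This gives the functional equation $\Lambda(a)-\Lambda(b)=b\log(a/b)\,T(b)$ directly in terms of the prior. Your version shows slightly more, since even tie-preservation alone fails, and it avoids the image and half-line bookkeeping. The paper's version has the advantage of exhibiting the would-be link function as affine, which connects directly to the slope $2\varepsilon$ in \cref{prop:reg_selpred_nll_prop}.

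Two small points.
\begin{itemize}
\item The measure-theoretic obstacle you flag is not really there. The identity holds for all $a>b$, so comparing the coefficient of $\log a$ for two values $b_1<b_2$ gives $b_1T(b_1)=b_2T(b_2)$ without any differentiation. Then $T(b)=\varepsilon/b$ already determines $\pi$ as a measure. Divergence follows because each dyadic block $[s2^{-k-1},s2^{-k})$ contributes at least $\varepsilon/2$ to $\Lambda(s)$.
\item You should state explicitly that $\varepsilon=0$ gives $\pi((0,\infty))=0$, which is excluded because $\pi$ is nonzero. In your argument this replaces the paper's use of strict monotonicity to get $m_s>0$.
\end{itemize}
The ``let $a\to\infty$'' aside can be dropped.
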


            \begin{proof}[Proof sketch]
                The MSE only evaluates the point prediction $\mu$ and is therefore insensitive to $\sigma^2$, so improvements in $U_\lambda$ through $\sigma^2$ cannot be measured.
                The NLL \emph{does} exhibit the same form as $U_\lambda$ after monotone transformation under a Pareto prior $\pi$, but the expected utility diverges because $\pi$ goes to infinity for $\lambda \to 0$ too quickly, violating our requirements for a valid prior.
                For the coordinate-dependent metrics, we can apply \cref{lem:separability_barrier}.
                We state the full proof in \cref{app:reg_selpred_all_prop}.
            \end{proof}

            Notably, the proof of \cref{prop:reg_selpred_all_prop} reveals that under minor additional assumptions, the NLL \emph{is} decision-aligned w.r.t.\ the $U_\lambda$-family under a Pareto prior.

            \begin{restatable}{proposition}{regselprednllprop}\label{prop:reg_selpred_nll_prop}
                Fix any $\varepsilon > 0$ and define $\mathcal{P}_\varepsilon := \{f=(\mu,\sigma^2)\mid \sigma^2 > \varepsilon\}$. Consider only predictions $\boldsymbol f \in \mathcal{P}_\varepsilon^n$.
                Then, the NLL is decision-aligned w.r.t.\ $\{U_\lambda\}_{\lambda \in \mathbb{R}_{>\varepsilon}}$ under $\pi(\lambda)=\varepsilon\lambda^{-2}$ and
                \[
                    h_{\boldsymbol y}(m)=2\varepsilon m-\varepsilon\log(2\pi \varepsilon)\quad \forall\boldsymbol y.
                \]
            \end{restatable}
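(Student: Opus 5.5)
The plan is to verify \cref{def:decision_alignment} directly. I will compute the prior-weighted expected utility $EU_{\lambda\sim\pi,\boldsymbol y}(\boldsymbol f)$ in closed form and show that it is an affine, strictly increasing function of the Gaussian NLL. This function will turn out not to depend on $\boldsymbol y$.

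First I reduce to a single instance. Since $U_\lambda$ is pointwise-separable, $-U_{\lambda,\boldsymbol y}(\boldsymbol f)=\frac1n\sum_i -u_\lambda(f_i,y_i)$. Each $-u_\lambda(f_i,y_i)\ge 0$ and $\pi\ge0$, so Tonelli's theorem lets me swap the finite sum and the integral over $\lambda\in(\varepsilon,\infty)$. It therefore suffices to compute, for a single $f=(\mu,\sigma^2)$ with $\sigma^2>\varepsilon$ and a label $y$,
\[
\int_\varepsilon^\infty \Bigl[(\mu-y)^2\boldsymbol 1_{\sigma^2\le\lambda}+\lambda\boldsymbol 1_{\sigma^2>\lambda}\Bigr]\varepsilon\lambda^{-2}\,\mathrm d\lambda .
\]
Because $\sigma^2>\varepsilon$, the indicator splits the domain at $\sigma^2$, which lies inside $(\varepsilon,\infty)$. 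This restriction is exactly what makes the integral finite: without it, the lower piece would behave like $\int_0 \lambda^{-1}\,\mathrm d\lambda$ and diverge, which is the failure noted in \cref{prop:reg_selpred_all_prop}.

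Next I evaluate the two pieces. The piece over $[\sigma^2,\infty)$ gives $\varepsilon(\mu-y)^2/\sigma^2$. The piece over $(\varepsilon,\sigma^2)$ gives $\int_\varepsilon^{\sigma^2}\varepsilon\lambda^{-1}\,\mathrm d\lambda=\varepsilon\log(\sigma^2/\varepsilon)$. Both are finite, so the right-hand side of \eqref{eq:decision_alignment} is well-defined. I then compare with the pointwise Gaussian NLL, $\tfrac12\log(2\pi\sigma^2)+(\mu-y)^2/(2\sigma^2)$. Since
\[
2\varepsilon\cdot\mathrm{nll}(f,y)=\varepsilon\log(2\pi)+\varepsilon\log\sigma^2+\varepsilon(\mu-y)^2/\sigma^2 ,
\]
the pointwise expected utility equals $2\varepsilon\,\mathrm{nll}(f,y)-\varepsilon\log(2\pi\varepsilon)$. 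Averaging over $i$ preserves this affine relation, so $EU_{\lambda\sim\pi,\boldsymbol y}(\boldsymbol f)=h_{\boldsymbol y}(\mathrm{NLL}_{\boldsymbol y}(\boldsymbol f))$ with the stated $h_{\boldsymbol y}$. This $h_{\boldsymbol y}$ is strictly increasing because $\varepsilon>0$.

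The main obstacle is only bookkeeping rather than any deep step. I need to confirm that the Bayes act (abstain iff $\sigma^2>\lambda$) matches $u_\lambda$ as written, including the tie $\sigma^2=\lambda$, which has $\pi$-measure zero anyway. I also need to confirm that restricting both $\lambda>\varepsilon$ and $\sigma^2>\varepsilon$ is what guarantees finiteness for every admissible $\boldsymbol f$.
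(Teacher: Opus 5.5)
Your proposal is correct and matches the paper's proof step for step: you split the Pareto-weighted integral at $\sigma_i^2$, obtain $\varepsilon\log(\sigma_i^2/\varepsilon)+\varepsilon(\mu_i-y_i)^2/\sigma_i^2$ per instance, average, and read off the affine link $2\varepsilon\,\mathrm{NLL}_{\boldsymbol y}(\boldsymbol f)-\varepsilon\log(2\pi\varepsilon)$. Your explicit Tonelli/nonnegativity justification for swapping sum and integral, and your finiteness check, are small rigor additions that the paper leaves implicit.
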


            \begin{proof}[Proof sketch]
                When restricting to $\lambda,\sigma_i^2>\varepsilon$ for all $i$, we avoid the divergence issues from \cref{prop:reg_selpred_all_prop} and decision-alignment holds.
                We state the full proof in \cref{app:reg_selpred_nll_prop}.
            \end{proof}

            Note that this $\text{Pareto}(1,\varepsilon)$ prior is pathological, as it places more weight on $\lambda$ the smaller $\lambda$ is. In other words, it anticipates almost negligible abstention costs.
            We visualize the Pareto prior on the right of \cref{fig:path_priors}, alongside the Dirac prior for the case $\lambda = \infty$ discussed below.
    
            \begin{remark}[Case $\lambda=\infty$]
                When defining $U_\lambda$, we excluded the trivial case of $\lambda=\infty$ (always predict).
                Note, however, that the MSE actually \emph{equals} $U_\lambda$ for $\lambda=\infty$.
                In other words, the MSE is decision-aligned w.r.t.\ $\{U_\lambda\}_{\lambda\in \mathbb{R}_{>0}\cup \{\infty\}}$ under the (pathological) prior $\pi(\lambda) = \delta_\infty(\lambda)$.
            \end{remark}

    \subsubsection{Top-\texorpdfstring{$\boldsymbol k$}{k} selection}

        Given predictions $\boldsymbol f = (\boldsymbol \mu,\boldsymbol{\sigma^2})$ and labels $\boldsymbol y$, we consider a mean--variance trade-off of the form
        \[
            U_{\phi}(\boldsymbol f, \boldsymbol y)
            =
            \frac{1}{k}\sum_{i \in \mathcal{S}_{\phi}(\boldsymbol f)}
            (y_i - \gamma  \sigma_i^2),
        \]
        where $\phi=(k,\gamma) \in \Phi := [n-1] \times \mathbb{R}_{>0}$, and $\gamma$ controls the degree of risk aversion.
        The Bayes act is to select the top-$k$ instances according to the score $\mu_i - \gamma \sigma_i^2$.
        This yields a selection policy that trades off between predicted benefit and uncertainty.
        As in classification,
        we can
        show that none of the metrics we consider is decision-aligned w.r.t.\ $\{U_{\phi}\}_{\phi \in \Phi}$.

        \begin{restatable}{proposition}{regtopkallprop}\label{prop:reg_topk_all_prop}
            The NLL, MSE, ECE, MCE, R-AUC, and E-Det are not decision-aligned w.r.t.~$\{U_{\phi}\}_{\phi \in \Phi}$.
        \end{restatable}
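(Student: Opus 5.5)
We argue via \cref{prop:preservation}: it suffices to show that for every admissible prior $\pi$ on $\Phi$ there are labels $\boldsymbol y$ and predictions $\boldsymbol f_1,\boldsymbol f_2$ for which (II) fails. We treat the coordinate-independent metrics (NLL, MSE) and the coordinate-dependent ones (ECE, MCE, R-AUC, E-Det) separately. \cref{lem:separability_barrier} does not apply, because $U_\phi$ is non-separable.

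For every metric we follow the strategy of \cref{prop:bc_topk_all_prop}. We fix the selected sets for all $\phi$ and vary predictions in a way the metric sees but the expected utility does not. Concretely, take $n\ge 2$, labels $\boldsymbol y$, and a base prediction $\boldsymbol f$ with all $\sigma_i^2=\sigma^2$ equal. Let the means be strictly ordered with large gaps, so that for every $\gamma$ the ranking by $\mu_i-\gamma\sigma_i^2$ equals the ranking by $\mu_i$. Now perturb the mean of one instance only, say the instance ranked last, by a small amount that keeps all rankings unchanged. The selected sets $\mathcal S_\phi$ are then identical for all $\phi=(k,\gamma)$, and so are the $\sigma_i^2$. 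Hence $U_{\phi,\boldsymbol y}$ is unchanged for every $\phi$, and $EU_{\theta\sim\pi,\boldsymbol y}$ is equal for $\boldsymbol f_1,\boldsymbol f_2$ whatever $\pi$ is.

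For NLL and MSE the perturbation changes $(\mu_n-y_n)^2$ whenever $\mu_n\neq y_n$ before or after the move. So $M_{\boldsymbol y}(\boldsymbol f_1)\neq M_{\boldsymbol y}(\boldsymbol f_2)$, which violates tie-preservation. The NLL must be handled jointly because it also depends on $\sigma^2$, but since only $\mu_n$ moves, its change is exactly $(\Delta\text{ squared error})/(2\sigma^2)\neq 0$.

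For the coordinate-dependent metrics the same perturbation of a single prediction should change the metric. ECE and MCE depend on $|\mu_i-y_i|$ relative to $\sigma_i$ through binned coverage or calibration quantities. R-AUC and E-Det depend on the errors $(\mu_i-y_i)^2$ and on how those errors rank against $\sigma_i^2$. Because of discreteness (binning, ranking), a small perturbation may leave these metrics invariant. We therefore choose the perturbation large enough to change the relevant quantity, for example moving $\mu_n$ so that $y_n$ crosses a coverage quantile of its predictive interval, or so that the error rank of instance $n$ changes. At the same time $\mu_n$ is kept below all other means, and $\sigma_n^2$ stays at the common value.

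The main obstacle is this last step. The construction must change each coordinate-dependent metric, whose exact definitions are in \cref{app:metrics}, while preserving every top-$k$ set for all $k\in[n-1]$ and all $\gamma>0$ at once. Keeping the perturbed instance at the bottom of the ranking with equal variances achieves the invariance, because index $n$ is never selected for $k\le n-1$. The remaining work is to check each metric's definition and confirm that changing the label-prediction discrepancy of an unselected instance can be made to change its value, using $n$ large enough if the binning requires it. If it cannot, we instead perturb $y$-independent components such as the unselected $\sigma_n^2$.
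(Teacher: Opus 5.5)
\textbf{Verdict: genuine gap.} Your invariance step is sound and uses the same mechanism as the paper. With all $\sigma_i^2$ equal and strictly ordered means, $\mathcal S_\phi$ is the top-$k$ set by $\mu_i$ for every $\gamma>0$. Since $U_\phi$ involves only $y_i-\gamma\sigma_i^2$ on that set, any move of the bottom mean that keeps it last leaves the expected utility unchanged under every prior. The NLL and MSE cases then go through. Start from $y_n=\mu_n$ so that the squared error really changes. Also, $M_{\boldsymbol y}(\boldsymbol f_1)\neq M_{\boldsymbol y}(\boldsymbol f_2)$ with equal expected utility violates strict order-preservation, not tie-preservation, though (II) fails either way.

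The gap is the coordinate-dependent half, which you leave as ``remaining work''. For E-Det your base construction cannot succeed. E-Det scores instances by Gaussian predictive entropy, which depends on $\sigma_i^2$ alone. With all variances equal every score ties, so the AUROC of the scores against the $e_i$ is $1/2$ whenever both error classes occur (and undefined otherwise), however you move $\mu_n$.

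The other three cases do work, but they have to be checked.
\begin{itemize}
\item \emph{ECE and MCE.} Take $y_i=\mu_i$ for all $i$, so every empirical miscoverage is $0$. Then push $\mu_n$ down until $y_n$ leaves all of its prediction intervals. This lowers every $|\hat\alpha_j-\alpha_j|$ by $1/n$, so both ECE and MCE change.
\item \emph{R-AUC.} The retention order is now pure tie-breaking. With any $\boldsymbol\mu$-independent rule, the $r=0$ term is the full-set MSE and every retained-set MSE is monotone in $(\mu_n-y_n)^2$, so the value changes strictly.
\end{itemize}

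Your fallback of perturbing $\sigma_n^2$ is the right repair for E-Det, but it needs a direction and a choice of labels.
\begin{itemize}
\item \emph{Direction.} Only an increase $\sigma_n^2\to\sigma'^2>\sigma^2$ is safe, because then $\mu_n-\gamma\sigma'^2<\mu_n-\gamma\sigma^2<\mu_i-\gamma\sigma^2$ keeps instance $n$ strictly last for every $\gamma$. A decrease lifts instance $n$ into $\mathcal S_\phi$ once $\gamma>(\mu_{n-1}-\mu_n)/(\sigma^2-\sigma'^2)$, and $\gamma$ ranges over all of $\mathbb R_{>0}$.
\item \emph{Labels.} Take $y_i=\mu_i$ for $i<n$, and $y_n=0$ with $\mu_n<-\tau\varepsilon$ and below all other means. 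Then $e_n=1$ and $e_i=0$ otherwise. Raising $\sigma_n^2$ moves the AUROC from $1/2$ (ties counted as $1/2$) to $1$, so E-Det goes from $-1/2$ to $-1$ at unchanged expected utility.
\end{itemize}

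For comparison, the paper avoids the tie problem by fixing $\boldsymbol y=\boldsymbol 0$ with strictly increasing variances. It shifts all means from $0$ to $1$, a global shift that also preserves every $\mathcal S_\phi$, and then verifies each metric explicitly. Do not copy its separate E-Det construction as is. Its swapped means do change $\mathcal S_\phi$ for $\gamma<b/(v_n-v_1)$, contrary to its remark that the utilities match. There the expected-utility inequality still runs against the E-Det order, so its contradiction survives, but your repaired construction avoids the issue entirely.
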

        
        \begin{proof}[Proof sketch]
            As in classification, the value of the utility is only influenced by the top-$k$ subset, and all metrics we consider evaluate on the entire dataset.
            We state the full proof in \cref{app:reg_topk_all_prop}.
        \end{proof}

\begin{figure*}[t]
  \begin{center}
    \centerline{\includegraphics[width=.8\linewidth]{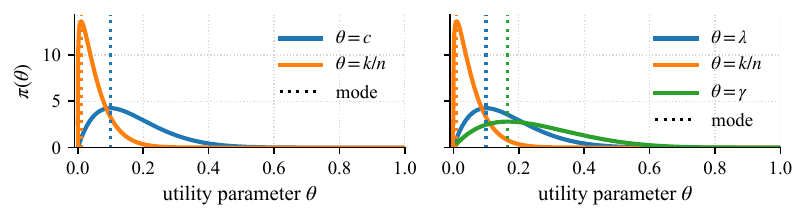}}
    \caption{
      Prior choices for our classification \emph{(left)} and regression \emph{(right)} prior-weighted utility metrics.
    }
    \label{fig:priors}
  \end{center}
  \vskip -0.2in
\end{figure*}

\section{Prior-weighted utility metrics}\label{sect:method}

    The key takeaway from \cref{sect:theory} is that common UQ evaluation metrics either encode implausible decision beliefs or do not reflect decision utility at all.
    In other words, \textbf{current UQ benchmarks silently optimize for the wrong downstream world}.
    In pursuit of a better UQ evaluation protocol, we propose \emph{prior-weighted utility (PWU)} metrics which are, by definition, \emph{decision-aligned} and encode \emph{plausible prior beliefs} about downstream decisions.\footnote{
        \cref{app:pwu_card} states a structured PWU documentation, similar to \emph{model cards} \cite{mitchell_model_2019}, alongside a broader impact statement.
    }

    \subsection{PWU construction}\label{sect:pwus}

        To define a PWU metric, we select a decision family $\{U_\theta\}_{\theta \in \Theta}$ and a plausible prior $\pi$ such that $EU_{\theta \sim \pi,\boldsymbol y}(\boldsymbol f)<\infty$ for all $(\boldsymbol f,\boldsymbol y)$.
        To align the PWU metric $M_{\pi}$ with $\{U_\theta\}_{\theta \in \Theta}$, we apply \cref{eq:decision_alignment} directly with $h_{\boldsymbol y}=\mathrm{id}$ for all $\boldsymbol y$:
        \begin{equation}
            M_{\pi}(\boldsymbol f,\boldsymbol y) := \int_{\Theta} - U_{\theta}(\boldsymbol f,\boldsymbol y) \, \pi(\theta) \, \mathrm{d}\theta.
            \label{eq:pwu_def}
        \end{equation}
        We propose a PWU metric for each decision problem analyzed in \cref{sect:theory}, obtaining
        \begin{alignat*}{2}
            M_{\pi_c}(\boldsymbol f,\boldsymbol y) &= \int_{0}^{1} - U_c(\boldsymbol f,\boldsymbol y) \, \pi_c(c) \, \mathrm{d}c, \qquad &
            M_{\pi_\lambda}(\boldsymbol f,\boldsymbol y) &= \int_{0}^{\infty} -U_\lambda(\boldsymbol f,\boldsymbol y) \, \pi_\lambda(\lambda) \, \mathrm{d}\lambda,\\
            M_{\pi_k}(\boldsymbol f,\boldsymbol y) &= \sum_{k=1}^{n-1} -U_k(\boldsymbol f,\boldsymbol y) \, \pi_k(k), \qquad &
            M_{\pi_\phi}(\boldsymbol f,\boldsymbol y) &= \int_{0}^{\infty}  \sum_{k=1}^{n-1} -U_{\phi}(\boldsymbol f,\boldsymbol y) \, \pi_{\phi}(\phi) \, \mathrm{d}\phi.
        \end{alignat*}
        In \cref{fig:priors}, we visualize the priors we choose for our experiments.
        In \cref{app:priors}, we justify them explicitly and provide some general guidance on prior elicitation.
        Note that constructing a PWU metric is a general method that can be applied to a broader set of decisions (and priors) than just those considered in this paper.
        Since each metric corresponds to a single decision family, it is advisable to use a \emph{variety} of PWU metrics in UQ benchmarking to test models across different decision contexts.

\begin{figure*}[ht]
  \begin{center}
    \centerline{\includegraphics[width=0.9\linewidth]{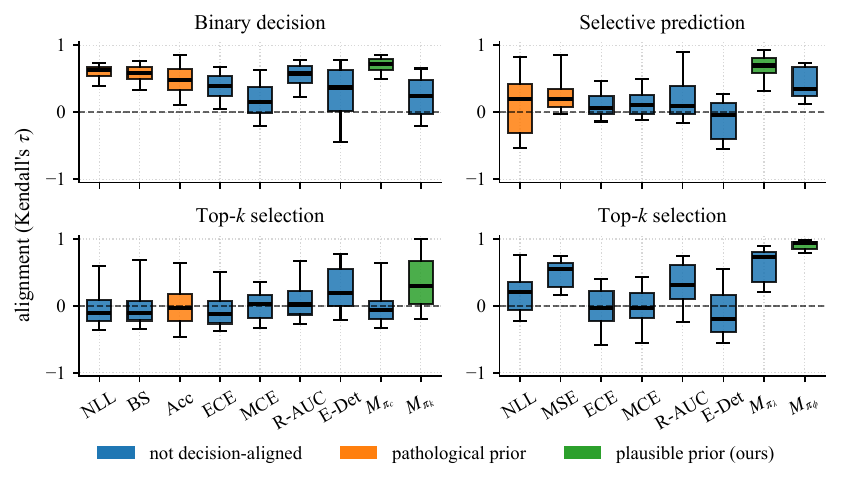}}
    \caption{
      Metric--utility alignment in classification \emph{(left)} and regression \emph{(right)}, averaged over our five datasets.
      The coloring corresponds to our theoretical findings from \cref{sect:theory}.
      The PWU metrics align best with respect to their corresponding utility families.
    }
    \label{fig:benchmark}
  \end{center}
  \vskip -0.2in
\end{figure*}

    \subsection{Metric properties}

        The purpose of our PWU metrics is to evaluate UQ in terms of downstream decision utility.
        Thus, we intentionally defined them to be decision-aligned with their corresponding utility family.

        \begin{proposition}[PWU metrics are decision-aligned]\label{prop:pwu_decisionaligned}
            Pick any decision family $\{U_\theta\}_{\theta \in \Theta}$ and a prior $\pi$ over $\Theta$.
            Then, the resulting PWU metric $M_{\pi}$ is decision-aligned w.r.t.\ $\{U_\theta\}_{\theta \in \Theta}$ under $\pi$.
        \end{proposition}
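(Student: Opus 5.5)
The plan is to verify \cref{def:decision_alignment} directly, choosing the monotone transformation to be the identity. By the PWU construction in \cref{sect:pwus}, the prior $\pi$ is chosen so that $EU_{\theta \sim \pi,\boldsymbol y}(\boldsymbol f)$ is well-defined and finite for all $(\boldsymbol f,\boldsymbol y)$. This is exactly the admissibility requirement on $\pi$ in the remark on the nature of $\pi$. I will state it explicitly as the standing assumption under which $M_\pi$ is defined at all; otherwise \eqref{eq:pwu_def} is not a real-valued metric. Then $\pi$ is a valid nonnegative, nonzero measure for which the right-hand side of \eqref{eq:decision_alignment} is finite.

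Next, I fix arbitrary labels $\boldsymbol y$ and set $h_{\boldsymbol y} := \mathrm{id}$ on $\mathrm{Im}(M_{\pi,\boldsymbol y})\subseteq\mathbb R$. This map is strictly increasing, and it is real-valued because $M_\pi$ is finite. By \eqref{eq:pwu_def}, for every $\boldsymbol f$,
\[
h_{\boldsymbol y}\bigl(M_{\pi,\boldsymbol y}(\boldsymbol f)\bigr) = M_\pi(\boldsymbol f,\boldsymbol y) = \int_\Theta -U_{\theta,\boldsymbol y}(\boldsymbol f)\,\pi(\theta)\,\mathrm d\theta = EU_{\theta\sim\pi,\boldsymbol y}(\boldsymbol f).
\]
This is precisely \eqref{eq:decision_alignment}. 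The same prior $\pi$ works for all $\boldsymbol y$, as the definition requires, since $\pi$ does not depend on the labels.

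Optionally, I would add the equivalent conclusion via \cref{prop:preservation}: because $M_{\pi,\boldsymbol y}$ coincides with $EU_{\theta\sim\pi,\boldsymbol y}$, it is trivially strictly order- and tie-preserving.

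The only delicate point is not the algebra but the well-definedness and finiteness of the integral, together with existence of Bayes acts so that each $U_{\theta,\boldsymbol y}(\boldsymbol f)$ is defined. I handle this by appealing to the standing assumptions of the framework: Bayes acts exist by the restriction on $\mathcal P$ and $U_\theta$, and finiteness of $EU$ is part of the PWU construction. If the statement is read as allowing an arbitrary prior, I would note that finiteness of $EU_{\theta\sim\pi,\boldsymbol y}$ must be assumed, exactly as in \cref{sect:pwus}.
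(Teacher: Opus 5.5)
Your proof is correct and matches the paper's own argument, which likewise just compares \eqref{eq:pwu_def} with \cref{def:decision_alignment} using $h_{\boldsymbol y}=\mathrm{id}$ for all $\boldsymbol y$. You also state explicitly that $EU_{\theta\sim\pi,\boldsymbol y}$ must be finite, as stipulated in the PWU construction of \cref{sect:pwus}; this is a sensible clarification that the paper's statement leaves implicit.
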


        \begin{proof}
            Follows by comparing the PWU definition \eqref{eq:pwu_def} to \cref{def:decision_alignment} with $h_{\boldsymbol y} = \mathrm{id}$ for all $\boldsymbol{y}$.
        \end{proof}

        Recall that decision-alignment is equivalent to strict order- and tie-preservation.
        Hence, PWU metrics offer a principled and decision-theoretic value for UQ benchmarking.
        Furthermore, using a PWU metric does not sacrifice properness---all PWU metrics are proper, and even strictly proper if the Bayes act for each $U_\theta$ is unique for all $\boldsymbol f$.  

        \begin{proposition}[PWU metrics are proper]\label{prop:pwu_proper}
            Pick any decision family $\{U_\theta\}_{\theta \in \Theta}$ and a corresponding prior $\pi$, and assume $\int_\Theta \mathbb E_{\boldsymbol Y\sim \boldsymbol p}\left[|U_{\theta,\boldsymbol Y}(\boldsymbol f)|\right] \, \pi(\theta) \, \mathrm d\theta < \infty$ for all $(\boldsymbol f,\boldsymbol y)$.
            Then, the resulting PWU metric $M_{\pi}$ is a PSR, and it is strictly proper if the Bayes acts for each $U_\theta$ are unique under all $\boldsymbol f$.
        \end{proposition}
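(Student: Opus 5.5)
The plan is to show that each $-U_\theta$ is itself proper, and then integrate this inequality against $\pi$ after an exchange of integrals. Fix $\theta$ and a data-generating distribution $\boldsymbol p$ over $\mathcal Y^n$, and view $\boldsymbol p$ also as a report in $\mathcal P^n$ (the natural identification, e.g.\ the componentwise marginals, which is all the relevant utilities depend on). By the footnote convention, $U_{\theta,\boldsymbol y}(\boldsymbol f)=U_\theta(a^*_\theta(\boldsymbol f),\boldsymbol y)$, where $a^*_\theta(\boldsymbol f)$ is a Bayes act under $\boldsymbol f$, i.e.\ it maximizes $\mathbb E_{\boldsymbol Y\sim\boldsymbol f}[U_\theta(a,\boldsymbol Y)]$ over actions $a$. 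Applying the Bayes-act property with $\boldsymbol f=\boldsymbol p$ gives, for every report $\boldsymbol f$,
\[
\mathbb E_{\boldsymbol Y\sim\boldsymbol p}\bigl[U_{\theta,\boldsymbol Y}(\boldsymbol p)\bigr]=\max_a \mathbb E_{\boldsymbol Y\sim\boldsymbol p}\bigl[U_\theta(a,\boldsymbol Y)\bigr]\ \ge\ \mathbb E_{\boldsymbol Y\sim\boldsymbol p}\bigl[U_\theta(a^*_\theta(\boldsymbol f),\boldsymbol Y)\bigr]=\mathbb E_{\boldsymbol Y\sim\boldsymbol p}\bigl[U_{\theta,\boldsymbol Y}(\boldsymbol f)\bigr].
\]
This is the classical fact that every decision problem induces a proper scoring rule \cite{dawid_theory_2014}.

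Next, multiply this inequality by $-1$ and integrate against $\pi$. The integrability hypothesis $\int_\Theta \mathbb E_{\boldsymbol p}|U_{\theta,\boldsymbol Y}(\boldsymbol f)|\,\pi(\theta)\,\mathrm d\theta<\infty$ is exactly what Fubini--Tonelli needs, since $\pi$ may be $\sigma$-finite and non-normalized. Interchanging the integrals yields
\[
\mathbb E_{\boldsymbol p}\bigl[M_\pi(\boldsymbol f,\boldsymbol Y)\bigr]=\int_\Theta -\mathbb E_{\boldsymbol p}\bigl[U_{\theta,\boldsymbol Y}(\boldsymbol f)\bigr]\,\pi(\theta)\,\mathrm d\theta\ \ge\ \mathbb E_{\boldsymbol p}\bigl[M_\pi(\boldsymbol p,\boldsymbol Y)\bigr],
\]
which is properness. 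For this step I need measurability of $(\theta,\boldsymbol y)\mapsto U_{\theta,\boldsymbol y}(\boldsymbol f)$, which I assume implicitly as in the definition of $M_\pi$; I also need a measurable selection of Bayes acts, which I fix once and for all.

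For strict properness, take $\boldsymbol f\neq\boldsymbol p$. Uniqueness of the Bayes act under $\boldsymbol p$ means $a^*_\theta(\boldsymbol f)$, if different from $a^*_\theta(\boldsymbol p)$, yields strictly smaller expected utility. The main obstacle is turning this pointwise strictness into strictness of the integral. This requires two things: that the set of $\theta$ with $a^*_\theta(\boldsymbol f)\neq a^*_\theta(\boldsymbol p)$ has positive $\pi$-measure, and that the gap is strictly positive there. Otherwise a nonnegative function integrating to zero gives only weak inequality.

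I would first read the uniqueness hypothesis as saying that distinct reports induce distinct Bayes acts for a $\pi$-non-null set of $\theta$, i.e.\ that the family separates reports. I would then argue that the strictly positive gap on a non-null set gives a strictly positive integral. If this reading is too strong, I would fall back to checking it for the concrete families. For example, for $U_c$ with $\pi$ having positive density, any $c$ strictly between $f_i$ and $p_i$ gives different acts, and this set has positive Lebesgue measure. I would state the result under this separation condition.
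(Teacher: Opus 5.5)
Your properness argument is the paper's argument. The paper applies \cref{prop:da_properness} with $h_{\boldsymbol y}=\mathrm{id}$ (so $a=1$, $b=0$), and that proof is exactly your two steps: each $-U_{\theta,\boldsymbol y}$ is proper by the Bayes-act inequality \cite{dawid_theory_2014}, and integrating against $\pi$ with Fubini--Tonelli under the stated integrability preserves the inequality. Your extra care about measurability, the measurable selection $a^*_\theta$, and identifying $\boldsymbol p$ with a report is warranted; the paper skips these points.

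On strictness you diverge from the paper, and you are right to. The paper's proof claims that uniqueness of the Bayes acts makes each $-U_{\theta,\boldsymbol y}$ a strict PSR, after which ``all inequalities become strict''. The second half is fine: a gap that is strictly positive for every $\theta$ integrates to something positive against a nonzero $\pi$. The first half is false. Uniqueness under $\boldsymbol p$ only makes the gap positive where $a^*_\theta(\boldsymbol f)\neq a^*_\theta(\boldsymbol p)$, and nothing forces distinct reports to induce distinct acts.

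Here is a direct counterexample to the clause as stated. Take one $\theta$, actions $a\in\{0,1\}^n$, and $U(a,\boldsymbol y)=-\tfrac1n\sum_i a_i$. The Bayes act $a=\boldsymbol 0$ is unique under every report, yet $M_\pi\equiv 0$ is not strictly proper. A single $U_c$ on reports avoiding $c$ fails in the same way.

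The hypothesis is not necessary either. For the paper's own families it fails at the thresholds ($f_i=c$, $\sigma_i^2=\lambda$, tied top-$k$ scores). Even so, $M_{\pi_c}$ with the $\text{Beta}(2,10)$ prior is strictly proper by exactly your argument using $c$ strictly between $f_i$ and $p_i$. Meanwhile $M_{\pi_k}$ is never strictly proper, since it depends on $\boldsymbol f$ only through its ranking.

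Your separation condition is the right one. Stated precisely: for every $\boldsymbol f\neq\boldsymbol p$, the set of $\theta$ on which $a^*_\theta(\boldsymbol f)$ is not a Bayes act under $\boldsymbol p$ has positive $\pi$-measure. Given properness, this is both necessary and sufficient, because the nonnegative gap integrates to zero exactly when it vanishes $\pi$-a.e. Present it as a correction of the statement, not as a reading of the uniqueness hypothesis.
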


        \begin{proof}
            $M_{\pi}$ is decision-aligned by \cref{prop:pwu_decisionaligned} and, by comparing \eqref{eq:pwu_def} to \cref{def:decision_alignment}, we have $h_{\boldsymbol y} = \mathrm{id}$ for all $\boldsymbol{y}$.
            Therefore, we can apply \cref{prop:da_properness} with $a=1,b=0$ and conclude that $M_{\pi}$ is proper (and strictly proper if the Bayes acts for each $U_\theta$ are unique under all $\boldsymbol f$).
        \end{proof}

        Our PWU metrics do not come without limitations (see also \cref{app:pwu_card}).
        No single PWU can cover all downstream objectives, so we recommend using a \emph{variety} of PWUs in UQ evaluation.
        Furthermore, PWU priors may be misspecified.
        However, in \cref{app:sens_analysis} we show empirically that \textbf{PWU metrics retain strong positive utility-alignment even under severe prior perturbation}, in stark contrast to the brittleness of conventional metrics under their pathological implicit priors.
        These results highlight PWUs' suitability for general-purpose UQ benchmarking.
        
\section{Experiments}\label{sect:experiments}

    
    \subsection{Controlled experiments on benchmark datasets}\label{sect:benchmarks}

        We implement ten binary classification and ten univariate regression models\footnote{
            We present results for the multiclass and multivariate settings in \cref{app:multi_exp}.
        } and evaluate these models on five datasets each.
        The experiments aim to measure how well the \emph{model ranking} of each metric $M$ aligns with the performance on downstream utilities.
        We use all metrics analyzed in \cref{sect:theory} in addition to our PWU metrics, and compare their rankings to the four decision utilities $U_\theta$, $\theta \in \{c,k,\lambda,\phi\}$ from \cref{sect:theory}.
        We then assess how well the model rankings of the metrics align with the model rankings of the utilities using \emph{Kendall's~$\tau$}.
        We repeat this procedure on 100 differently sampled test sets to quantify the results' variability.
        In \cref{fig:benchmark}, we report our results as boxplots\footnote{
            The boxplots show the median, 25th and 75th percentiles via boxes, and 5th, 95th percentiles via whiskers.
        } and state the exact numbers in Tables~\ref{tab:benchmark_bc_bd}, \ref{tab:benchmark_bc_topk}, \ref{tab:benchmark_reg_selpred}, \ref{tab:benchmark_reg_topk}.
        We detail our chosen models, datasets, experimental setup, and general reproducibility in \cref{app:experiments}.
        Overall, even though some conventional metrics are decision-aligned, we see that their pathological priors negatively impact their utility-alignment.
        Crucially, in top-$k$ selection and selective prediction, most of the common metrics exhibit little to no utility alignment, rendering them largely uninformative for downstream decision-making.
        In contrast, our PWU metrics align best with their corresponding utility family.

\begin{table}[t]
    \caption{
        Metric--utility alignment (Kendall's $\tau$ of model rankings) in the electricity market case study.
        We mark the highest median in \textbf{bold}.
        The bottom row states the 5th and 95th percentiles.
        The PWU metrics have the strongest and most stable positive bidding utility alignment.
    }
    \vspace{0.7em}
    \label{tab:bidding}
    \centering
                \begin{tabular}{cccccccc}
                    \toprule
                     NLL & MSE & ECE & MCE & R-AUC & E-Det & $M_{\pi_c}$ & $M_{\pi_k}$\\
                    \midrule
                    -0.16 & 0.09 & -0.24 & -0.24 & -0.07 & 0.07 & \textbf{0.16} & \textbf{0.16} \\
                    \scriptsize{{[}-0.47, 0.25{]}} & \scriptsize{{[}-0.47, 0.64{]}} & \scriptsize{{[}-0.51, 0.12{]}} & \scriptsize{{[}-0.56, 0.16{]}} & \scriptsize{{[}-0.42, 0.38{]}} & \scriptsize{{[}-0.38, 0.56{]}} & \scriptsize{{[}-0.16, 0.38{]}} & \scriptsize{{[}-0.11, 0.42{]}} \\
                    \bottomrule
                \end{tabular}
\end{table}
        
    \subsection{Applied case studies}\label{sect:realistic}

        While the above experiments consider the same utilities as our PWUs, we now deviate from this setup and implement three applied case studies.
        In our first case study, we consider a wind farm operator who wants to sell electricity in the \emph{day-ahead market}, where bids must be placed one day in advance.
        If the actual wind power production deviates from the bid, they risk a penalty for the resulting shortfall or surplus, as, in both cases, the electricity grid’s supply-demand balance needs to be restored.
        We follow the setup from \citet{bruninx_day-ahead_2025}, who derive an optimal bidding policy that resembles a selective prediction problem: bid only when the wind power forecast is sufficiently certain.
        The utility (payoff) is market-dependent for every bid/abstention and unknown in advance, deviating significantly from our controlled setup.
        We explain the case study in more detail in \cref{app:casestudies}
        and show the results in \cref{tab:bidding} and \cref{fig:bidding}.
        In this real-life setup, \emph{none} of the conventional UQ metrics have a stable positive bidding utility alignment, again showcasing how disconnected generic metrics are from real-world tasks.
        In contrast, our PWU metrics exhibit the strongest utility alignment.

        In our remaining two case studies (see \cref{app:casestudies}), we consider the tasks of credit approval and peer-to-peer lending, again using real-world economic payoffs as utility functions.
        We observe once more that our PWU metrics reliably align with the decision utility---despite the fact that some decision parameters differ strongly from our chosen PWU priors (see \cref{fig:cumsum_thresholds_ks}).
        This demonstrates that PWU metrics remain robust under prior misspecification, a finding further supported by our systematic sensitivity analysis (see \cref{app:sens_analysis}).

\section{Conclusion}

    We introduced decision-alignment as a criterion that connects uncertainty evaluation metrics to downstream decision utility and used it to diagnose that many standard UQ metrics are either misaligned with common decision problems or encode pathological implicit priors.
    Motivated by this insight, we proposed PWU metrics, a principled class of decision-aligned PSRs that enable utility-relevant benchmarking.
    Across benchmark datasets and real-world case studies, PWU metrics consistently produce model rankings that better reflect realized decision utility than conventional UQ metrics.
    Our results suggest that the dominant practice of evaluating UQ with generic metrics can be systematically misleading when the goal is to support downstream decisions.
    Instead, we claim that meaningful progress in probabilistic ML requires evaluating models with a variety of PWU metrics,\footnote{
        We provide our concrete recommendations for future UQ benchmarking that result from this paper in \cref{app:recomm}.
    }
    so that UQ benchmarks bring methods to light that enable reliable decision-making.


\begin{ack}
    AS was supported by the DAAD program Konrad Zuse Schools of Excellence in Artificial Intelligence, sponsored by the Federal Ministry of Research, Technology and Space.
    VF was supported by the Branco Weiss Fellowship.
\end{ack}

{\small
\bibliographystyle{unsrtnat}
\bibliography{Benchmarking}
}


\newpage
\appendix
\crefalias{section}{appendix}
\crefalias{subsection}{appendix}

\numberwithin{figure}{section}
\numberwithin{table}{section}

\numberwithin{definition}{section}
\numberwithin{theorem}{section}
\numberwithin{proposition}{section}
\numberwithin{lemma}{section}
\numberwithin{corollary}{section}
\numberwithin{remark}{section}
\numberwithin{assumption}{section}

\section{Scope of the paper}\label{app:intro}

    This appendix clarifies the scope of our work, which we term \emph{first-order UQ evaluation}.
    Throughout this paper, a probabilistic prediction refers to a single distribution over the label space; we call such an object a \emph{first-order probabilistic prediction}.
    In probabilistic binary classification, this is a single $f \in [0,1]$ encoding $\Pr(Y=1 \mid X)$; in regression, it is a single distribution over $\mathcal{Y}$ such as a Gaussian $\mathcal{N}(\mu, \sigma^2)$.
    This is the object almost universally produced and evaluated in probabilistic ML benchmarks~\cite{ovadia_can_2019,nado_uncertainty_2022,basora_benchmark_2025}, and the object for which all metrics we analyze (NLL, BS, MSE, Acc, ECE, MCE, R-AUC, E-Det) are defined.

    A separate strand of the literature studies \emph{second-order} probabilistic predictions: objects that take the form of a distribution over distributions.
    There are several reasons we restrict our scope to first-order evaluation.
    First, \textbf{our work is positioned as a response to the current mainstream UQ evaluation literature} (see \cref{sect:intro,sect:related_work}), which consistently uses metrics defined on first-order probabilistic predictions.
    Second, there is no broadly accepted evaluation protocol for second-order UQ to begin with~\cite{bengs_pitfalls_2022,bengs_second-order_2023}.
    Many studies even evaluate the marginal predictive (see, e.g.,~\cite{manchingal_random-set_2023,manchigal_epistemic_2025}), which is again first-order UQ and our analysis readily applies.
    Third, even if future research shifts more strongly toward second-order probabilistic predictions---a development we would welcome---we expect first-order evaluation to remain important for two reasons:
    \emph{interpretability}---a single predictive distribution is substantially easier to communicate, visualize, and reason about than a distribution over distributions; and
    \emph{decision-theoretic use}---Bayes-optimal actions under expected utility maximization are defined with respect to a single predictive distribution.
    Even models that maintain second-order beliefs internally are typically marginalized to a first-order posterior predictive distribution at decision time, so first-order quality remains the immediate driver of realized decision utility in most downstream pipelines.

    Nevertheless, extending decision-aligned evaluation to second-order predictions is an interesting and important direction which we leave for future work.

\section{Omitted related work}\label{app:related_work}

    \citet{ehm_quantiles_2016} study the evaluation and ranking of \emph{point forecasts} through consistent scoring functions for functionals (e.g., quantiles).
    Similar to our decision-alignment, they consider \emph{consistency} towards the functional of interest.
    They prove an order sensitivity result that resembles our \cref{prop:preservation}, however, they order point forecasts and associated scores, while we order scores and utilities of distributional reports.
    In addition, they show only one direction (consistency implies order sensitivity) and consider only inequalities, whereas we show that decision-alignment is \emph{equivalent} to strict order and tie preservation.
    Further, \citet{ehm_quantiles_2016} also discuss the decision-theoretic interpretation of binary decisions for PSRs and propose to use \emph{Murphy diagrams} to evaluate and rank forecasts when the downstream task is unknown.
    As in our approach, they do not commit to a single decision parameter $\theta$.
    Murphy diagrams plot the score value over varying $\theta$, whereas we set plausible prior $\pi(\theta)$ and report the corresponding (scalar) integral value.
    In addition, we do not restrict ourselves to an inherent downstream task like binary decisions in binary classification, but offer several decision-specific metrics for several tasks.

    \citet{bradley_summary_2011} also introduce a general framework tailored to ensemble weather forecasts, which is used to interpret known metrics.
    In their setup, probabilistic forecasts $f$ for a continuous variable $Y$ are mapped to probability forecasts for threshold events of the form $Y \le y$, and forecast quality is studied as a function $Q(y)$ of the threshold.
    Within this framework, several standard verification scores (e.g., the continuous ranked probability skill score) are shown to correspond to weighted averages of the forecast quality function over thresholds.
    Our decision-alignment framework is more general, allowing metric interpretation with respect to any given (integrable) utility family and is theoretically motivated through strict order and tie preservation in terms of expected downstream utility.
    
    \citet{shahroudi_aligning_2025} propose to learn a proxy evaluation function for a known downstream task, motivated by the fact that evaluating the downstream utility can be costly.
    While their framework is termed \emph{evaluation alignment}, similarly to our \emph{decision-alignment}, the concepts differ substantially: In evaluation alignment, the decision loss needs to be known and is approximated with a learned PSR, while we propose general-purpose metrics for general-purpose evaluation where no specific downstream task is anticipated.

    \citet{ferrer_evaluating_2025} advocate the use of \emph{expected PSRs (ESPRs)}, which are also defined as weighted integrals over a cost function (i.e., negative utility), but differ from our work in the sense that \citet{ferrer_evaluating_2025} only consider classification with pointwise classification cost.
    They neither analyze other metrics in terms of such an integral representation nor propose concrete new weight functions (i.e., priors), nor consider other utilities.
    The focus of the paper is rather to demonstrate that such ESPRs are superior to calibration metrics for UQ evaluation.

    \citet{pic_proper_2025} propose another metric framework for probabilistic weather forecasts, specific to PSRs, based on aggregation (combining PSRs through a finite weighted sum) and transformation (transform forecasts and labels before applying the PSR).
    This is a different framework than decision-alignment---we do not require a transformation and our aggregation step (prior-weighted integral) is more general.

    Also, \citet{derr_forecast_2025} introduce a general framework for expressing evaluation metrics that represents metrics as the average realized value of gambles drawn from a predefined set.
    Their framework aims to unify a wide range of existing evaluation metrics by elucidating their structural relationships.
    In contrast, our work is not primarily concerned with unification, but with analyzing when and how metrics are informative for downstream decision-making.

    \citet{flores_aligning_2025} also build on the Schervish representation \cite{schervish_general_1989} to propose evaluation metrics for binary classifiers that reflect clinical priorities.
    Similarly, \citet{flores_consequentialist_2026} propose to use a clipped variant of the Brier score to evaluate binary classifiers to better reflect decision-relevant thresholds.
    These works are conceptually a special case of our PWU metrics.
    
\section{Omitted proofs}

    \subsection{Proof of Proposition \texorpdfstring{\ref{prop:preservation}}{3.4}}\label{app:prop_preservation}

        \preservation*

        \begin{proof}
            Fix any $\boldsymbol y$.

            \textbf{(I)$\Rightarrow$(II):}
            Since (I) holds, there exists a strictly increasing $h_{\boldsymbol y}$ s.t.\ $h_{\boldsymbol y}(M_{\boldsymbol y}(\boldsymbol f))=EU_{\theta \sim \pi, \boldsymbol y}(\boldsymbol f)$ for all $\boldsymbol f$.
            Take any $\boldsymbol f_1,\boldsymbol f_2$.
            If $M_{\boldsymbol y}(\boldsymbol f_1)<M_{\boldsymbol y}(\boldsymbol f_2)$, then strict monotonicity of $h_{\boldsymbol y}$ implies $h_{\boldsymbol y}(M_{\boldsymbol y}(\boldsymbol f_1))<h_{\boldsymbol y}(M_{\boldsymbol y}(\boldsymbol f_2))$, hence $EU_{\theta \sim \pi, \boldsymbol y}(\boldsymbol f_1)<EU_{\theta \sim \pi, \boldsymbol y}(\boldsymbol f_2)$.
            If $M_{\boldsymbol y}(\boldsymbol f_1)=M_{\boldsymbol y}(\boldsymbol f_2)$, then, clearly, $h_{\boldsymbol y}(M_{\boldsymbol y}(\boldsymbol f_1))=h_{\boldsymbol y}(M_{\boldsymbol y}(\boldsymbol f_2))$, hence $EU_{\theta \sim \pi, \boldsymbol y}(\boldsymbol f_1)=EU_{\theta \sim \pi, \boldsymbol y}(\boldsymbol f_2)$.
            Altogether, (II) holds.

            \textbf{(II)$\Rightarrow$(I):}
            Assume (II) holds.
            Define $h_{\boldsymbol y}$ on $\mathrm{Im}(M_{\boldsymbol y})$ by
            \[
                h_{\boldsymbol y}(m) := EU_{\theta \sim \pi, \boldsymbol y}(\boldsymbol f) \quad \text{for any }\boldsymbol f\text{ with }M_{\boldsymbol y}(\boldsymbol f)=m.
            \]
            This is well-defined: if $M_{\boldsymbol y}(\boldsymbol f)=M_{\boldsymbol y}(\boldsymbol f')$, then tie preservation yields $EU_{\theta \sim \pi, \boldsymbol y}(\boldsymbol f)=EU_{\theta \sim \pi, \boldsymbol y}(\boldsymbol f')$.
            Moreover, for any $\boldsymbol f$, by construction $h_{\boldsymbol y}(M_{\boldsymbol y}(\boldsymbol f))=EU_{\theta \sim \pi, \boldsymbol y}(\boldsymbol f)$.
            It remains to show that $h_{\boldsymbol y}$ is strictly increasing.
            Let $m_1<m_2$ be in $\mathrm{Im}(M_{\boldsymbol y})$, and choose $\boldsymbol f_1,\boldsymbol f_2$ s.t.\ $M_{\boldsymbol y}(\boldsymbol f_1)=m_1$ and $M_{\boldsymbol y}(\boldsymbol f_2)=m_2$.
            Then, strict order preservation implies $EU_{\theta \sim \pi, \boldsymbol y}(\boldsymbol f_1)<EU_{\theta \sim \pi, \boldsymbol y}(\boldsymbol f_2)$, i.e., $h_{\boldsymbol y}(m_1)<h_{\boldsymbol y}(m_2)$.
            Hence, $h_{\boldsymbol y}$ is strictly increasing, completing the proof.
        \end{proof}

    \subsection{Proof of Lemma \texorpdfstring{\ref{lem:separability_barrier}}{3.6}}\label{app:separability_barrier}

        \separabilitybarrier*

        \begin{proof}
            Fix any $\boldsymbol y$ and let $M$ be decision-aligned w.r.t.\ $\{U_\theta\}_{\theta\in\Theta}$ for some prior weight $\pi$.
            Denote the instance function of $U_\theta$ by $u_\theta$.
            By the assumed integrability of $\theta\mapsto U_\theta(\boldsymbol f,\boldsymbol y)$ for all $(\boldsymbol f,\boldsymbol y)$ and the fact that the sum in \eqref{eq:pointwise_separability} is finite, we may interchange summation in \eqref{eq:pointwise_separability} and integration in \eqref{eq:decision_alignment} (Fubini/Tonelli):
            \[
                h_{\boldsymbol y}\bigl(M_{\boldsymbol y}(\boldsymbol f)\bigr)
                =\int_\Theta -\frac1n\sum_{i=1}^n u_\theta(f_i,y_i) \pi(\theta) \mathrm{d}\theta
                =\frac1n\sum_{i=1}^n \int_\Theta -u_\theta(f_i,y_i) \pi(\theta) \mathrm{d}\theta.
            \]
            Define the instance function $s(f_i,y_i):=\int_\Theta -u_\theta(f_i,y_i)\pi(\theta)\mathrm{d}\theta$, then the above equation reads
            \[
                h_{\boldsymbol y}\bigl(M_{\boldsymbol y}(\boldsymbol f)\bigr)
                = \frac1n\sum_{i=1}^n s(f_i,y_i).
            \]
            Define $S_{\boldsymbol y}(\boldsymbol f):=\frac1n\sum_{i=1}^n s(f_i,y_i)$, which is (by the form of its definition) pointwise-separable and hence also coordinate-independent, so, for $a,b \in \mathcal{P}$ and $\boldsymbol r,\boldsymbol r' \in \mathcal{P}^{n-1}$,
            \[
                S_{\boldsymbol y}((a,\boldsymbol r))
                <
                S_{\boldsymbol y}((b,\boldsymbol r))
                \Leftrightarrow
                S_{\boldsymbol y}((a,\boldsymbol r'))
                <
                S_{\boldsymbol y}((b,\boldsymbol r')),
            \]
            and because $h_{\boldsymbol y}\bigl(M_{\boldsymbol y}(\boldsymbol f)\bigr) = S_{\boldsymbol y}(\boldsymbol f)$,
            \[
                h_{\boldsymbol y}\bigl(M_{\boldsymbol y}((a,\boldsymbol r))\bigr)
                <
                h_{\boldsymbol y}\bigl(M_{\boldsymbol y}((b,\boldsymbol r))\bigr)
                \Leftrightarrow
                h_{\boldsymbol y}\bigl(M_{\boldsymbol y}((a,\boldsymbol r'))\bigr)
                <
                h_{\boldsymbol y}\bigl(M_{\boldsymbol y}((b,\boldsymbol r'))\bigr).
            \]
            Since $h_{\boldsymbol y}$ is strictly increasing, it preserves orderings, and the same coordinate-independence property must hold for $M_{\boldsymbol y}$.
            The same statements follow analogously for equality and strict inequality in the opposite direction.
        \end{proof}

    \subsection{Proof of Proposition \texorpdfstring{\ref{prop:bc_topk_all_prop}}{3.9}}\label{app:bc_topk_all_prop}

        \bctopkallprop*

        \begin{proof}
            For any metric $M \in \{\text{NLL},\text{BS},\text{Acc},\text{ECE},\text{MCE},\text{R-AUC},\text{E-Det}\}$, assume for contradiction that there exists a prior $\pi:[n-1]\to\mathbb{R}_{\ge 0}$ s.t.\ for any fixed labels $\boldsymbol y$, there exists a strictly increasing function $h_{\boldsymbol y}:\mathrm{Im}(M_{\boldsymbol y})\to\mathbb R$ s.t.\
            \begin{equation}
                h_{\boldsymbol y}\bigl(M_{\boldsymbol y}(\boldsymbol f)\bigr)=\sum_{k=1}^{n-1}-U_{k,\boldsymbol y}(\boldsymbol f)\pi(k) \quad \forall \boldsymbol f. \label{eq:bc_topk_all_prop_decalign}
            \end{equation}
            The value of $U_k$ is only influenced by the subset $\mathcal{S}_k(\boldsymbol{f})$, and we seek predictions $\boldsymbol f_1,\boldsymbol f_2$ that change our (monotonically transformed) metrics but not $U_k$.
            For this, consider the label vector $\boldsymbol y=\boldsymbol 0=(0,\dots,0)$,
            because then, for every $k\in[n-1]$ and every $\boldsymbol f$, $U_{k,\boldsymbol 0}(\boldsymbol f) =
                0$, and \eqref{eq:bc_topk_all_prop_decalign} would imply that
            \begin{equation}\label{eq:bc_topk_all_prop_Mzero}
                h_{\boldsymbol 0}\bigl(M_{\boldsymbol 0}(\boldsymbol f)\bigr)=0
                \quad\text{for all }\boldsymbol f.
            \end{equation}

            Because $h_{\boldsymbol 0}$ is strictly increasing, it is injective, and therefore \eqref{eq:bc_topk_all_prop_Mzero} implies that $M_{\boldsymbol 0}(\boldsymbol f)$ is \emph{constant} in $\boldsymbol f$:
            for all $\boldsymbol f_1,\boldsymbol f_2$,
            \begin{equation}\label{eq:bc_topk_all_prop_Mconst}
                M_{\boldsymbol 0}(\boldsymbol f_1)=M_{\boldsymbol 0}(\boldsymbol f_2).
            \end{equation}
            We now exhibit $\boldsymbol f_1,\boldsymbol f_2$ s.t.\ \eqref{eq:bc_topk_all_prop_Mconst} fails.

            \textbf{Case $\boldsymbol{M \in \{\text{NLL},\text{BS},\text{Acc}\}}$:}
            Let $\boldsymbol f_1=(\tfrac14,\dots,\tfrac14)$ and $\boldsymbol f_2=(\tfrac34,\dots,\tfrac34)$.
            Remember that we consider the negative conventional accuracy as Acc.
            Then,
            \begin{align*}
                \text{NLL}_{\boldsymbol 0}(\boldsymbol f_1)
                =
                -\log(\tfrac34) \quad
                &\neq \quad
                -\log(\tfrac14) = \text{NLL}_{\boldsymbol 0}(\boldsymbol f_2),\\
                \text{BS}_{\boldsymbol 0}(\boldsymbol f_1)
                =\tfrac1{16} \quad
                &\neq \quad
                \tfrac9{16} = \text{BS}_{\boldsymbol 0}(\boldsymbol f_2),\\
                \text{Acc}_{\boldsymbol 0}(\boldsymbol f_1)
                =-1 \quad
                &\neq \quad
                0 = \text{Acc}_{\boldsymbol 0}(\boldsymbol f_2),
            \end{align*}
            so \eqref{eq:bc_topk_all_prop_Mconst} is violated.

            \textbf{Case $\boldsymbol{M \in \{\text{ECE},\text{MCE}\}}$:}
            Fix any binning scheme $\mathcal{B}=\{B_1,\dots,B_m\}$ that partitions $[0,1]$ and let
            $\boldsymbol f_1,\boldsymbol f_2$ be as above.
            Then all entries of $\boldsymbol f_j$ fall into a single bin $B_j$ containing the constant value.
            Since $\boldsymbol y=\boldsymbol 0$, we have $\mathrm{freq}(B_j)=0$ and $\mathrm{conf}(B_j)$ equals the constant value.
            Therefore,
            \[
                \text{ECE}_{\boldsymbol 0}(\boldsymbol f_1)=|0-\tfrac12|=\tfrac14
                \quad
                \neq \quad
                \tfrac34 =
                \text{ECE}_{\boldsymbol 0}(\boldsymbol f_2),
            \]
            contradicting \eqref{eq:bc_topk_all_prop_Mconst}.
            Moreover, since only one bin is nonempty, the same holds for the MCE.

            \textbf{Case $\boldsymbol{M=\text{R-AUC}}$:}
            Let $\boldsymbol f_1,\boldsymbol f_2$ be as above.
            We have $\text{Acc}_{\boldsymbol 0}(\boldsymbol f_{1,i:i\in R})=-1$ for every retained set $R$,
            hence $\text{R-AUC}_{\boldsymbol 0}(\boldsymbol f_1)=0$.
            For $\boldsymbol f_2$, we have $\text{Acc}_{\boldsymbol 0}(\boldsymbol f_{2,i:i\in R})=0$ for every $R$,
            hence $\text{R-AUC}_{\boldsymbol 0}(\boldsymbol f_2)=1$, contradicting \eqref{eq:bc_topk_all_prop_Mconst}.

            \textbf{Case $\boldsymbol{M=\text{E-Det}}$:}
            Note that for the E-Det with $\boldsymbol y = 0$, we have $e_i=1$ iff $f_i>0.5$.
            Choose numbers $a,b\in(0,0.5)$ with $a<b$, and define $a':=1-a$, $b':=1-b$ so that
            $a',b'\in(0.5,1)$ and $s(a)=s(a')$, $s(b)=s(b')$ by symmetry of entropy.
            Define two prediction vectors:
            \[
                \boldsymbol f_1 = (\,a,\ b',\ b',\dots,b'\,),\quad
                \boldsymbol f_2 = (\,b,\ a',\ a',\dots,a'\,).
            \]
            Then, for $\boldsymbol f_1$, we have exactly one index with $f_i\le 0.5$ (hence $e_i=0$) and $n-1$ indices with $f_i>0.5$ (hence $e_i=1$).
            Moreover, the unique $e_i=0$ point has uncertainty score $s(a)$, while every $e_i=1$ point has uncertainty score $s(b')=s(b)$.
            Since $a<b$ and $s(\cdot)$ is strictly increasing on $(0,0.5)$ for entropy, we have $s(a)<s(b)$, so all positives ($e=1$) have strictly larger scores than all negatives ($e=0)$.
            Therefore, the AUROC of using $s_i$ to predict $e_i$ satisfies $\text{E-Det}_{\boldsymbol 0}(\boldsymbol f_1)=-1$.
            For $\boldsymbol f_2$, we again have exactly one index with $e_i=0$ and $n-1$ indices with $e_i=1$.
            Now the unique $e_i=0$ point has uncertainty score $s(b)$, while every $e_i=1$ point has uncertainty score $s(a')=s(a)$.
            Thus $s(b)>s(a)$ implies that the single negative has strictly larger score than all positives, so $\text{E-Det}_{\boldsymbol 0}(\boldsymbol f_2)=0$, contradicting \eqref{eq:bc_topk_all_prop_Mconst}.

            In all cases, we reach a contradiction. Therefore, no such prior $\pi$ can exist, which proves the claim.
        \end{proof}

    \subsection{Proof of Proposition \texorpdfstring{\ref{prop:reg_selpred_all_prop}}{3.11}}\label{app:reg_selpred_all_prop}

        \regselpredallprop*
        
        \begin{proof}
            For $M \in \{\text{ECE},\text{MCE},\text{R-AUC},\text{E-Det}\}$, the claim follow directly by \cref{lem:separability_barrier}.
            For $M \in \{\text{NLL},\text{MSE}\}$, assume for contradiction that there exists a prior $\pi:\mathbb{R}_{>0}\to\mathbb{R}_{\ge 0}$ s.t.\ for any fixed labels $\boldsymbol y$ there exists a strictly increasing function $h_{\boldsymbol y}:\mathrm{Im}(M_{\boldsymbol y})\to\mathbb R$ s.t.\
            \begin{equation}
                h_{\boldsymbol y}\bigl(M_{\boldsymbol y}(\boldsymbol f)\bigr)=\int_{\mathbb{R}_{>0}}-U_{\lambda,\boldsymbol y}(\boldsymbol f)\pi(\lambda)\mathrm{d}\lambda \quad \forall \boldsymbol f. \label{eq:reg_selpred_all_prop_decalign}
            \end{equation}

            \textbf{Case $\boldsymbol{M=\text{MSE}}$:}
            Fix any labels $\boldsymbol y$ and consider two prediction vectors $\boldsymbol f_1 = (\boldsymbol \mu, \boldsymbol{\sigma^2}_1), \boldsymbol f_2 = (\boldsymbol \mu, \boldsymbol{\sigma^2}_2)$
            with identical predictive means $\boldsymbol \mu = \boldsymbol y$.
            Take any interval $(a,b) \subset \mathbb{R}_{>0}$ with $ \int_{a}^{b}\pi(\lambda)\mathrm{d}\lambda > 0$ (must exist for a prior $\pi$)
            and consider $\sigma^2_{1,i}=a < b = \sigma^2_{2,i}$ for all $i$.
            Then $\text{MSE}(\boldsymbol f_1,\boldsymbol y) = \text{MSE}(\boldsymbol f_2,\boldsymbol y) = 0$,
            since the MSE ignores predictive uncertainty entirely, so $h_{\boldsymbol y}\bigl(M_{\boldsymbol y}(\boldsymbol f_1)\bigr) = h_{\boldsymbol y}\bigl(M_{\boldsymbol y}(\boldsymbol f_2)\bigr)$.
            For the difference on the right-hand side of \eqref{eq:reg_selpred_all_prop_decalign} when changing from $\boldsymbol f_1$ to $\boldsymbol f_2$, we have
            \begin{align*}
                &\int_{\mathbb{R}_{>0}}-U_{\lambda,\boldsymbol y}(\boldsymbol f_2)\pi(\lambda)\mathrm{d}\lambda - \int_{\mathbb{R}_{>0}}-U_{\lambda,\boldsymbol y}(\boldsymbol f_1)\pi(\lambda)\mathrm{d}\lambda\\
                &=
                \int_{0}^{b}\lambda\pi(\lambda)\mathrm{d}\lambda + 0 - \int_{0}^{a}\lambda\pi(\lambda)\mathrm{d}\lambda - 0
                =
                \lambda \int_{a}^{b}\pi(\lambda)\mathrm{d}\lambda \ne 0,
            \end{align*}
            contradicting \eqref{eq:reg_selpred_all_prop_decalign}.

            \textbf{Case $\boldsymbol{M=\text{NLL}}$:}
            Fix any labels $\boldsymbol y$.
            
            \emph{Step 1: Expected utility under $\pi$.}
            For each coordinate $i$, $-U_{\lambda,y_i}(f_i)
                =
                (\mu_i-y_i)^2 \boldsymbol{1}_{\sigma_i^2\le\lambda}
                +\lambda \boldsymbol{1}_{\sigma_i^2>\lambda}$.
            By linearity of integration,
            \begin{equation}
                \int -U_{\lambda,\boldsymbol y}(\boldsymbol f)\pi(\lambda)\mathrm{d}\lambda
                =\frac1n\sum_{i=1}^n
                \left(
                (\mu_i-y_i)^2
                \int_{\sigma_i^2}^{\infty}\pi(\lambda)\mathrm{d}\lambda
                +\int_{0}^{\sigma_i^2} \lambda\,\pi(d\lambda)
                \right).
                \label{eq:reg_selpred_all_prop_risk}
            \end{equation}
            Define $S(s):=\int_{s}^{\infty}\pi(\lambda)\mathrm{d}\lambda,\ s>0$.
            
            \emph{Step 2: One-dimensional reduction.}
            Fix an index $i$.  
            Fix all coordinates $(\mu_j,\sigma_j^2)$ for $j\neq i$ and fix
            $\sigma_i^2=s\in(0,\infty)$.  
            Let $x:=(\mu_i-y_i)^2\in[0,\infty)$.
            Under these choices,
            \begin{equation}\label{eq:reg_selpred_all_prop_M_affine}
                M_{\boldsymbol y}(\boldsymbol f)=A_s x+B_s,
                \quad
                A_s:=\frac{1}{2ns}>0,
            \end{equation}
            where $B_s$ is independent of $x$, and from \eqref{eq:reg_selpred_all_prop_risk},
            \begin{equation}\label{eq:reg_selpred_all_prop_R_affine}
                \int -U_{\lambda,\boldsymbol y}(\boldsymbol f)\pi(\lambda)\mathrm{d}\lambda
                =
                C_s x+D_s,
                \quad
                C_s:=\frac1n S(s)\ge 0,
            \end{equation}
            with $D_s$ independent of $x$.
            Substituting \eqref{eq:reg_selpred_all_prop_M_affine}--\eqref{eq:reg_selpred_all_prop_R_affine} into
            \eqref{eq:reg_selpred_all_prop_decalign} yields
            \begin{equation}\label{eq:reg_selpred_all_prop_functional}
                h(A_s x+B_s)=C_s x+D_s
                \quad\forall x\ge 0.
            \end{equation}

            \emph{Step 3: Form of $h$.}
            Note the following:
            For $A>0$, $B\in\mathbb R$, and $h$ a function defined on
            $[B,\infty)$, we have that if there exist constants $C,D$ s.t.\ $h(Ax+B)=Cx+D\quad\forall x\ge 0$,
            then
            \[
                h(z)=\frac{C}{A}z+\left(D-\frac{C}{A}B\right)
                \quad\forall z\in[B,\infty).
            \]
            This is because for any $z\ge B$, substituting $x=(z-B)/A\ge 0$ gives 
            \[
                h(z)=Cx+D
                =\frac{C}{A}z+\left(D-\frac{C}{A}B\right).
            \]
            Applying this to \eqref{eq:reg_selpred_all_prop_functional}, we conclude that
            \[
                h(z)=m_s z+b_s
                \quad\forall z\ge B_s,
            \]
            where
            \begin{equation}\label{eq:reg_selpred_all_prop_slope}
                m_s:=\frac{C_s}{A_s}=2s\,S(s)\ge 0.
            \end{equation}
            Because $h$ is strictly increasing on $\mathrm{im}(M_{\boldsymbol y})$ and
            $[B_s,\infty)\subseteq \mathrm{im}(M_y)$, we must have
            $m_s>0$, hence $S(s)>0$ for all $s>0$.

            \emph{Step 4: Consistency across different variances.}
            Let $s_1,s_2>0$.  
            Repeating the above construction yields representations
            \[
                h(z)=m_{s_1}z+b_{s_1}\quad(z\ge B_{s_1}),
                \quad
                h(z)=m_{s_2}z+b_{s_2}\quad(z\ge B_{s_2}).
            \]
            The half-lines $[B_{s_1},\infty)$ and $[B_{s_2},\infty)$ overlap, and both
            expressions describe the same function $h$ on the overlap.  
            Therefore, $m_{s_1}=m_{s_2}$.
            Since $s_1,s_2$ are arbitrary, there exists a constant $m>0$ s.t.\ $m_s=m\quad\forall s>0$.
            Combining with \eqref{eq:reg_selpred_all_prop_slope} gives
            \begin{equation}\label{eq:reg_selpred_all_prop_survival}
                S(s)=\frac{m}{2s}
                \quad\forall s>0.
            \end{equation}

            \emph{Step 5: Contradiction.}
            From \eqref{eq:reg_selpred_all_prop_survival}, we obtained
            \[
                S(s)=\int_s^\infty \pi(\lambda)\mathrm{d}\lambda=\frac{m}{2s}
                \quad \forall s>0,
            \]
            for some constant $m>0$.
            Since $S$ is absolutely continuous with derivative $S'(s)=-\pi(s)$ for a.e.\ $s>0$,
            differentiating yields
            \[
                \pi(s)=-S'(s)=\frac{m}{2s^2}
                \quad \text{for a.e.\ } s>0.
            \]
            But then, for any fixed $s>0$,
            \[
                \int_{0}^{s} \lambda\pi(\lambda)\mathrm{d}\lambda
                =\int_{0}^{s} \lambda \cdot \frac{m}{2\lambda^2}\mathrm{d}\lambda
                =\frac{m}{2}\int_{0}^{s} \frac{1}{\lambda}\mathrm{d}\lambda
                =+\infty.
            \]
            Consequently, the expected (negative) utility in \eqref{eq:reg_selpred_all_prop_risk} is infinite for any
            prediction vector with $\sigma_i^2=s$ for some coordinate $i$, and thus the right-hand
            side of \eqref{eq:reg_selpred_all_prop_decalign} is not well-defined, contradicting the definition of
            decision-alignment (which requires the integral to be well-defined for all admissible
            $\boldsymbol f$).
            Therefore, no such nonnegative measurable $\pi$ can exist, and the regression NLL is not decision-aligned w.r.t.\ $\{U_\lambda\}_{\lambda>0}$.
        \end{proof}

    \subsection{Proof of Proposition \texorpdfstring{\ref{prop:reg_selpred_nll_prop}}{3.12}}\label{app:reg_selpred_nll_prop}

        \regselprednllprop*

        \begin{proof}
            Restricting to $\lambda > \varepsilon$ and $f_i \in \mathcal{P}_\varepsilon$ for all $i$, the negative expected pointwise utility is
            \[
                \int_{\varepsilon}^{\sigma_i^2} \lambda \pi(\lambda)\mathrm{d}\lambda + (\mu_i - y_i)^2 \int_{\sigma_i^2}^{\infty} \pi(\lambda) \mathrm{d}\lambda.
            \]
            Choose $\pi(\lambda)$ as the probability density function of a $\text{Pareto}(1, \varepsilon)$ distribution, given by $\pi(\lambda) = \varepsilon \lambda^{-2}$ for $\lambda \ge \varepsilon$. Substituting this density into the integral gives
            \[
                \varepsilon \int_{\varepsilon}^{\sigma_i^2} \frac{1}{\lambda} \mathrm{d}\lambda + \varepsilon (\mu_i - y_i)^2 \int_{\sigma_i^2}^{\infty} \frac{1}{\lambda^2} \mathrm{d}\lambda
                = \varepsilon \left( \log \sigma_i^2 - \log \varepsilon \right) + \varepsilon \frac{(\mu_i - y_i)^2}{\sigma_i^2}.
            \]
            By linearity of expectation, the negative expected dataset-level utility is the average of the pointwise expected risks:
            \begin{align}
                \int_{\Theta} -U_{\lambda, \boldsymbol y}(\boldsymbol f) \pi(\lambda) \mathrm{d}\lambda &= \frac{1}{n} \sum_{i=1}^n \left[ \varepsilon \log \sigma_i^2 + \varepsilon \frac{(\mu_i - y_i)^2}{\sigma_i^2} - \varepsilon \log \varepsilon \right] \nonumber \\
                &= 2\varepsilon \left[ \frac{1}{n} \sum_{i=1}^n \left( \frac{1}{2} \log \sigma_i^2 + \frac{(\mu_i - y_i)^2}{2\sigma_i^2} \right) \right] - \varepsilon \log \varepsilon, \label{eq:reg_selpred_nll_prop_nll_derivation}
            \end{align}
            which closely resembles the NLL.
            We can express \eqref{eq:reg_selpred_nll_prop_nll_derivation} as an affine transformation of the NLL:
            \[
                \int_{\Theta} -U_{\lambda, \boldsymbol y}(\boldsymbol f) \pi(\lambda) \mathrm{d}\lambda = 2\varepsilon \cdot \text{NLL}_{\boldsymbol y}(\boldsymbol f) - \varepsilon\log(2\pi \varepsilon).
            \]
            Defining $h_{\boldsymbol y}(x) = 2\varepsilon x - \varepsilon\log(2\pi \varepsilon)$ for all $\boldsymbol y$, which is strictly increasing for $\varepsilon > 0$, we satisfy \eqref{eq:decision_alignment}.
            Thus, $M = \text{NLL}$ is decision-aligned with the selective prediction utility family under $\pi(\lambda)=\tfrac{\varepsilon}{\lambda^2}$ and $h_{\boldsymbol y}(m)=2\varepsilon m-\varepsilon\log(2\pi \varepsilon)$ for all $\boldsymbol y$.
        \end{proof}

    \subsection{Proof of Proposition \texorpdfstring{\ref{prop:reg_topk_all_prop}}{3.14}}\label{app:reg_topk_all_prop}

        \regtopkallprop*

        \begin{proof}
            For any $M \in \{\text{NLL},\text{MSE},\text{ECE},\text{MCE},\text{R-AUC},\text{E-Det}\}$, assume for contradiction that $M$ is decision-aligned w.r.t.\ $\{U_{\phi}\}_{\phi\in\Phi}$, i.e., that there exists a prior $\pi$ on $\Phi$ s.t.\ for any fixed labels $\boldsymbol y$ there exists a strictly increasing function
            $h_{\boldsymbol y}:\mathrm{Im}(M_{\boldsymbol y})\to\mathbb{R}$ with
            \[
                h_{\boldsymbol y}\bigl(M_{\boldsymbol y}(\boldsymbol f)\bigr)
                =
                \int_{\Phi} -U_{\phi,\boldsymbol y}(\boldsymbol f)\pi(\phi)\, \mathrm d\phi
                \quad \forall \boldsymbol f .
            \]
            Fix any strictly increasing sequence of variances $0 < v_1 < v_2 < \dots < v_n$ and define the label vector $\boldsymbol y=\boldsymbol 0=(0,\dots,0)$.
            Consider two prediction vectors with identical variances but different means:
            \[
                \boldsymbol f_1 := \bigl((0,\dots,0),(v_1,\dots,v_n)\bigr),
                \quad
                \boldsymbol f_2 := \bigl((1,\dots,1),(v_1,\dots,v_n)\bigr).
            \]
            For any $\phi=(k,\gamma)\in\Phi$ and any $\boldsymbol f=(\boldsymbol\mu,\boldsymbol\sigma^2)$, the selection rule ranks instances by the score
            $\mu_i-\gamma\sigma_i^2$.
            Since both $\boldsymbol f_1$ and $\boldsymbol f_2$ have \emph{constant} means and the \emph{same} strictly ordered variances,
            for every $\phi=(k,\gamma)$ the ordering of the scores is the ordering of $-\gamma v_i$, hence
            \[
                \mathcal{S}_{\phi}(\boldsymbol f_1)
                =
                \mathcal{S}_{\phi}(\boldsymbol f_2)
                =
                \{1,\dots,k\}
                \quad \forall (k,\gamma)\in\Phi.
            \]
            Therefore, for every $\phi=(k,\gamma)$,
            \[
                U_{\phi,\boldsymbol 0}(\boldsymbol f_1)
                =
                \frac{1}{k}\sum_{i=1}^k (0-\gamma v_i)
                =
                U_{\phi,\boldsymbol 0}(\boldsymbol f_2).
            \]
            Consequently, the expected utility under \emph{any} prior $\pi$ coincides:
            \[
                \int_{\Phi} -U_{\phi,\boldsymbol 0}(\boldsymbol f_1)\pi(\phi)\mathrm d\phi
                =
                \int_{\Phi} -U_{\phi,\boldsymbol 0}(\boldsymbol f_2)\pi(\phi)\mathrm d\phi.
            \]
            Since $h_{\boldsymbol 0}$ is injective, decision-alignment of $M$ would require that $M_{\boldsymbol 0}(\boldsymbol f_1)=M_{\boldsymbol 0}(\boldsymbol f_2)$.
            We now show that this fails.
        
            \textbf{Case $\boldsymbol{M=\text{MSE}}$:}
            Since $\boldsymbol y=\boldsymbol 0$,
            \[
                \text{MSE}_{\boldsymbol 0}(\boldsymbol f_1)
                =0 \quad
                \ne
                \quad
                1
                =
                \text{MSE}_{\boldsymbol 0}(\boldsymbol f_2).
            \]
        
            \textbf{Case $\boldsymbol{M=\text{NLL}}$:}
            For Gaussian NLL,
            \[
                \text{NLL}_{\boldsymbol 0}(\boldsymbol f_1)
                =
                \frac1n\sum_{i=1}^n \tfrac12\log(2\pi v_i)
                \quad
                \ne
                \quad
                \frac1n\sum_{i=1}^n\left(\tfrac12\log(2\pi v_i)+\frac{1}{2v_i}\right)
                =
                \text{NLL}_{\boldsymbol 0}(\boldsymbol f_2).
            \]

            \textbf{Case $\boldsymbol{M\in\{\text{ECE},\text{MCE}\}}$:}
            For $\boldsymbol y=\boldsymbol 0$ and $\boldsymbol f_1=((0,\dots,0),(v_1,\dots,v_n))$, each observation equals the predictive mean,
            so for every miscoverage level $\alpha_j$ and every index $i$ we have $y_i=\mu_i\in[\mu_i - z_{1-\alpha_j/2}\sqrt{v_i},\mu_i + z_{1-\alpha_j/2}\sqrt{v_i}]$,
            hence $\hat\alpha_j(\boldsymbol f_1,\boldsymbol 0)=0$.
            In contrast, for $\boldsymbol f_2=((1,\dots,1),(v_1,\dots,v_n))$ we have $|y_i-\mu_i|=1$ for all $i$, and therefore the empirical miscoverage
            $\hat\alpha_j(\boldsymbol f_2,\boldsymbol 0)$ depends on whether $1 \le z_{1-\alpha_j/2}\sqrt{v_i}$ holds.
            Choose the variance sequence s.t.\ $v_1$ is small enough s.t.\ $1 > z_{1-\alpha^*/2}\sqrt{v_1}$ for at least one level $\alpha^*$ and the remaining $v_i$ are large enough s.t. $1 \le z_{1-\alpha_j/2}\sqrt{v_i}$ for all $\alpha_j$.
            Then, $\hat\alpha^*(\boldsymbol f_2,\boldsymbol 0)=\tfrac 1n \ne 0 = \hat\alpha^*(\boldsymbol f_1,\boldsymbol 0)$, and all other empirical miscoverages are $0$ for both predictions, hence,
            \[
                \text{ECE}_{\boldsymbol 0}(\boldsymbol f_1)\neq \text{ECE}_{\boldsymbol 0}(\boldsymbol f_2)
                \quad\text{and}\quad
                \text{MCE}_{\boldsymbol 0}(\boldsymbol f_1)\neq \text{MCE}_{\boldsymbol 0}(\boldsymbol f_2).
            \]

            \textbf{Case $\boldsymbol{M=\text{R-AUC}}$:}
            In regression, R-AUC is computed from the retention sets $R_r$ obtained by sorting instances by Gaussian predictive entropy
            (which is a strictly increasing function of $\sigma_i^2$), and uses the MSE on the retained subset as the error rate.
            With our choice of $\boldsymbol f_1,\boldsymbol f_2$, the retained sets $R_r=\{\pi(r+1),\dots,\pi(n)\}$ are identical under $\boldsymbol f_1,\boldsymbol f_2$ (because $\boldsymbol\sigma^2$ is the same in both).
            Moreover, on any nonempty retained set $R_r$ we have
            \[
                \text{MSE}_{\boldsymbol 0}(\boldsymbol f_{1,i:i\in R_r})=0
                \quad
                \ne
                \quad
                1 =
                \text{MSE}_{\boldsymbol 0}(\boldsymbol f_{2,i:i\in R_r}).
            \]
            Consequently, $\text{R-AUC}_{\boldsymbol 0}(\boldsymbol f_1)\neq \text{R-AUC}_{\boldsymbol 0}(\boldsymbol f_2)$.

            \textbf{Case $\boldsymbol{M=\text{E-Det}}$:}
            In regression, we use the Gaussian predictive entropy as uncertainty score $s_i=s(\sigma_i^2)$ (strictly increasing in $\sigma_i^2$) and define error indicators by $e_i
                :=
                \boldsymbol{1}_{
                    \left|\frac{y_i-\mu_i}{|y_i|+\varepsilon}\right|>\tau
                }$
            for fixed $\varepsilon>0$ and $\tau>0$.
            Choose $a=0$ and $b=(\tau+1)\varepsilon$.
            Next, pick variances $0<v_1<\cdots<v_n$ and set $\boldsymbol\sigma^2=(v_1,v_n,\dots,v_n)$, so that $s(v_1)<s(v_n)$.
            Define two forecasts with identical variances but swapped mean offsets, $\boldsymbol f_1=(\boldsymbol\mu_1,\boldsymbol\sigma^2)$, $\boldsymbol f_2=(\boldsymbol\mu_2,\boldsymbol\sigma^2)$, where
            \[
                \boldsymbol\mu_1=(\,a,\ b,\ b,\dots,b\,),
                \qquad
                \boldsymbol\mu_2=(\,b,\ a,\ a,\dots,a\,).
            \]
            Note that the corresponding expected utilities still match, even for a random tie-breaking rule, since the utility only depends on $\boldsymbol 0$ and $\boldsymbol{\sigma^2}$.
            Under $\boldsymbol f_1$ we have $e_1=0$ and $e_i=1$ for $i\ge 2$, and the unique point with $e=0$ has uncertainty score $s(v_1)$ while every point with $e=1$ has uncertainty score $s(v_n)$.
            Since $s(v_1)<s(v_n)$, all positives have strictly larger scores than all negatives, hence $\text{E-Det}_{\boldsymbol 0}(\boldsymbol f_1)=-1$.
            Under $\boldsymbol f_2$ we have $e_1=1$ and $e_i=0$ for $i\ge 2$, so the unique positive has score $s(v_1)$ while all negatives have score $s(v_n)$, implying $\text{E-Det}_{\boldsymbol 0}(\boldsymbol f_2)=0$.

            In all cases, we obtain a contradiction. Therefore no such prior $\pi$ can exist, and $M$ is not decision-aligned w.r.t.\ $\{U_{\phi}\}_{\phi\in\Phi}$.
        \end{proof}

\section{Relationship between decision-alignment and properness}\label{app:properties}

    In this section, we prove that many decision-aligned metrics (those where the transformations $h_{\boldsymbol{y}}$ are affine) are proper.
    Note, however, that the converse is not true.
    While any PSR $M$ is decision-aligned to \emph{some} decision-family, which we can simply define as $U_\theta = \theta M$ (and considering $\pi(\theta)=\delta_1(\theta)$), a PSR is not decision-aligned to \emph{every} utility family.
    We discuss several such examples in this paper (for example, the NLL is not decision-aligned w.r.t.\ the top-$k$ utility family).

    \begin{proposition}[Decision-alignment with affine $h_{\boldsymbol y}$ $\Rightarrow$ properness]\label{prop:da_properness}
        Suppose that the metric $M$ is decision-aligned w.r.t.\ $\{U_\theta\}_{\theta\in\Theta}$ under $\pi$ with transformations $h_{\boldsymbol y}$, and assume $\int_\Theta \mathbb E_{\boldsymbol Y\sim \boldsymbol p}\left[|U_{\theta,\boldsymbol Y}(\boldsymbol f)|\right]\pi(\theta)\mathrm d\theta < \infty$ for all $(\boldsymbol f,\boldsymbol y)$.
        Further, assume there exists a constant $a>0$ and a function $b:\mathcal Y^n\to\mathbb R$ s.t.\
        \[
            h_{\boldsymbol y}(m) = am + b(\boldsymbol y) \quad \text{for all } \boldsymbol y.
        \]
        Then, $M$ is a PSR.
        If, in addition, the Bayes acts for each $U_\theta$ are unique under all $\boldsymbol f$, then $M$ is strictly proper.
    \end{proposition}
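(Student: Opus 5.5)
The plan is to show that the expected score $\mathbb E_{\boldsymbol Y\sim\boldsymbol p}[M_{\boldsymbol Y}(\boldsymbol f)]$ is minimized at $\boldsymbol f=\boldsymbol p$, where $\boldsymbol p$ is the true (joint) label distribution, interpreted as a prediction in $\mathcal P^n$. We rewrite $M$ through the decision-alignment identity and then use the defining property of Bayes acts. First, by \cref{def:decision_alignment} with affine $h_{\boldsymbol y}$, for every $\boldsymbol y$ and $\boldsymbol f$ we have
\[
    M_{\boldsymbol y}(\boldsymbol f)=\frac1a\Bigl(\int_\Theta -U_{\theta,\boldsymbol y}(\boldsymbol f)\,\pi(\theta)\,\mathrm d\theta-b(\boldsymbol y)\Bigr).
\]
Taking expectations over $\boldsymbol Y\sim\boldsymbol p$, the term $\mathbb E[b(\boldsymbol Y)]$ does not depend on $\boldsymbol f$. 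Because of this, it cancels in any comparison between $\boldsymbol f$ and $\boldsymbol p$. To avoid undefined expressions, one should either assume it is finite or compare differences $M_{\boldsymbol y}(\boldsymbol f)-M_{\boldsymbol y}(\boldsymbol p)$ pointwise before taking expectations; I would do the latter.

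Second, the integrability hypothesis $\int_\Theta\mathbb E_{\boldsymbol p}|U_{\theta,\boldsymbol Y}(\boldsymbol f)|\pi(\theta)\mathrm d\theta<\infty$ lets me apply Fubini--Tonelli and exchange $\mathbb E_{\boldsymbol Y\sim\boldsymbol p}$ with $\int_\Theta$. This gives
\[
    a\,\mathbb E_{\boldsymbol p}\bigl[M_{\boldsymbol Y}(\boldsymbol f)-M_{\boldsymbol Y}(\boldsymbol p)\bigr]=\int_\Theta\Bigl(\mathbb E_{\boldsymbol p}[U_{\theta,\boldsymbol Y}(\boldsymbol p)]-\mathbb E_{\boldsymbol p}[U_{\theta,\boldsymbol Y}(\boldsymbol f)]\Bigr)\pi(\theta)\,\mathrm d\theta .
\]
Third, recall that $U_{\theta,\boldsymbol y}(\boldsymbol f)$ is the utility of the Bayes act under $\boldsymbol f$. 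The Bayes act under $\boldsymbol p$ maximizes expected utility under $\boldsymbol p$ over all actions, in particular over the Bayes act chosen under $\boldsymbol f$. So the integrand is nonnegative for every $\theta$. Since $\pi\ge0$ and $a>0$, the expected score at $\boldsymbol p$ is at most that at $\boldsymbol f$, which is properness.

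For strict properness, take $\boldsymbol f\neq\boldsymbol p$. Uniqueness of Bayes acts implies that whenever the Bayes act under $\boldsymbol f$ differs from that under $\boldsymbol p$, the integrand is strictly positive. The main obstacle is turning this into a strictly positive integral. It requires that the set of $\theta$ on which the acts differ has positive $\pi$-mass, which the stated hypotheses do not obviously guarantee; for instance, if $\pi$ is a Dirac measure and the two acts coincide at that $\theta$, strictness fails. I would read ``unique Bayes acts under all $\boldsymbol f$'' as implying that distinct predictions induce distinct acts for $\pi$-almost every $\theta$, and state this reading explicitly. If that reading is not acceptable, I would add the assumption that the Bayes-act map separates predictions on a set of positive $\pi$-measure. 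With it, the integral of a nonnegative function that is strictly positive on a set of positive measure is strictly positive, which completes the argument.
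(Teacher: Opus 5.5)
Your properness argument follows the paper's proof. The paper also rewrites $M$ through the affine $h_{\boldsymbol y}$, swaps $\mathbb E_{\boldsymbol Y\sim\boldsymbol p}$ and $\int_\Theta$ by Fubini--Tonelli, and uses that each $-U_{\theta,\boldsymbol y}$ is proper. The paper cites Dawid for that last fact, whereas you derive it directly from the Bayes-act property. Working with pointwise differences $M_{\boldsymbol y}(\boldsymbol f)-M_{\boldsymbol y}(\boldsymbol p)$ is a small improvement, since the paper implicitly assumes $\mathbb E[b(\boldsymbol Y)]$ is finite.

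On strict properness your hesitation is justified, and here you are more careful than the paper. The paper simply asserts that unique Bayes acts make every $-U_{\theta,\boldsymbol y}$ strictly proper, so that ``all inequalities become strict''. But uniqueness of the Bayes act under each prediction does not prevent $\boldsymbol f\neq\boldsymbol p$ from sharing the same act, and then the $\theta$-wise inequality is an equality.

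For the paper's binary-decision and selective-prediction families, no single $-U_\theta$ is strictly proper. At a fixed threshold $c$, any $f_i,p_i$ on the same side of $c$ induce the same act. Similarly, all reports with $\sigma^2>\lambda$ abstain. A concrete failure is a Dirac prior at $c=0.5$ (i.e.\ Acc) with predictions restricted to avoid $0.5$: Bayes acts are then unique, yet the resulting metric is not strictly proper. So an extra separation condition is genuinely needed.

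Of your two options, take the explicit one: the acts under $\boldsymbol f$ and $\boldsymbol p$ differ on a set of positive $\pi$-mass, with uniqueness needed only under $\boldsymbol p$ for $\pi$-a.e.\ $\theta$. Avoid reinterpreting the hypothesis as ``distinct acts for $\pi$-a.e.\ $\theta$''. That reading is essentially the paper's implicit assumption, and it fails for the binary-decision family: for $f_i=0.3$, $p_i=0.4$ the acts coincide for every $c\notin[0.3,0.4)$. The positive-mass condition, by contrast, does hold there for any full-support prior such as $\mathrm{Beta}(2,10)$, because the thresholds lying between $f_i$ and $p_i$ carry positive mass. With that condition, your final step (a nonnegative integrand that is strictly positive on a set of positive measure) completes the proof.
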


    \begin{proof}
        Define $S(\boldsymbol f,\boldsymbol y):=EU_{\theta \sim \pi,\boldsymbol y}(\boldsymbol f)$.
        By \citet{dawid_theory_2014}, we know that $-U_{\theta,\boldsymbol y}(\boldsymbol f)$ is a PSR for all $\theta$.
        We first show that this implies that $S(\boldsymbol f,\boldsymbol y)$ is a PSR as well.
        Denote by $\boldsymbol p$ the true distribution on $\mathcal Y^n$.
        By definition of properness, it holds for all predictions $\boldsymbol f$ that
        \[
            \mathbb E_{\boldsymbol Y\sim \boldsymbol p}\left[-U_{\theta,\boldsymbol Y}(\boldsymbol p)\right]
            \le
            \mathbb E_{\boldsymbol Y\sim \boldsymbol p}\left[-U_{\theta,\boldsymbol Y}(\boldsymbol f)\right]
            \quad\text{for all }\theta\in\Theta.
        \]
        Multiplying by the positive $\pi(\theta)$ and integrating both sides w.r.t.\ $\theta$ gives
        \[
            \int_\Theta \mathbb E_{\boldsymbol Y\sim \boldsymbol p}\left[-U_{\theta,\boldsymbol Y}(\boldsymbol p)\right]\pi(\theta)\mathrm d\theta
            \le
            \int_\Theta \mathbb E_{\boldsymbol Y\sim \boldsymbol p}\left[-U_{\theta,\boldsymbol Y}(\boldsymbol f)\right]\pi(\theta)\mathrm d\theta,
        \]
        where the integrals exist by assumption.
        By Fubini-Tonelli (again using the assumed finiteness of expectations),
        \[
            \int_\Theta \mathbb E_{\boldsymbol Y\sim \boldsymbol p}\left[-U_{\theta,\boldsymbol Y}(\boldsymbol f)\right]\pi(\theta)\mathrm d\theta
            =
            \mathbb E_{\boldsymbol Y\sim \boldsymbol p}\left[\int_\Theta -U_{\theta,\boldsymbol Y}(\boldsymbol f)\pi(\theta)\mathrm d\theta\right]
            =
            \mathbb E_{\boldsymbol Y\sim \boldsymbol p}\left[S(\boldsymbol f,\boldsymbol Y)\right].
        \]
        With the same equality on $\boldsymbol p$ instead of $\boldsymbol f$, we get
        \[
            \mathbb E_{\boldsymbol Y\sim \boldsymbol p}\left[S(\boldsymbol p,\boldsymbol Y)\right]
            \le
            \mathbb E_{\boldsymbol Y\sim \boldsymbol p}\left[S(\boldsymbol f,\boldsymbol Y)\right],
        \]
        which shows that $S(\boldsymbol f,\boldsymbol y)$ is itself a PSR.
        Now, since
        \[
            aM(\boldsymbol f,\boldsymbol y) + b(\boldsymbol y) = EU_{\theta \sim \pi,\boldsymbol y}\ \ \Leftrightarrow\ \  M(\boldsymbol f,\boldsymbol y) = \frac 1aEU_{\theta \sim \pi,\boldsymbol y} -  \frac 1a b(\boldsymbol y),
        \]
        we have for any prediction $\boldsymbol f$,
        \[
            \mathbb E_{\boldsymbol Y\sim \boldsymbol p}[M(\boldsymbol f,\boldsymbol Y)]
            = \frac 1a\mathbb E_{\boldsymbol Y\sim \boldsymbol p}[S(\boldsymbol f,\boldsymbol Y)] - \frac 1a\mathbb E_{\boldsymbol Y\sim \boldsymbol p}[b(\boldsymbol Y)],
        \]
        so the set of minimizers of $\mathbb E_{\boldsymbol Y\sim \boldsymbol p}[M(\boldsymbol f,\boldsymbol Y)]$ coincides with that of $\mathbb E_{\boldsymbol Y\sim \boldsymbol p}[S(\boldsymbol f,\boldsymbol Y)]$.
        Thus, $M$ is proper.
        If finally the Bayes acts for $U_\theta$ are unique under all $\boldsymbol f$, $-U_{\theta,\boldsymbol y}(\boldsymbol f)$ is a strict PSR, all inequalities become strict inequalities, and $M$ will be a strict PSR.
    \end{proof}

\section{Counterexample for the converse relation of Lemma \texorpdfstring{\ref{lem:separability_barrier}}{3.6}}\label{app:converse}

    Consider $\mathcal{P}=\mathcal{Y}=\mathbb{R}$.
    Define the pointwise-separable (and with this also coordinate-independent) metric
    \[
        M(\boldsymbol f,\boldsymbol y) := \frac1n\sum_{i=1}^n (f_i-y_i)^2.
    \]
    Let $\Theta=\{-1,+1\}$ and fix the prior $\pi(+1)=\pi(-1)=\tfrac12$.
    Define the non-separable functional $h(\boldsymbol f,\boldsymbol y) := \boldsymbol{1}_{f_1>f_2}$,
    which depends on the joint ordering of $(f_1,f_2)$ and hence is not pointwise-separable.
    Now define a utility family $\{U_\theta\}_{\theta\in\Theta}$ by
    \[
        U_{+1}(\boldsymbol f,\boldsymbol y) := -M(\boldsymbol f,\boldsymbol y) - h(\boldsymbol f,\boldsymbol y),
        \qquad
        U_{-1}(\boldsymbol f,\boldsymbol y) := -M(\boldsymbol f,\boldsymbol y) + h(\boldsymbol f,\boldsymbol y).
    \]
    Then both $U_{+1}$ and $U_{-1}$ are non-separable (because of the $\pm h$ term), yet
    \begin{align*}
        \int_{\Theta} -U_\theta(\boldsymbol f,\boldsymbol y)\mathrm{d}\pi(\theta)
        &=
        \frac12\bigl(-U_{+1}(\boldsymbol f,\boldsymbol y)\bigr)
        +
        \frac12\bigl(-U_{-1}(\boldsymbol f,\boldsymbol y)\bigr) \\
        &=
        \frac12\bigl(M(\boldsymbol f,\boldsymbol y)+h(\boldsymbol f,\boldsymbol y)\bigr)
        +
        \frac12\bigl(M(\boldsymbol f,\boldsymbol y)-h(\boldsymbol f,\boldsymbol y)\bigr) \\
        &=
        M(\boldsymbol f,\boldsymbol y).
    \end{align*}
    Hence, $M$ is decision-aligned w.r.t.\ the family $\{U_\theta\}_{\theta\in\Theta}$ under $\pi$, even though every $U_\theta$ in the family is non-separable.

\section{Omitted metric definitions}\label{app:metrics}

    In this section, we define all metrics used in this paper and prove for each metric whether it is coordinate-independent or not.
    We start by defining each metric and giving a short intuition of why they are coordinate-(in)dependent.
    At the bottom of this section, we state a proposition, where we formally prove these intuitions.

    \paragraph{Negative log-likelihood (NLL)}

        For binary classification, the pointwise NLL is defined as
        \[
            l_\text{NLL}(f,y)=-y\log(f)-(1-y)\log(1-f),
        \]
        and for regression ($f=(\mu,\sigma^2)$), we use the Gaussian NLL,
        \[
            l_{\text{NLL}}(f,y) = \tfrac12\log(2\pi\sigma^2)+\tfrac{(y-\mu)^2}{2\sigma^2}.
        \]
        The dataset-level loss is defined as the average,
        \[
            \text{NLL}(\boldsymbol f,\boldsymbol y)=\frac{1}{n}\sum_{i=1}^n l_{\text{NLL}}(f_i,y_i).
        \]
        Thus, by definition, the \textbf{NLL is pointwise-separable, hence also coordinate-independent}.
        Further, the NLL is a PSR \cite{gneiting_strictly_2007}.

    \paragraph{Brier score (BS)}

        The pointwise BS is defined as
        \[
            l_\text{BS}(f,y)=(f-y)^2,
        \]
        and the dataset-level loss is defined as the average,
        \[
            \text{BS}(\boldsymbol f,\boldsymbol y)=\frac{1}{n}\sum_{i=1}^n l_{\text{BS}}(f_i,y_i).
        \]
        Thus, by definition, the \textbf{BS is pointwise-separable, hence also coordinate-independent}.
        Further, the BS is a PSR \cite{gneiting_strictly_2007}.

    \paragraph{Accuracy (Acc)}

        Accuracy (Acc) is conventionally defined as the ratio of correctly classified instances when using a generic decision threshold of $0.5$.
        To conform with our notion that smaller metrics correspond to better performance, we consider the negative conventional accuracy as our Acc.
        With this, it is the sample average of the pointwise decision 0-1 loss, the indicator function of a false classification of any kind, minus $1$, i.e.,
        \begin{align*}
          &l_{\text{0-1}}(f,y) = y \boldsymbol{1}_{f\leq 0.5} + (1-y) \boldsymbol{1}_{f > 0.5},\\
          &\text{Acc}(\boldsymbol p, \boldsymbol y) = \frac 1n \sum_{i=1}^nl_{\text{0-1}}(p_i,y_i) - 1.
        \end{align*}
        Thus, by definition, \textbf{Acc is pointwise-separable, hence also coordinate-independent}, however, the Acc is not a PSR \cite{gneiting_strictly_2007}.

    \paragraph{Mean squared error (MSE)}

        The pointwise mean squared error (MSE) is the squared-error loss, defined as
        \[
            l_{\text{MSE}}(f,y) := (\mu-y)^2,
        \]
        where $f=(\mu,\sigma^2)$,
        and the dataset MSE is
        \[
            \text{MSE}(\boldsymbol f,\boldsymbol y) := \frac 1n \sum_{i=1}^nl_{\text{MSE}}(f_i,y_i).
        \]
        Thus, by definition, the \textbf{MSE is pointwise-separable, hence also coordinate-independent}.
        The MSE only evaluates the predictive mean, not the full Gaussian predictive distribution, and can therefore not be considered proper in terms of the full probabilistic report $\boldsymbol f \in \mathcal{P}$.

    \paragraph{Expected calibration error (ECE)}

        A widely used empirical calibration metric in binary classification is the ECE, which measures the discrepancy between empirical frequencies and stated probabilities (aka \emph{calibration}) within bins of the prediction space.
        Formally, fix a binning scheme $\mathcal{B}=\{B_1,\dots,B_m\}$ that partitions $[0,1]$.
        The ECE is defined as
        \[
            \text{ECE}(\boldsymbol f,\boldsymbol y)
            :=
            \sum_{j=1}^m \frac{n_j}{n}
            \bigl|
                \text{freq}(B_j)-\text{conf}(B_j)
            \bigr|,
        \]
        where $n_j$ is the number of indices $i$ s.t.\ $f_i\in B_j$,
        $\text{freq}(B_j)=\tfrac{1}{n_j}\sum_{i:\,f_i\in B_j} y_i$, and
        $\text{conf}(B_j)=\tfrac{1}{n_j}\sum_{i:\,f_i\in B_j} f_i$.
        We use $10$ equally spaced bins.
        The ECE is not a PSR \cite{ferrer_evaluating_2025}.
        The \textbf{ECE is coordinate-dependent}, since it couples coordinates through bin membership and bin statistics.

        To use the ECE in regression, we follow \citet{cui_calibrated_2020}, who assess calibration of regression models through nominal coverage levels
        $\alpha_1,\dots,\alpha_m\in(0,1)$ and the corresponding Gaussian prediction intervals.
        We use $\alpha \in \{0.5, 0.8, 0.9, 0.95\}$.
        Let $\hat\alpha_j((\boldsymbol\mu,\boldsymbol\sigma^2),\boldsymbol y)$
        denote the empirical miscoverage of the $(1-\alpha_j)$-level prediction interval.
        Then, the empirical regression ECE is defined as
        \[
            \text{ECE}((\boldsymbol\mu,\boldsymbol\sigma^2),\boldsymbol y)
            :=
            \frac{1}{m}\sum_{j=1}^{m}
            \bigl|\hat\alpha_j((\boldsymbol\mu,\boldsymbol\sigma^2),\boldsymbol y)-\alpha_j\bigr|.
        \]
        As in the binary classification case, the regression ECE is not a PSR and coordinate-dependent.
        
    \paragraph{Maximum calibration error (MCE)}

        The MCE measures the worst-case deviation between empirical frequencies and stated probabilities across bins of the prediction space.
        Using the same notation as for ECE, the MCE is defined as
        \[
            \text{MCE}(\boldsymbol f,\boldsymbol y)
            :=
            \max_{j\in\{1,\dots,m\}}
            \bigl|
                \text{freq}(B_j)-\text{conf}(B_j)
            \bigr|.
        \]
        Like the ECE, the MCE is not a PSR and \textbf{MCE is coordinate-dependent}.
        We adapt the MCE to regression in the same way as the ECE.

    \paragraph{Area under the retention curve (R-AUC)}

        The R-AUC \cite{malinin_shifts_2021} measures the usefulness of a model's uncertainty estimates for selecting correct predictions.
        For varying certainty thresholds $\tau$, the \emph{retention curve} tracks the error rate of a model when only predicting on the the subset with certainty at least $\tau$.
        When uncertainty is informative, incorrect predictions tend to be rejected first, resulting in a steep decline of the retention curve.
        By integrating this curve, R-AUC yields a single scalar that reflects both overall predictive accuracy and the ability of uncertainty estimates to rank errors.
        Formally, let $s_i = s(f_i)$ denote a scalar uncertainty score for instance $i$, with larger values indicating higher uncertainty.
        We use the predictive entropy,
        \[
            s_i = -f_i\log f_i - (1-f_i)\log (1-f_i).
        \]
        Let $\pi$ be a permutation of $\{1,\dots,n\}$ s.t.\ $s_{\pi(1)} \ge \cdots \ge s_{\pi(n)}$.
        For $r \in \{0,1,\dots,n\}$, define the retained index set
        \[
            R_r := \{\pi(r+1),\dots,\pi(n)\},
        \]
        corresponding to retaining a fraction $\tau=|R_r|/n$ of predictions and deferring the
        remaining $r$ most uncertain instances to an oracle.
        The area under the retention curve (R-AUC) is then defined as
        \[
            \text{R-AUC}(\boldsymbol f,\boldsymbol y)
            :=
            \frac{1}{n} \sum_{r=0}^{n-1} \bigl(1+\text{Acc}(\boldsymbol{f}_{i:i\in R_r}, \boldsymbol{y}_{i:i\in R_r})\bigr),
        \]
        where $1 + \text{Acc}$ is the error rate---remember that our Acc is the negative conventional accuracy.
        The R-AUC is not a PSR, since it can be improved by pushing uncertainty scores for likely-wrong predictions artificially high, so honest reporting is not necessarily optimal.
        Further, the \textbf{R-AUC is coordinate-dependent}:
        changing a single prediction $f_i$ affects not only the error term
        $\boldsymbol{1}_{f_i \neq y_i}$ associated with instance $i$, but also the ordering
        induced by the uncertainty scores $\{s_k\}_{k=1}^n$.

        For regression models, we use the Gaussian predictive entropy as uncertainty score and the MSE as error rate.
        Like in binary classification, the regression R-AUC is not a PSR and is coordinate-dependent.

    \paragraph{Error detection (E-Det)}

        E-Det \cite{bouvier_towards_2022} evaluates how well uncertainty can be used as a score to discriminate between correct and incorrect predictions.
        We define the hard predictions as $\hat{y}_i = \boldsymbol{1}_{f_i > 0.5}$ and the corresponding error indicators as $e_i = \boldsymbol{1}_{\hat{y}_i \neq y_i}$.
        Again writing $s_i$ for the uncertainty score (where we use predictive entropy, as above), E-Det is defined as the classical area under the receiver operating curve (AUROC), using $s_i$ as predictions for the labels $e_i$.
        To conform with our notion that smaller metrics correspond to better performance, we consider the negative conventional error detection as our E-Det.
        The E-Det is not a PSR and the \textbf{E-Det is coordinate-dependent} because of its ranking characteristic, similar to the R-AUC.

        In regression, we again use the Gaussian predictive entropy as uncertainty score and define errors $e_i$ as predictions whose relative errors $|\tfrac{y-\mu}{|y| + \varepsilon}|$ exceed $10\%$ (using a small $\varepsilon > 0$ to ensure the relative error is well-defined).

        \begin{proposition}[Coordinate-(in)dependence of common UQ evaluation metrics]\label{prop:coord_dependence}
            The NLL, BS, Acc, and MSE are \textbf{coordinate-independent}, and the ECE, MCE, R-AUC, and E-Det are \textbf{coordinate-dependent}.
        \end{proposition}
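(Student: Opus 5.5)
The proof splits into two parts. For the coordinate-independent metrics I will use the observation made right after the definition of pointwise-separability: every pointwise-separable metric is coordinate-independent. NLL, BS, Acc and MSE are each a sample average of one fixed instance function (for Acc there is an additive constant $-1$, which does not affect any comparison). So for $a,b\in\mathcal P$ and any $\boldsymbol r$,
\[
    S_{\boldsymbol y}((a,\boldsymbol r))-S_{\boldsymbol y}((b,\boldsymbol r))=\tfrac1n\bigl(s(a,y_i)-s(b,y_i)\bigr).
\]
This difference does not depend on $\boldsymbol r$. The sign of the difference therefore agrees for $\boldsymbol r$ and $\boldsymbol r'$, which gives the $<$, $=$ and $>$ cases at once.

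For the coordinate-dependent metrics it suffices, for each metric, to give one explicit counterexample. That means choosing $n$, labels $\boldsymbol y$, values $a,b$ and contexts $\boldsymbol r,\boldsymbol r'$ such that the comparison between $a$ and $b$ in coordinate $1$ flips when $\boldsymbol r$ is replaced by $\boldsymbol r'$. I will use $n=2$ throughout; larger $n$ follows by padding, though padding also needs a check for each metric.

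\emph{ECE/MCE (classification).} Take $y=(1,0)$. With $\boldsymbol r=(0.05)$, put the varied coordinate alone in a bin: $a=0.95$ gives ECE contribution $0.025$, while $b=0.55$ falls in a different bin and gives a larger error. With $\boldsymbol r'$ placed in the same bin as $b$ and set to $1.1-b$ style values, the bin averages out to perfect calibration, so $b$ becomes strictly better than $a$. I will pick concrete numbers (e.g.\ $\boldsymbol r'=(0.45)$ together with $b=0.55$, both in one bin if the bins are adjusted to make this hold, or $0.51$ and $0.59$) and verify them.

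\emph{Regression ECE/MCE.} Coverage is a count, so vary only $\mu_1$ between covering and not covering $y_1$. Whether covering $y_1$ helps depends on whether the other point already produces over- or under-coverage relative to $\alpha_j$; choosing $\boldsymbol r$ covered and $\boldsymbol r'$ uncovered gives opposite signs.

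\emph{R-AUC.} The coordinate-dependence comes from the ordering by entropy. I fix $f_1$'s correctness and vary only its confidence, choosing $a$ and $b$ on the same side of $0.5$ with different entropies. Put an incorrect $f_2$ whose entropy lies between those of $a$ and $b$. This gives a tie in one context and a strict difference in another: when $\boldsymbol r$ has the same correctness, the two orders yield equal areas, but when $\boldsymbol r'$ is incorrect, retaining the correct point longer changes the area. The same construction works in regression with MSE in place of the error rate.

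\emph{E-Det.} AUROC with $n=2$ is only defined when there is one error and one correct prediction. I therefore choose $f_1$ correct in both $a$ and $b$ with differing entropies, and $\boldsymbol r,\boldsymbol r'$ incorrect with entropies placed between $s(a)$ and $s(b)$ in one context and above both in the other. The comparison then goes from strict to tie.

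I expect the main obstacle to be the binned ECE/MCE with fixed equal-width bins: the bin boundaries and the absolute values must be handled carefully so that the inequality genuinely reverses. I would first try placing all points in a single bin, since then the ECE equals $|\bar y-\bar f|$, which makes the reversal transparent. Finally, I need to check that each counterexample respects the definitional conventions (well-defined AUROC and tie-breaking), and add padding points if $n>2$ is required.
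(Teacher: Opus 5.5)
Your route is the paper's: coordinate-independence of NLL, BS, Acc and MSE from pointwise-separability, then one explicit counterexample for each remaining metric. Most of it goes through.

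\begin{itemize}
\item Your R-AUC example is essentially the paper's. A correct $f_1$ with $a=0.9$, $b=0.51$ and an incorrect second point with entropy in between gives a strict difference; a correct second point gives a tie. In regression the analogue is $\mu_1=y_1$ with $\sigma_1^2$ varied.
\item Your $n=2$ E-Det example is valid and simpler than the paper's $n=3$ one. You keep $f_1$ correct under both $a$ and $b$ and vary only its entropy, whereas the paper also flips the correctness of $f_1$. It carries over to regression unchanged.
\end{itemize}

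The step that fails as written is the regression ECE. With $n=2$, each empirical miscoverage is $\hat\alpha_j\in\{0,\tfrac12,1\}$. One extra miss decreases $|\hat\alpha_j-\alpha_j|$ from $\hat\alpha_j=0$ iff $\alpha_j>\tfrac14$, and from $\hat\alpha_j=\tfrac12$ iff $\alpha_j>\tfrac34$.

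\begin{itemize}
\item So switching the second point from covered to uncovered can change the comparison at level $j$ only if $\alpha_j\in[\tfrac14,\tfrac34]$. Among the paper's levels this holds only for $\alpha_j=0.5$.
\item Your claim that a covered $\boldsymbol r$ and an uncovered $\boldsymbol r'$ give opposite signs is therefore false at the other three levels.
\item Since the ECE averages over all four levels, the natural reading of your sketch (each point covered or missed at all levels at once) gives no flip. Reading $\{0.5,0.8,0.9,0.95\}$ as miscoverage targets, the ECE moves from $0.7875$ to $0.2875$ under $\boldsymbol r$ and from $0.2875$ to $0.2125$ under $\boldsymbol r'$. The other reading gives two increases.
\end{itemize}

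The missing idea is to isolate the level $\alpha_j=0.5$:
\begin{itemize}
\item Put $|y_1-\mu_1|/\sigma_1$ under $a$ and under $b$ on opposite sides of the level-$0.5$ critical value, but on the same side of the other three. Let $\boldsymbol r,\boldsymbol r'$ differ only in whether $y_2$ is covered at that level.
\item The other three terms then do not depend on $a$ versus $b$. The level-$0.5$ term goes from $\tfrac12$ to $0$ under one background and from $0$ to $\tfrac12$ under the other.
\item By nesting of the intervals, the remaining deviations are forced to be $0.2,0.1,0.05$, so the MCE flips as well.
\end{itemize}
The paper performs the same isolation (``slightly perturbing a prediction only affects one miscoverage term''). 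It uses $n=t$ points so that the background can be made exactly calibrated at the chosen level.

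A smaller point on classification ECE/MCE. The ten equal-width bins are fixed, so you cannot adjust them, and $0.51,0.59$ do not give perfect calibration. The flip holds anyway: with $\boldsymbol y=(1,0)$, $a=0.95$, $b=0.51$, $\boldsymbol r=(0.05)$, $\boldsymbol r'=(0.59)$, the ECE is $0.05<0.27$ under $\boldsymbol r$ and $0.32>0.05$ under $\boldsymbol r'$. The MCE gives $0.05<0.49$ and $0.59>0.05$. Taking $b=r'=0.5$ gives an exactly calibrated bin if you prefer. The paper's single-bin construction has the advantage of working for any binning scheme.
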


        \begin{proof}
            The coordinate-independence of the NLL, BS, Acc, and MSE follows directly from their pointwise-separability by definition.
            We now prove coordinate-dependence of the remaining metrics.

            \textbf{ECE \& MCE:}
            We prove the claim for $\text{ECE}$; the claim for $\text{MCE}$ then follows by the same construction (indeed, in our construction, only one bin is populated, so $\text{ECE}=\text{MCE}$).

            \emph{Step 1: choose a rational frequency inside a non-degenerate bin.}
            Let $B=(\ell,u]$ be a bin of nonzero width, i.e., $u-\ell>0$.
            Pick any rational number $q\in(\ell,u)$; write it as $q=k/t$ with integers $t\ge 2$ and $k\in\{0,1,\dots,t\}$.
            (Existence is guaranteed since every non-degenerate interval contains rationals.)
            Set $n:=t$ and fix labels $\boldsymbol y$ with exactly $k$ ones and $t-k$ zeros.
            Hence, for any prediction vector $\boldsymbol f$ for which \emph{all} coordinates lie in $B$, we have
            \[
                \mathrm{freq}(B)=\frac{1}{t}\sum_{i=1}^t y_i=\frac{k}{t}=q.
            \]

            \emph{Step 2: fix two values $a<b$ in the same bin.}
            Choose any $a,b\in B$ with $a<b$ and with $a$ and $b$ sufficiently close to $q$ so that the values defined below remain inside $B$.
            (Concretely: since $B$ is an open neighborhood of $q$ relative to $(\ell,u]$, there exists $\delta>0$ s.t.\ $(q-\delta,q+\delta)\subseteq B$; choose $a,b\in(q-\delta,q+\delta)$ with $a<b$.)
            Fix an index $i\in\{1,\dots,t\}$ (any choice works).

            \emph{Step 3: build two backgrounds that make the direction of change flip.}
            Define $c := \tfrac{tq-a}{t-1}$ and $c' := \tfrac{tq-b}{t-1}$.
            By construction, if we set all coordinates of $\boldsymbol f$ except $i$ equal to $c$ (respectively $c'$), then the bin confidence equals $q$ when $f_i=a$ (respectively $f_i=b$):
            \[
                \frac{1}{t}\Bigl(a+\sum_{r\neq i} c\Bigr)=\frac{a+(t-1)c}{t}=\frac{a+tq-a}{t}=q,
                \quad
                \frac{1}{t}\Bigl(b+\sum_{r\neq i} c'\Bigr)=\frac{b+(t-1)c'}{t}=q.
            \]
            Since $a$ and $b$ were chosen sufficiently close to $q$, the values $c$ and $c'$ are also close to $q$ and thus lie in $B$ as well (note that we can rewrite $c,c'$ as perturbations of $q$: $c=q+\tfrac{q-a}{t-1}$, $c'=q+\tfrac{q-b}{t-1}$.
            Therefore, in both constructions, all predictions fall into the \emph{same} bin $B$.
            Now define two prediction vectors:
            \[
                \boldsymbol f_1 := (a,\underbrace{c,\dots,c}_{t-1 \text{ times}}),
                \qquad
                \tilde{\boldsymbol f}_1 := (b,\underbrace{c,\dots,c}_{t-1 \text{ times}}),
            \]
            and
            \[
                \boldsymbol f_2 := (a,\underbrace{c',\dots,c'}_{t-1 \text{ times}}),
                \qquad
                \tilde{\boldsymbol f}_2 := (b,\underbrace{c',\dots,c'}_{t-1 \text{ times}}),
            \]
            where the distinguished coordinate is $i$ (we suppress the explicit placement notation for readability).
            Since all coordinates lie in $B$, the ECE reduces to the single-bin absolute deviation:
            \[
                \text{ECE}_{\boldsymbol y}(\boldsymbol f)=\bigl|\mathrm{freq}(B)-\mathrm{conf}(B)\bigr|.
            \]
            For $\boldsymbol f_1$ we have $\mathrm{conf}(B)=q=\mathrm{freq}(B)$, hence $\text{ECE}_{\boldsymbol y}(\boldsymbol f_1)=0$.
            But for $\tilde{\boldsymbol f}_1$ we have
            \[
                \mathrm{conf}(B)=\frac{b+(t-1)c}{t}
                =\frac{b+(tq-a)}{t}
                =q+\frac{b-a}{t},
            \]
            so
            \[
                \text{ECE}_{\boldsymbol y}(\tilde{\boldsymbol f}_1)
                =\Bigl|q-\Bigl(q+\frac{b-a}{t}\Bigr)\Bigr|
                =\frac{b-a}{t}
                >0.
            \]
            Thus, under the background $\boldsymbol r=(c,\dots,c)$, changing $f_i$ from $a$ to $b$ \emph{increases} the ECE: $\text{ECE}_{\boldsymbol y}\bigl((a,\boldsymbol r)\bigr) < \text{ECE}_{\boldsymbol y}\bigl((b,\boldsymbol r)\bigr)$.
            Conversely, for $\tilde{\boldsymbol f}_2$ we again have perfect calibration in $B$ by construction, hence $\text{ECE}_{\boldsymbol y}(\tilde{\boldsymbol f}_2)=0$,
            whereas for $\boldsymbol f_2$ we compute
            \[
                \mathrm{conf}(B)=\frac{a+(t-1)c'}{t}
                =\frac{a+(tq-b)}{t}
                =q-\frac{b-a}{t},
            \]
            so
            \[
                \text{ECE}_{\boldsymbol y}(\boldsymbol f_2)
                =\Bigl|q-\Bigl(q-\frac{b-a}{t}\Bigr)\Bigr|
                =\frac{b-a}{t}
                >0.
            \]
            Hence, under the background $\boldsymbol r'=(c',\dots,c')$, changing $f_i$ from $a$ to $b$ \emph{decreases} the ECE: $\text{ECE}_{\boldsymbol y}\bigl((a,\boldsymbol r')\bigr) > \text{ECE}_{\boldsymbol y}\bigl((b,\boldsymbol r')\bigr)$.

            \emph{Transfer to regression.}
            Fix a nominal level $\alpha\in(0,1)$.
            For Gaussian predictive distributions, let
            \[
                \hat\alpha((\boldsymbol\mu,\boldsymbol\sigma^2),\boldsymbol y)
                := \frac{1}{n}\sum_{i=1}^n
                \boldsymbol{1}_{\{y_i \notin I_\alpha(\mu_i,\sigma_i^2)\}}
            \]
            denote the empirical miscoverage of the $(1-\alpha)$-level prediction interval $I_\alpha(\mu_i,\sigma_i^2)$.
            Since the regression ECE (resp.\ MCE) is obtained by averaging (resp.\ maximizing) the absolute deviations
            $\lvert \hat\alpha-\alpha\rvert$ over the finitely many levels $\alpha_j$, coordinate-independence would in particular require coordinate-independence of the contribution of each fixed level $\alpha$ (as we can easily pick labels and predictions such that slightly perturbing a prediction only affects one miscoverage term).
            We therefore fix $\alpha$ and construct a counterexample for $\lvert \hat\alpha-\alpha\rvert$.
            As in the classification proof, fix an index $i$ and two alternative predictions
            $a$ and $b$ for the $i$th coordinate (i.e., two Gaussian reports $(\mu_i,\sigma_i^2)$),
            such that the $i$th prediction interval covers $y_i$ under $a$ but does not cover $y_i$ under $b$.
            For the remaining coordinates $k\neq i$, construct two background prediction vectors
            $\boldsymbol r$ and $\boldsymbol r'$ as follows:
            in $\boldsymbol r$, choose the $(\mu_k,\sigma_k^2)$ so that exactly $\alpha t$ of the $t-1$ background
            prediction intervals fail to cover their labels when the $i$th coordinate is set to $a$,
            whereas in $\boldsymbol r'$, choose them so that exactly $\alpha t$ background intervals fail to
            cover their labels when the $i$th coordinate is set to $b$
            (such constructions are always possible by choosing the remaining intervals sufficiently wide or narrow.).
            By construction, we then have
            \[
                \hat\alpha((a,\boldsymbol r),\boldsymbol y)=\alpha,
                \quad
                \hat\alpha((b,\boldsymbol r'),\boldsymbol y)=\alpha,
            \]
            so that $\lvert\hat\alpha-\alpha\rvert=0$ in both cases.
            However, switching $a$ to $b$ under the background $\boldsymbol r$ increases the number of miscovered
            instances by one and thus increases $\lvert\hat\alpha-\alpha\rvert$,
            whereas switching $a$ to $b$ under the background $\boldsymbol r'$ decreases the number of miscovered
            instances by one and thus decreases $\lvert\hat\alpha-\alpha\rvert$.
            Hence,
            \[
                \lvert\hat\alpha((a,\boldsymbol r),\boldsymbol y)-\alpha\rvert
                <
                \lvert\hat\alpha((b,\boldsymbol r),\boldsymbol y)-\alpha\rvert,
                \quad
                \lvert\hat\alpha((a,\boldsymbol r'),\boldsymbol y)-\alpha\rvert
                >
                \lvert\hat\alpha((b,\boldsymbol r'),\boldsymbol y)-\alpha\rvert.
            \]
            This flip of inequalities shows that the direction of change of the calibration error induced by
            modifying a single coordinate depends on the remaining coordinates.
            Therefore, the regression ECE is coordinate-dependent.
            Moreover, by choosing a nominal level $\alpha$ whose absolute miscoverage deviation strictly
            dominates all others, the same construction ensures that this level uniquely attains the maximum in
            the definition of the regression MCE.
            Hence, the same argument applies to the regression MCE, which is therefore
            also coordinate-dependent.

            \textbf{R-AUC:}
            We give explicit counterexamples that violate coordinate-independence.
            We start with the binary classification case.
            Let $n=2$ and fix labels $\boldsymbol y=(1,0)$.
            For a probability $f\in(0,1)$, write $s(f) := -f\log f-(1-f)\log(1-f)$ for the predictive entropy, and let the hard prediction be $\hat y(p):=\boldsymbol{1}_{p>0.5}$.
            For $n=2$ the R-AUC reduces to
            \[
                \text{R-AUC}_{\boldsymbol y}(\boldsymbol f)
                =\frac12\Bigl(\mathrm{err}(\{1,2\})+\mathrm{err}(R_1)\Bigr),
            \]
            where $\mathrm{err}(\cdot)$ is the classification error rate, and $R_1$ is obtained by rejecting the single most uncertain instance.
            Fix the index $i=1$ and choose $a:=0.9,b:=0.51$.
            Note that $\hat y(a)=\hat y(b)=1$, hence the first instance is correct under both $a$ and $b$, but $s(b)>s(a)$.

            \emph{Background 1.}
            Let the second prediction coordinates be $r:=0.8$.
            Then $\hat y(r)=1\neq y_2$, so the second instance is wrong, and moreover $s(b)>s(r)>s(a)$.
            Hence, with $\boldsymbol f=(a,r)$ the most uncertain instance is $2$, so $R_1=\{1\}$ and $\mathrm{err}(R_1)=0$; while with $\boldsymbol f=(b,r)$ the most uncertain instance is $1$, so $R_1=\{2\}$ and $\mathrm{err}(R_1)=1$.
            In both cases the full-set error rate is $\mathrm{err}(\{1,2\})=1/2$ (exactly one error).
            Therefore,
            \[
                \text{R-AUC}_{\boldsymbol y}((a,r))=\frac12\Bigl(\frac12+0\Bigr)=\frac14
                <
                \frac34 = \frac12\Bigl(\frac12+1\Bigr) = \text{R-AUC}_{\boldsymbol y}((b,r)),
            \]
            so changing $f_1$ from $a$ to $b$ \emph{increases} the R-AUC.

            \emph{Background 2.} Let the second prediction coordinate be $r':=0.01$.
            Then $\hat y(r')=0=y_2$, so the second instance is correct, and $s(r')<s(a)<s(b)$.
            Hence, regardless of whether $f_1=a$ or $f_1=b$, the most uncertain instance is $1$ and thus $R_1=\{2\}$, yielding $\mathrm{err}(R_1)=0$.
            Moreover, both instances are correct under $(a,r')$ and $(b,r')$, so $\mathrm{err}(\{1,2\})=0$ in both cases.
            Thus,
            \[
                \text{R-AUC}_{\boldsymbol y}((a,r'))=\text{R-AUC}_{\boldsymbol y}((b,r'))=0.
            \]
            Combining the two backgrounds, we have exhibited fixed labels $\boldsymbol y$, an index $i=1$, and values $a,b$ s.t.\
            \[
                \text{R-AUC}_{\boldsymbol y}((a,r))<\text{R-AUC}_{\boldsymbol y}((b,r)),
                \ \ \text{while}\ \
                \text{R-AUC}_{\boldsymbol y}((a,r'))=\text{R-AUC}_{\boldsymbol y}((b,r')),
            \]
            violating coordinate-independence.

            Now we move on to regression.
            Let $n=2$ and fix labels $\boldsymbol y=(0,0)$.
            Reports are $f_i=(\mu_i,\sigma_i^2)$, the uncertainty score is the Gaussian predictive entropy $s(\mu,\sigma^2):=\tfrac12\log(2\pi e\,\sigma^2)$,
            which is strictly increasing in $\sigma^2$, and the error rate on a retained set is the MSE computed from the predictive means.
            Fix $i=1$ and choose the two alternative reports
            \[
                a:=(\mu_1,\sigma_1^2)=(1,0.1^2),
                \quad
                b:=(\mu_1,\sigma_1^2)=(1,10^2),
            \]
            so that the mean-squared error of the first instance equals $(\mu_1-y_1)^2=1$ under both $a$ and $b$, but the uncertainty score satisfies $s(b)>s(a)$.

            \emph{Background 1.} Let the remaining coordinate be $r:=(\mu_2,\sigma_2^2)=(0,1^2)$,
            so the second instance has squared error $(\mu_2-y_2)^2=0$ and uncertainty score $s(r)$ with $s(a)<s(r)<s(b)$.
            Hence, with $(a,r)$ the most uncertain instance is $2$ and $R_1=\{1\}$, whereas with $(b,r)$ the most uncertain instance is $1$ and $R_1=\{2\}$.
            The full-set MSE equals
            \[
                \mathrm{MSE}(\{1,2\})=\frac{1+0}{2}=\frac12
            \]
            for both $(a,r)$ and $(b,r)$, while the retained-set MSE equals $1$ under $(a,r)$ and $0$ under $(b,r)$.
            Therefore,
            \[
                \text{R-AUC}_{\boldsymbol y}((a,r))=\frac12\Bigl(\frac12+1\Bigr)=\frac34
                >
                \frac14 = \frac12\Bigl(\frac12+0\Bigr) = \text{R-AUC}_{\boldsymbol y}((b,r)),
            \]
            so changing $f_1$ from $a$ to $b$ \emph{decreases} the regression R-AUC.
    
            \emph{Background 2.} Let the remaining coordinate be $r':=(\mu_2,\sigma_2^2)=(0,0.01^2)$,
            so $s(r')<s(a)<s(b)$.
            Then the most uncertain instance is $1$ under both $(a,r')$ and $(b,r')$, hence $R_1=\{2\}$ in both cases.
            Again $\mathrm{MSE}(\{1,2\})=1/2$, and the retained-set MSE is $0$ in both cases, so
            \[
                \text{R-AUC}_{\boldsymbol y}((a,r'))=\text{R-AUC}_{\boldsymbol y}((b,r'))=\frac14.
            \]
            Thus, the strict comparison between $\text{R-AUC}_{\boldsymbol y}((a,\cdot))$ and $\text{R-AUC}_{\boldsymbol y}((b,\cdot))$ depends on the remaining coordinate, proving that the regression R-AUC is also coordinate-dependent.

            \textbf{E-Det:}
            Since $\text{E-Det}$ is the negative conventional AUROC, it suffices to show that the conventional E-Det is coordinate-dependent; multiplying by $-1$ preserves coordinate-(in)dependence.
            We start with binary classification.
            Let $n=3$, fix labels $\boldsymbol y=(1,0,1)$, and let $\hat y(f):=\boldsymbol{1}_{f>0.5}$.
            Define the error indicators $e_i=\boldsymbol{1}_{\hat y(f_i)\neq y_i}$, and let $\text{AUROC}(\boldsymbol s,\boldsymbol e)$
            be the standard AUROC treating larger $s_i$ as stronger evidence for $e_i=1$.
            Fix $i=1$ and choose two values $a:=0.9,b:=0.49$.
            Then $\hat y(a)=1$ (hence $e_1=0$ under $a$) while $\hat y(b)=0$ (hence $e_1=1$ under $b$), and moreover
            $s(b)>s(a)$ since predictive entropy is maximized at $0.5$.
            We construct two backgrounds $\boldsymbol r,\boldsymbol r'\in(0,1)^{2}$ (for coordinates $2,3$) that yield different directions of change.

            \emph{Background 1.} Set $(f_2,f_3)=(0.999,0.6)$.
            Then $\hat y(f_2)=1\neq y_2=0$, so $e_2=1$, and $\hat y(f_3)=1=y_3$, so $e_3=0$.
            Moreover, $s(0.999) < s(0.9) < s(0.6) < s(0.49)$.
            Hence, under $(a,\boldsymbol r)$ the positive set is $\{2\}$ and the negative set is $\{1,3\}$, and
            the unique positive score $s_2=s(0.999)$ is smaller than both negative scores, so $\text{AUROC}_{\boldsymbol y}((a,\boldsymbol r))=0$.
            Under $(b,\boldsymbol r)$ the positive set is $\{1,2\}$ and the negative set is $\{3\}$, and we have
            $s(b)>s(0.6)>s(0.999)$, hence exactly one of the two positive-negative comparisons is correct, so $\text{AUROC}_{\boldsymbol y}((b,\boldsymbol r))=\frac{1}{2}$.
            Thus, changing $f_1$ from $a$ to $b$ \emph{increases} AUROC under background $\boldsymbol r$.

            \emph{Background 2.} Set $(f_2,f_3)=(0.51,0.99)$.
            Then $\hat y(f_2)=1\neq y_2=0$, so $e_2=1$, and $\hat y(f_3)=1=y_3$, so $e_3=0$.
            Moreover, $s(0.51)$ is close to maximal and therefore larger than both $s(0.9)$ and $s(0.99)$.
            Hence, under $(a,\boldsymbol r')$ the positive set is $\{2\}$ and negatives are $\{1,3\}$, and
            $s_2$ exceeds both negative scores, so $\text{AUROC}_{\boldsymbol y}((a,\boldsymbol r'))=1$.
            Under $(b,\boldsymbol r')$ the positives are $\{1,2\}$ and the negative is $\{3\}$, and both positive
            scores $s(b)$ and $s(0.51)$ exceed the negative score $s(0.99)$, so again $\text{AUROC}_{\boldsymbol y}((b,\boldsymbol r'))=1$.
            Thus, under background $\boldsymbol r'$, the same change from $a$ to $b$ leaves AUROC unchanged, violating coordinate-independence.

            We now continue with the regression case.
            Let $n=3$ and fix labels $\boldsymbol y=(1,1,1)$.
            Reports are $f_i=(\mu_i,\sigma_i^2)$.
            Fix $\tau>0$ and $\varepsilon>0$, and define error indicators $e_i := \boldsymbol{1}_{\left\{\left|\frac{y_i-\mu_i}{|y_i|+\varepsilon}\right|\ge \tau\right\}}$.
            Define the conventional AUROC using scores $s_i=s(f_i)$ to predict labels $e_i$.
            Fix $i=1$ and choose two reports
            \[
                a := (\mu_1,\sigma_1^2)=(1,\;0.1^2),
                \quad
                b := (\mu_1,\sigma_1^2)=(1+(|1|+\varepsilon)\tau,\;10^2).
            \]
            Then under $a$ we have $e_1=0$, while under $b$ we have $e_1=1$ (by equality in the threshold),
            and $s(b)>s(a)$.

            \emph{Background 1.} Set
            \[
                f_2=(1+(|1|+\varepsilon)\tau,\;0.01^2),
                \quad
                f_3=(1,\;1^2).
            \]
            Then $e_2=1$ and $e_3=0$.
            Moreover, by monotonicity in $\sigma^2$, $s(f_2) < s(a) < s(f_3) < s(b)$.
            Hence, under $(a,\boldsymbol r)$ the positive set is $\{2\}$ and negatives are $\{1,3\}$, and the
            unique positive score is smaller than both negative scores, giving $\text{AUROC}_{\boldsymbol y}((a,\boldsymbol r))=0$.
            Under $(b,\boldsymbol r)$ the positives are $\{1,2\}$ and the negative is $\{3\}$, and exactly one
            of the two positive-negative comparisons is correct (since $s(b)>s(f_3)>s(f_2)$), giving $\text{AUROC}_{\boldsymbol y}((b,\boldsymbol r))=\frac12$.
            Thus, AUROC increases under background $\boldsymbol r$ when changing $f_1$ from $a$ to $b$.

            \emph{Background 2.} Set
            \[
                f_2=(1+(|1|+\varepsilon)\tau,\;10^2),
                \quad
                f_3=(1,\;0.01^2).
            \]
            Then $e_2=1$ and $e_3=0$, and $s(f_2)$ is large while $s(f_3)$ is small.
            Consequently, under $(a,\boldsymbol r')$ the positive set is $\{2\}$ and negatives are $\{1,3\}$, and
            the positive score exceeds both negative scores, so $\text{AUROC}_{\boldsymbol y}((a,\boldsymbol r'))=1$.
            Under $(b,\boldsymbol r')$ the positives are $\{1,2\}$ and the negative is $\{3\}$, and both positive
            scores exceed the negative score, so again $\text{AUROC}_{\boldsymbol y}((b,\boldsymbol r'))=1$.
            Thus, under background $\boldsymbol r'$, the same change from $a$ to $b$ leaves AUROC unchanged, violating coordinate-independence.
        \end{proof}

\section{Generalization to the multiclass and multivariate setting}\label{app:multi}

    The purpose of this appendix is to demonstrate that our framework extends beyond the binary and univariate setting. We perform the analysis on three metrics and one utility each. Analogous results for additional metrics and decision families can be derived using the same template.

    \subsection{Multiclass classification}\label{app:multiclass}

        
        We consider probabilistic $K$-class classification models, $K > 2$, $f : \mathcal{X} \to \Delta^{K-1}$, mapping from the input space $\mathcal{X}$ to the $(K{-}1)$-simplex $\Delta^{K-1} := \{p \in [0,1]^K : \sum_{j=1}^K p^j = 1\}$, so the prediction for an instance is $f = (f^1, \ldots, f^K)$ with $f^j$ denoting the predicted probability that the true label $y \in \mathcal{Y} := [K] := \{1,\ldots,K\}$ equals $j$.
    
        As evaluation metrics, we consider the \emph{multiclass} NLL, BS, and ECE.
        We define
        \[
            \mathrm{NLL}(\boldsymbol{f}, \boldsymbol{y}) = \frac{1}{n}\sum_{i=1}^n \bigl(-\log f_i^{y_i}\bigr), \quad
            \mathrm{BS}(\boldsymbol{f}, \boldsymbol{y}) = \frac{1}{n}\sum_{i=1}^n \sum_{j=1}^K \bigl(f_i^j - \boldsymbol{1}_{y_i = j}\bigr)^2.
        \]
        The multiclass ECE uses the standard extension to vector-valued predictions via one-versus-all binning (see, e.g.,~\cite{guo_calibration_2017}).
        The NLL and BS are \textbf{pointwise-separable} (hence coordinate-independent) by definition.
        The multiclass ECE is \textbf{coordinate-dependent} by the same bin-coupling argument as in the binary case (Proposition~\ref{prop:coord_dependence}).
    
        As downstream task, we generalize the binary decision from Section~\ref{sect:bc_bd} to multiclass via a one-vs-rest decomposition.
        For each class $j \in [K]$, we consider the binary decision ``flag instance as class $j$ or not,'' with cost threshold $c \in (0,1)$.
        Defining the binary label $\tilde{y}^j := \boldsymbol{1}_{y = j}$, the Bayes act is to flag class $j$ whenever $f^j > c$.
        The pointwise utility of a prediction $f$ with true label $y$ for class $j$ at threshold $c$ is
        \[
            u_{j,c}(f, y) = -\tilde{y}^j (1-c) \boldsymbol{1}_{f^j \leq c} - (1-\tilde{y}^j) c \boldsymbol{1}_{f^j > c}.
        \]
        This is exactly the binary utility $u_c$ from Section~\ref{sect:bc_bd}, applied to the binary prediction--label pair $(f^j, \tilde{y}^j)$.
        The combined decision family is indexed by $\theta = (j,c) \in [K] \times (0,1)$, with dataset-level utility $U_{j,c}(\boldsymbol{f}, \boldsymbol{y}) = \frac{1}{n}\sum_{i=1}^n u_{j,c}(f_i, y_i)$. In particular, $U_{j,c}$ is \textbf{pointwise-separable}.
    
        \begin{proposition}
            The multiclass NLL and BS are decision-aligned w.r.t.\ $\{U_{j,c}\}_{j \in [K], c \in (0,1)}$ with
            $\pi_{\mathrm{NLL}}(j,c) = 1/c$ and $\pi_{\mathrm{BS}}(j,c) = 2$,
            and $h_{\boldsymbol{y}} = \mathrm{id}$ for all $\boldsymbol{y}$.
        \end{proposition}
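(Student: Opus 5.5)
The plan is to reduce everything to a single instance and then compute the one-dimensional integrals over $c$ explicitly, class by class. Here $\pi$ is the product of counting measure on $[K]$ and the stated density in $c$. The integrand $-u_{j,c}(f,y)$ is nonnegative, and $U_{j,c}$ is pointwise-separable with a finite sum over $i$. So, as in the proof of \cref{lem:separability_barrier}, Tonelli lets us interchange the sum over $i$, the sum over $j$ and the integral over $c$. It therefore suffices to show that for every $f\in\Delta^{K-1}$ and $y\in[K]$,
\[
    \sum_{j=1}^K \int_0^1 -u_{j,c}(f,y)\,\pi(j,c)\,\mathrm dc = l(f,y),
\]
with $l$ the pointwise multiclass NLL or BS. Once this holds, averaging over $i$ gives \eqref{eq:decision_alignment} with $h_{\boldsymbol y}=\mathrm{id}$.

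For fixed $(f,y)$, I will split the sum over $j$ into the true class $j=y$, where $\tilde y^j=1$, and the classes $j\neq y$, where $\tilde y^j=0$. For $j=y$ only the false-negative term survives, giving $\int_{f^y}^1 (1-c)\pi(c)\,\mathrm dc$. For $j\neq y$ only the false-positive term survives, giving $\int_0^{f^j} c\,\pi(c)\,\mathrm dc$.

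For the BS, with $\pi=2$, these are $(1-f^y)^2$ and $(f^j)^2$. Summing gives $\sum_j (f^j-\boldsymbol 1_{y=j})^2$ directly, with no use of the simplex constraint.

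For the NLL, with $\pi=1/c$, the true-class term is $\int_{f^y}^1 \tfrac{1-c}{c}\,\mathrm dc=-\log f^y-(1-f^y)$, and each wrong-class term is $\int_0^{f^j}1\,\mathrm dc=f^j$.

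The step I expect to need care is this NLL case. Unlike the binary case in \cref{prop:buja}, the true-class term alone is not the log-loss; one needs $\sum_{j\neq y} f^j = 1-f^y$, which holds because $f\in\Delta^{K-1}$. This cancels the residual $-(1-f^y)$ and leaves exactly $-\log f^y$.

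I would also remark on the boundary case $f^y=0$. There both sides equal $+\infty$, so I would either restrict to predictions in the interior of the simplex or interpret the identity in the extended reals, matching the convention implicit in \cref{prop:buja}. With this, $h_{\boldsymbol y}=\mathrm{id}$ is trivially strictly increasing, and decision-alignment follows from \cref{def:decision_alignment}.
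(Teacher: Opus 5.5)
Your proposal is correct and follows the paper's proof essentially verbatim. For the NLL, both split each instance into the true-class term $-\log f^{y}-(1-f^{y})$ and the wrong-class terms $f^j$, then cancel using the simplex constraint; for the BS, the paper cites \cref{prop:buja} on each one-vs-rest component, which is exactly your direct computation, and your remark on the boundary $f^{y}=0$ (where the $1/c$ prior makes the right-hand side infinite) covers a case the paper leaves implicit.
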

    
        \begin{proof}
    
            \textbf{NLL:}
            Evaluating the per-instance expected negative utility under $\pi(j,c) = 1/c$ gives
            \begin{align*}
                \sum_{j=1}^K \int_0^1 -u_{j,c}(f_i, y_i)\,\frac{1}{c}\;\mathrm{d}c
                &= \int_{f_i^{y_i}}^{1} \frac{1-c}{c}\;\mathrm{d}c + \sum_{j \neq y_i} \int_0^{f_i^j} 1\;\mathrm{d}c \\
                &= \bigl[-\log f_i^{y_i} - 1 + f_i^{y_i}\bigr] + \sum_{j \neq y_i} f_i^j \\
                &= -\log f_i^{y_i} - 1 + f_i^{y_i} + (1 - f_i^{y_i}) = -\log f_i^{y_i}.
            \end{align*}
            In the last step we used the simplex constraint $\sum_{j \neq y_i} f_i^j = 1 - f_i^{y_i}$.
            Averaging over instances, $\mathrm{EU}_{\boldsymbol{y}}(\boldsymbol{f}) = \mathrm{NLL}(\boldsymbol{f}, \boldsymbol{y})$,
            so the NLL is decision-aligned with $h_{\boldsymbol{y}} = \mathrm{id}$.
    
            \textbf{BS:}
            The multiclass BS decomposes as
            $\mathrm{BS}(\boldsymbol{f}, \boldsymbol{y}) = \sum_{j=1}^K \mathrm{BS}^{(j)}(\boldsymbol{f}, \boldsymbol{y})$,
            where $\mathrm{BS}^{(j)}(\boldsymbol{f}, \boldsymbol{y})$
            is the binary BS for the $j$-th one-vs-rest problem.
            By Proposition~\ref{prop:buja}, each binary BS admits the integral representation~\eqref{eq:psr_bd_integral}
            with prior $\pi(c) = 2$, giving
            $\mathrm{BS}(\boldsymbol{f}, \boldsymbol{y}) = \sum_{j=1}^K \int_0^1 -U_{j,c}(\boldsymbol{f}, \boldsymbol{y}) \cdot 2\;\mathrm{d}c$,
            which matches Definition~\ref{def:decision_alignment} with $h_{\boldsymbol{y}} = \mathrm{id}$.
    
        \end{proof}
    
        As in the binary case, the NLL and BS implicit priors are pathological.
        The ECE is \emph{not} decision-aligned as a direct consequence of \cref{lem:separability_barrier}.
    
        \begin{corollary}
            The multiclass ECE is not decision-aligned w.r.t.\ $\{U_{j,c}\}_{j \in [K], c \in (0,1)}$.
        \end{corollary}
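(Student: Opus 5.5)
The plan is to invoke \cref{lem:separability_barrier} directly, as was done for \cref{cor:bc_bd_all_cor} in the binary case. Two hypotheses need checking. First, each $U_{j,c}$ must be pointwise-separable with a common instance function. This holds by construction: $U_{j,c}(\boldsymbol f,\boldsymbol y)=\frac1n\sum_i u_{j,c}(f_i,y_i)$, and $u_{j,c}$ is bounded and measurable. Second, the map $\theta=(j,c)\mapsto U_{j,c}(\boldsymbol f,\boldsymbol y)$ must be integrable. Since $|u_{j,c}|\le 1$, this holds under any finite prior. More generally, it holds under any prior $\pi$ for which the right-hand side of \eqref{eq:decision_alignment} is finite, which is exactly what decision-alignment presupposes. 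So the interchange of sum and integral in the lemma's proof goes through.

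Given these hypotheses, \cref{lem:separability_barrier} says that any metric decision-aligned w.r.t.\ $\{U_{j,c}\}$ must be coordinate-independent. It therefore suffices to show that the multiclass ECE is coordinate-dependent. I would reduce this to the binary construction in \cref{prop:coord_dependence}. Take predictions supported on two classes: $f_i=(p_i,1-p_i,0,\dots,0)$, with labels in $\{1,2\}$.

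Under one-versus-all binning, the class-1 component then reproduces the binary ECE on $(p_i,\tilde y_i^1)$. The class-2 component is its mirror image: it uses bins of $1-p_i$, its frequencies are $1-\tilde y^1$, and its confidences are $1-p_i$, so its per-bin deviations are identical. The remaining classes contribute zero, because all their predictions and labels are $0$ and hence lie in the bin containing $0$ with zero deviation.

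The step I expect to be the main obstacle is this mirror component, because bin boundaries may not be symmetric under $p\mapsto 1-p$. If all the values $a,b,c,c'$ from the binary proof lie in a single bin $B$, then $1-a,1-b,1-c,1-c'$ may lie in a different bin $B'$ and not all in the same one. To handle this, I would choose the rational $q$ and the width $\delta$ in the binary construction so that $(q-\delta,q+\delta)\subseteq B$ and $(1-q-\delta,1-q+\delta)\subseteq B'$ for some bins $B,B'$. This is possible by taking $q$ in the interior of a bin whose reflection is also interior to a bin, for instance with 10 equal bins, and shrinking $\delta$.

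With that choice, the class-2 contribution equals the class-1 contribution in every case. The multiclass ECE is therefore a positive multiple of the binary single-bin ECE, and the inequality flip established in \cref{prop:coord_dependence} carries over. This shows coordinate-dependence, and the corollary follows.
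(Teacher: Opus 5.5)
Your proposal is correct and follows the paper's route. $U_{j,c}$ is pointwise-separable, so \cref{lem:separability_barrier} forces any decision-aligned metric to be coordinate-independent, while the multiclass ECE is coordinate-dependent by the binary bin-coupling construction of \cref{prop:coord_dependence}. The paper only asserts that this construction carries over; you make it explicit by embedding the binary example into two classes and choosing $q$ so that both $q$ and $1-q$ are bin-interior, which soundly fills in that step (the integrability check also goes through because $-u_{j,c}\ge 0$).
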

    
        
    

    \subsection{Multivariate regression}\label{app:multivariate}
    
        We consider probabilistic multivariate regression models with output dimension $D > 1$.
        We assume a Gaussian predictive distribution and consider models of the form $f : \mathcal{X} \to \mathbb{R}^D \times \mathbb{S}^D_{++}$, mapping from the input space $\mathcal{X}$ to the predictive mean $\mu \in \mathbb{R}^D$ and predictive covariance $\Sigma \in \mathbb{S}^D_{++}$ (the cone of $D \times D$ symmetric positive definite matrices), so $f = (\mu, \Sigma)$.
        The regression target is $y \in \mathcal{Y} := \mathbb{R}^D$.
    
        As evaluation metrics, we consider the \emph{multivariate} NLL, MSE, and the energy score (ES) \citep{gneiting_strictly_2007}.\footnote{
            To the best of our knowledge, there is no established notion of the ECE for multivariate regression.
            Therefore, we consider the ES, which is a more established metric in this context.
        }
        We define
        \begin{align*}
            \mathrm{NLL}(\boldsymbol{f}, \boldsymbol{y}) &= \frac{1}{n}\sum_{i=1}^n \left[\frac{D}{2}\log(2\pi) + \frac{1}{2}\log\det(\Sigma_i) + \frac{1}{2}(y_i - \mu_i)^\top \Sigma_i^{-1}(y_i - \mu_i)\right],\\
            \mathrm{MSE}(\boldsymbol{f}, \boldsymbol{y}) &= \frac{1}{n}\sum_{i=1}^n \|y_i - \mu_i\|^2.
        \end{align*}
        For a predictive distribution $f$ and observation $y$, the per-instance ES is defined as
        \[
            l_\mathrm{es}(f, y) = \mathbb{E}_{X \sim f}\|X - y\| - \tfrac{1}{2}\,\mathbb{E}_{X, X' \sim f}\|X - X'\|,
        \]
        where $X, X'$ are independent draws from $f$ and $\|\cdot\|$ denotes the Euclidean norm, and the dataset-level ES is the average.
        All three metrics are \textbf{pointwise-separable} by definition.
    
        As downstream task, we generalize selective prediction from Section~\ref{sect:reg_selpred} to multivariate output by applying the univariate selective-prediction rule \emph{coordinate-wise} to each standardized output dimension and averaging over dimensions.
        Concretely, for each output dimension $d \in [D]$ the decision-maker chooses an action in $\{\text{predict}, \text{abstain}\}$.
        To put dimensions of different scale on equal footing, costs are standardized by the marginal variance $v_d := \mathrm{Var}(y_d)$, so the standardized squared-error loss for a point prediction in dimension $d$ is $(y_d - \mu_d)^2 / v_d$, and the abstention cost is a dimensionless $\lambda \in \mathbb{R}_{>0}$.
        Equivalently, this is the univariate selective-prediction problem of Section~\ref{sect:reg_selpred} applied independently to each of the standardized targets $y_d / \sqrt{v_d}$.\footnote{
            Note that we apply the same scaling in the univariate case (see \cref{app:priors}), but we omitted this notation in the main text for brevity.
        }
        The Bayes act in dimension $d$ is to predict $\mu_d$ whenever the standardized predictive variance $\Sigma_{dd}/v_d$ is at most $\lambda$.
        The pointwise utility of a prediction $f = (\mu, \Sigma)$ with true outcome $y \in \mathbb{R}^D$ is
        \[
            u_\lambda(f, y) = -\frac{1}{D}\sum_{d=1}^D \left[\frac{(y_d - \mu_d)^2}{v_d}\, \boldsymbol{1}_{\Sigma_{dd} \leq \lambda v_d} + \lambda\, \boldsymbol{1}_{\Sigma_{dd} > \lambda v_d}\right].
        \]
        The dataset-level utility $U_\lambda(\boldsymbol{f}, \boldsymbol{y})$ is defined as the average pointwise utility.
        In particular, $U_\lambda$ is \textbf{pointwise-separable}.
    
        \begin{proposition}
            The multivariate NLL, MSE, and ES are not decision-aligned w.r.t.\ $\{U_\lambda\}_{\lambda \in \mathbb{R}_{>0}}$.
        \end{proposition}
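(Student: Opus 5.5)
Since $U_\lambda$ is pointwise-separable and all three metrics are pointwise-separable, \cref{lem:separability_barrier} gives nothing here. I will therefore argue by contradiction as in the proof of \cref{prop:reg_selpred_all_prop}: assume a prior $\pi$ on $\mathbb{R}_{>0}$ and strictly increasing $h_{\boldsymbol y}$ satisfy \eqref{eq:decision_alignment}. Then, for any fixed $\boldsymbol y$, whenever $M_{\boldsymbol y}(\boldsymbol f_1)=M_{\boldsymbol y}(\boldsymbol f_2)$ we must have $EU_{\lambda\sim\pi,\boldsymbol y}(\boldsymbol f_1)=EU_{\lambda\sim\pi,\boldsymbol y}(\boldsymbol f_2)$, and the reverse ordering must hold strictly as well. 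Linearity gives, for each instance, the negative expected utility
\[
\frac1D\sum_{d=1}^D\Bigl(\tfrac{(y_d-\mu_d)^2}{v_d}\,S\bigl(\Sigma_{dd}/v_d\bigr)+\int_0^{\Sigma_{dd}/v_d}\lambda\,\pi(\lambda)\,\mathrm d\lambda\Bigr),
\]
with $S(s)=\int_s^\infty\pi(\lambda)\,\mathrm d\lambda$.

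\textbf{MSE.} The MSE ignores $\Sigma$. I will take $\boldsymbol\mu=\boldsymbol y$ and choose an interval $(a,b)$ with positive $\pi$-mass. Comparing $\Sigma_i=a v\,I$ with $\Sigma_i=b v\,I$ (using $v_d$ scaling per coordinate) gives equal MSE but expected utilities that differ by $\int_a^b\lambda\pi(\lambda)\,\mathrm d\lambda>0$, exactly as in the univariate proof.

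\textbf{ES.} The ES also does not decouple the diagonal variances from the utility in the right way. With $\mu=y$, the ES of $\mathcal N(y,\Sigma)$ equals $\mathbb E\|X\|-\tfrac12\mathbb E\|X-X'\| = (1-\tfrac{1}{\sqrt2})\,\mathbb E\|Z\|$ with $Z\sim\mathcal N(0,\Sigma)$. This depends on all of $\Sigma$, whereas $U_\lambda$ depends only on the diagonal entries $\Sigma_{dd}$. The plan is to pick two covariances with the same diagonal but different off-diagonal correlation, e.g.\ $\Sigma=I$ versus $\Sigma=\begin{pmatrix}1&\rho\\ \rho&1\end{pmatrix}\oplus I$. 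Their utilities coincide for every $\lambda$, but $\mathbb E\|Z\|$ differs, because $\mathbb E\|Z\|$ is strictly concave-type in the eigenvalues at fixed trace: the eigenvalues $1\pm\rho$ versus $1,1$. I expect checking this strict inequality to be the main obstacle. I would verify it via $\mathbb E\|Z\|=\mathbb E\sqrt{\sum_k \lambda_k g_k^2}$ and strict concavity of the square root under a Jensen-type argument, or, more simply, by taking $\rho\to1$ and comparing limits. Equal EU but different metric values contradicts injectivity of $h_{\boldsymbol y}$.

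\textbf{NLL.} Similarly, the NLL depends on $\log\det\Sigma$ and on the off-diagonals of $\Sigma^{-1}$, while the utility sees only $\Sigma_{dd}$. With $\mu=y$ the quadratic term vanishes. The correlated covariance above has the same diagonal but $\log\det=\log(1-\rho^2)\neq0$, so the NLL differs while EU is equal, again a contradiction. (Alternatively, one could repeat the univariate slope argument, which forces a Pareto prior with divergent integral, but the off-diagonal construction is simpler and avoids integrability subtleties.) One caveat: if the $v_d$ are viewed as functions of the labels $\boldsymbol y$, they are still fixed once $\boldsymbol y$ is fixed, so the constructions go through.
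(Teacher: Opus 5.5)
Your proposal is correct and matches the paper's proof essentially step for step. For the MSE, you exploit its blindness to $\Sigma$ by varying the standardized variances across an interval of positive $\pi$-mass. For the NLL and ES, you compare two covariances with identical diagonal, hence identical expected utility since $U_\lambda$ sees only $\Sigma_{dd}$, but different off-diagonal entries; this changes $\log\det\Sigma$ and, by strict concavity of $\eta\mapsto\mathbb{E}\sqrt{\sum_k\eta_k g_k^2}$, also changes the ES. Your choice of equal diagonal entries ($\Sigma=I$ versus eigenvalues $1\pm\rho$) turns the ES step into a direct midpoint Jensen inequality, $\tfrac12\bigl(\sqrt{A+\rho\Delta}+\sqrt{A-\rho\Delta}\bigr)<\sqrt{A}$ with $A=\sum_k g_k^2$ and $\Delta=g_1^2-g_2^2\neq 0$ a.s., which is slightly simpler than the paper's $v_1\neq v_2$ construction because it avoids the extra collinearity argument.
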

    
        \begin{proof}
            Write $r_{i,d} := \Sigma_{i,dd}/v_d$ for the standardized marginal predictive variance and $e_{i,d} := (y_{i,d} - \mu_{i,d})^2 / v_d$ for the standardized squared error.
            Consider any prior $\pi$.
            The per-instance expected negative utility takes the form
            \begin{equation}
                s(f_i, y_i) = \frac{1}{D}\sum_{d=1}^D \bigl[e_{i,d}\, S(r_{i,d}) + G(r_{i,d})\bigr],
                \label{eq:mv_eu}
            \end{equation}
            where $S(t) := \int_t^\infty \pi(\lambda)\,\mathrm{d}\lambda$ and $G(t) := \int_0^t \lambda\,\pi(\lambda)\,\mathrm{d}\lambda$.
            In particular, $s(f_i, y_i)$ depends on $\Sigma_i$ only through its (standardized) diagonal $(r_{i,1}, \ldots, r_{i,D})$.
            We now show, for each metric, that a strictly monotone link to the expected utility cannot exist.
    
            \textbf{MSE:}
            The MSE does not depend on $\Sigma_i$ at all.
            Fix labels $\boldsymbol{y}$ and set $\mu_i = y_i$ for all $i$, so $\mathrm{MSE}(\boldsymbol{f}, \boldsymbol{y}) = 0$ regardless of the covariances.
            Then $e_{i,d} = 0$ for all $i, d$, and~\eqref{eq:mv_eu} reduces to $s(f_i, y_i) = \frac{1}{D}\sum_d G(r_{i,d})$.
            Since $\pi$ is nonzero, there exists an interval $(a, b) \subset \mathbb{R}_{>0}$ with $\int_a^b \pi(\lambda)\,\mathrm{d}\lambda > 0$, and for $t_1, t_2 \in (a, b)$ with $t_1 < t_2$ and $t_1, t_2 \ne 1$ we have $G(t_2) > G(t_1)$.
            Setting $\Sigma_i = t v_1\, e_1 e_1^\top + \sum_{d \geq 2} v_d\, e_d e_d^\top$ (where $e_d$ is the $d$-th standard basis vector), we obtain $r_{i,1} = t$ and $r_{i,d} = 1$ for $d \geq 2$.
            Choosing $t \in \{t_1, t_2\}$ yields two prediction vectors with identical MSE (both zero) but different expected utilities, contradicting the existence of a strictly increasing $h_{\boldsymbol{y}}$.
    
            For the NLL and ES, we use the following shared construction.
            Since all three quantities (NLL, ES, and expected utility) are pointwise-separable, fixing $n = 2$, any labels $\boldsymbol{y}$, and $f_2$ arbitrarily, the strictly increasing function $h_{\boldsymbol{y}}$ would establish a strictly monotone link between the per-instance metric $l(f_1, y_1)$ and the per-instance expected utility $s(f_1, y_1)$.
            In particular, $l(f_1, y_1) \neq l(f_1', y_1)$ must imply $s(f_1, y_1) \neq s(f_1', y_1)$.
            Fix $\mu_1 = y_1$ and consider two covariance matrices that share the same diagonal but differ in their off-diagonal structure:
            \[
                \Sigma_A = \mathrm{diag}(v_1, v_2, \ldots, v_D),
                \qquad
                \Sigma_B = \Sigma_A + \rho\sqrt{v_1 v_2}\,\bigl(e_1 e_2^\top + e_2 e_1^\top\bigr),
            \]
            for some $\rho \in (0, 1)$ and $v_1 \ne v_2$ where $e_d$ denotes the $d$-th standard basis vector.
            Both matrices are symmetric positive definite (the leading $2 \times 2$ block of $\Sigma_B$ has determinant $v_1 v_2 (1 - \rho^2) > 0$).
            They share the same diagonal entries, hence the same standardized diagonal $(1, 1, \ldots, 1)$, and since $\mu_1 = y_1$ also $e_{1,d} = 0$ for all $d$.
            By~\eqref{eq:mv_eu}, $s((y_1, \Sigma_A), y_1) = s((y_1, \Sigma_B), y_1) = \frac{1}{D}\sum_d G(1)$.
    
            \textbf{NLL:}
            Since $\mu_1 = y_1$, the per-instance NLL reduces to $\frac{D}{2}\log(2\pi) + \frac{1}{2}\log\det(\Sigma_1)$.
            We have $\det(\Sigma_A) = \prod_d v_d$ and $\det(\Sigma_B) = v_1 v_2 (1 - \rho^2) \prod_{d \geq 3} v_d < \det(\Sigma_A)$,
            so the NLL takes different values for $\Sigma_A$ and $\Sigma_B$ while the expected utility is the same.
            This contradicts the strictly monotone link.
    
            \textbf{ES:}
            Since $\mu_1 = y_1$, the predictive distribution is $\mathcal{N}(y_1, \Sigma_1)$ and for $W := X - y_1$,
            \[
                l_\mathrm{es}\bigl((y_1,\Sigma_1), y_1\bigr) = (1 - 1/\sqrt{2})\,\mathbb{E}_{W \sim \mathcal{N}(0, \Sigma_1)}\|W\|.
            \]
            Writing $\eta_1, \ldots, \eta_D$ for the eigenvalues of $\Sigma_1$, by its eigendecomposition $\|W\|^2 \stackrel{d}{=} \sum_{d=1}^D \eta_d Z_d^2$ where $Z_1, \ldots, Z_D$ are independent standard normal variables.
            The map $\Phi = (\eta_1, \ldots, \eta_D) \mapsto \mathbb{E}\sqrt{\sum_{d=1}^D \eta_d Z_d^2}$ is strictly concave on $\mathbb{R}_{>0}^D$ (as the expectation of the strictly concave map $\eta \mapsto \sqrt{\sum_d \eta_d z_d^2}$, which is concave for every realization $z$ as the composition of the concave, non-decreasing function $\sqrt{\cdot}$ with a linear function of $\eta$).
            The eigenvalues of $\Sigma_A$ are $(v_1, v_2, \ldots, v_D)$, while those of $\Sigma_B$ are
            $\bigl(\tfrac{v_1+v_2}{2} + \Delta,\, \tfrac{v_1+v_2}{2} - \Delta,\, v_3, \ldots, v_D\bigr)$
            with $\Delta := \sqrt{(\tfrac{v_1-v_2}{2})^2 + \rho^2 v_1 v_2} > |\tfrac{v_1-v_2}{2}|$.
            Hence $\Sigma_B$'s eigenvalues are a strict mean-preserving spread of $\Sigma_A$'s within the first two coordinates.
            By symmetry of $\Phi$ in its first two arguments, $\Phi(a) = \Phi(a')$ where $a' := (v_2, v_1, v_3, \ldots, v_D)$, and similarly $\Phi(b) = \Phi(b')$ where $b'$ swaps the first two entries of $b$.
            Both pairs share the same midpoint $\bar a := \tfrac{1}{2}(a + a') = \tfrac{1}{2}(b + b') = (\tfrac{v_1+v_2}{2}, \tfrac{v_1+v_2}{2}, v_3, \ldots, v_D)$.
            Since $v_1 \neq v_2$ implies $a \neq a'$, and $\Delta > 0$ implies $b \neq b'$, strict concavity of $\Phi$ gives $\Phi(\bar a) > \Phi(a)$ and $\Phi(\bar a) > \Phi(b)$.
            Moreover, $a$ and $b$ lie on the line through $\bar a$ in the direction $(1, -1, 0, \ldots, 0)$, with $a$ at distance $|\tfrac{v_1-v_2}{2}|$ from $\bar a$ and $b$ at distance $\Delta > |\tfrac{v_1-v_2}{2}|$, so $a = (1-t)\bar a + t b$ for $t := |\tfrac{v_1-v_2}{2}|/\Delta \in (0, 1)$ (up to swapping $b$ and $b'$, which by symmetry does not affect $\Phi$).
            By strict concavity of $\Phi$ along this segment,
            \[
                \Phi(a) > (1-t)\Phi(\bar a) + t \Phi(b) > (1-t)\Phi(b) + t \Phi(b) = \Phi(b),
            \]
            where the second inequality uses $\Phi(\bar a) > \Phi(b)$.
            Hence $\mathbb{E}_{W \sim \mathcal{N}(0, \Sigma_A)}\|W\| = \Phi(a) > \Phi(b) = \mathbb{E}_{W \sim \mathcal{N}(0, \Sigma_B)}\|W\|$, so the ES takes different values for $\Sigma_A$ and $\Sigma_B$ while the expected utility is the same.
            This contradicts the strictly monotone link.
    
        \end{proof}


\begin{figure}[ht]
    \begin{center}
    \centerline{\includegraphics[width=.9\textwidth]{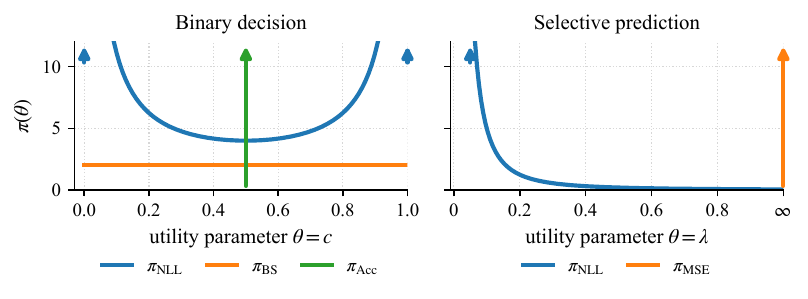}}
        \caption{
          Illustrations of the pathological priors identified in \cref{sect:theory} for the binary decision \emph{(left)} and selective prediction \emph{(right)} utility.
          The arrows indicate where the curves diverge.
        }
        \label{fig:path_priors}
    \end{center}
    \vskip -0.2in
\end{figure}

\section{PWU documentation}\label{app:pwu_card}

    We now document our PWU metrics similarly to the model card framework \cite{mitchell_model_2019}.

    \paragraph{Metric details}
    
        The PWU metrics are a family of proper scoring
        rules for evaluating probabilistic predictions, introduced in this paper
        (Section~\ref{sect:method}).
        A PWU metric $M_\pi$ is parameterized by a
        decision family $\{U_\theta\}_{\theta \in \Theta}$ and a prior
        $\pi$ over $\Theta$, and is defined as
        $M_\pi(f, y) := \int_\Theta -U_\theta(f, y)\, \pi(\theta)\, \mathrm{d}\theta$
        (Equation~\eqref{eq:pwu_def}).
        The construction
        applies to any decision family with finite expected utility under $\pi$.
        We instantiate four PWU metrics in the binary
        and univariate setting: $M_{\pi_c}$ (binary decisions),
        $M_{\pi_\lambda}$ (selective prediction), $M_{\pi_k}$ (top-$k$ selection
        in classification), and $M_{\pi_\phi}$ (top-$k$ selection in regression
        with risk aversion).
        We further introduce two analogous PWU instantiations
        for the multiclass and multivariate setting in
        Appendix~\ref{app:multi}.
    
    \paragraph{Inputs and outputs}
    
        A PWU metric takes a first-order probabilistic prediction $f$ and a label
        $y$, and returns a real-valued score (lower is better).
        
    \paragraph{Intended use}
    
        PWU metrics are intended for general-purpose uncertainty quantification benchmarking when
        no specific downstream task is given.
        They are
        designed as a complement to, not a replacement for, conventional UQ
        metrics: each PWU encodes a single decision family, so we recommend
        reporting a \emph{set} of PWUs alongside standard scoring rules to cover
        a range of plausible downstream scenarios.
        
    \paragraph{Prior elicitation}
    
        The prior $\pi$ controls which regions of the decision space the metric
        emphasizes. We justify our specific prior choices and provide general
        elicitation guidance in Appendix~\ref{app:priors}; sensitivity to prior
        misspecification is analyzed in Appendix~\ref{app:sens_analysis}, where we
        find that PWU metrics retain robust positive utility alignment even under
        extreme prior perturbations.

    \paragraph{Limitations}
    
        No single PWU metric covers all downstream objectives, therefore, \emph{several} PWU metrics are required to broadly represent a model's decision-making ability.
        Prior elicitation introduces a modeling step that generic metrics avoid.
        Our PWUs’ priors may be misspecified in certain decision problems, which may degrade performance (even though we observed in Appendix~\ref{app:sens_analysis} that PWU metrics exhibit robust positive alignment even under severe prior misspecification).
        The scope of PWU metrics is currently restricted to first-order probabilistic predictions
        (Appendix~\ref{app:intro}); extension to second-order UQ evaluation is
        left for future work.
        Computationally, evaluating $M_\pi$ requires
        integrating the negative utility against $\pi$, which requires numerical integration for
        prior choices that do not admit a closed-form $M_\pi$.
        Mathematically, only utility--prior pairs are admitted that allow for a finite expected utility and hence a well-defined PWU.

    \paragraph{Out-of-scope use}

        PWU metrics are not intended as training objectives---some are non-separable, making mini-batching ill-defined, some are non-differentiable (top-$k$ selection, thresholding), and the integral adds computational cost.
        They are also not intended as a fairness or
        robustness diagnostic: alignment with a chosen utility says nothing about
        performance on subpopulations not captured by that utility.

    \paragraph{Broader impact}

        Our work proposes evaluation metrics rather than new models, datasets,
        or deployment systems, so its societal impact is indirect and mediated
        by how UQ benchmarks influence model selection.
        We see two main
        directions of impact.
        On the positive side, PWU metrics encourage UQ benchmarking that is
        explicit about the downstream decision context and the assumed
        distribution over decision parameters.
        Standard UQ metrics encode
        implicit and often pathological priors over downstream costs; making
        these priors explicit can lead to model rankings that better track
        realized utility in deployed systems---particularly in cost-sensitive
        settings such as medical triage, credit approval, and selective
        prediction, where mismatches between the evaluation metric and the
        actual decision cost can have material consequences.
        On the negative side, PWU metrics depend on a chosen prior, and a
        poorly elicited or strategically chosen prior can shift model rankings
        in ways that favor certain models without that choice being apparent
        to downstream users. While our sensitivity analysis
        (Appendix~\ref{app:sens_analysis}) shows that PWU metrics remain robust
        to substantial prior misspecification, transparent reporting of the
        prior used---ideally alongside conventional metrics and a range of
        plausible alternative priors, as we recommend in
        Section~\ref{sect:method}---is important to prevent the metric from being
        used to justify a predetermined conclusion. PWU metrics also do not
        address fairness or distributional concerns: alignment with a chosen
        utility says nothing about performance on subpopulations not captured
        by that utility, and a benchmark that selects models well on average
        may still select models that perform poorly on minority groups.
        We do not foresee a direct path from this work to malicious use:
        PWU metrics evaluate existing predictions and do not enable new
        capabilities such as generation, surveillance, or large-scale
        inference attacks.

\section{Choosing PWU priors}\label{app:priors}

    Priors should be designed through consultation of the domain literature
    and expert knowledge.
    For each decision task, one should ask what range of decision parameters
    is plausible, and where the probability mass should lie.
    This process is neither straightforward nor unique---prior elicitation is
    a complex research area in it's own right (see, e.g.,~\cite{mikkola_prior_2024}).
    Fortunately, PWU metrics are robust to misspecification; even PWUs
    corresponding to misspecified priors perform better than metrics
    corresponding to pathological priors, as demonstrated in our sensitivity
    analysis in \cref{app:sens_analysis}.
    
    In the remainder of this section, we provide concrete prior-elicitation guidance for benchmark designers who wish to construct their own PWU
    metrics.
    \cref{app:priors_protocol} describes a four-step procedure for constructing a PWU metric in general,
    \cref{app:priors_defaults} lists default prior families,
    \cref{app:priors_diagnostics} provides a diagnostic prior-choice checklist,
    and \cref{app:priors_ours} documents the prior choices used in our
    experiments as worked examples of the procedure.

    \subsection{A four-step PWU construction procedure}\label{app:priors_protocol}

        We recommend the following procedure for constructing a PWU prior.

        \paragraph{Step 1: Identify the decision families}
            Specify the action sets for potential downstream use
            and the corresponding utility (or loss) structure.
            Express this as parameterized families $\{U_\theta\}_{\theta \in \Theta}$.

        \paragraph{Step 2: Identify the parameter spaces \texorpdfstring{$\boldsymbol \Theta$}{Theta}}
            Determine the units, range, and natural scale of $\theta$.
            Is it bounded (cost ratios, selection fractions) or unbounded
            (abstention costs, risk-aversion coefficients)?
            For unbounded parameters, consider whether a natural rescaling exists
            (e.g., normalizing abstention costs by the empirical variance of
            $\boldsymbol{y}$ as we do in \cref{sect:reg_selpred}).

        \paragraph{Step 3: Specify prior families}
            Select parametric families for $\pi_\theta$ using the defaults in
            \cref{app:priors_defaults}.
            For most decision parameters that arise in practice, a Beta
            distribution on a bounded or rescaled domain suffices.

        \paragraph{Step 4: Set the prior's mode and concentration}
            Pin down the mode from a single domain anchor (e.g., ``a typical
            false negative is roughly $10\times$ as costly as a false positive''
            anchors $c$ at $\approx 0.1$).
            Set the concentration according to confidence in that
            anchor: a tight prior reflects strong belief in the modal value;
            a diffuse prior reflects broader uncertainty about the deployment
            context.
            We recommend verifying the choice using the diagnostics in
            \cref{app:priors_diagnostics}.

    \subsection{Default prior families}\label{app:priors_defaults}

        \cref{tab:prior_defaults} summarizes default prior families for
        decision parameters that commonly arise in UQ benchmarking.

        \begin{table}[h]
            \caption{
                Default prior families for common decision parameter types.
                For unbounded scale parameters, we recommend rescaling to a
                bounded domain (e.g., dividing by the empirical variance of
                $\boldsymbol{y}$) and using a Beta prior, as we do for the
                abstention cost $\lambda$ in \cref{sect:reg_selpred}.
            }
            \label{tab:prior_defaults}
            \vspace{0.7em}
            \centering
            \begin{tabular}{lll}
                \toprule
                Parameter type & Domain & Default family \\
                \midrule
                Cost ratio / mixing weight & $(0,1)$ & $\text{Beta}(\alpha, \beta)$ \\
                Selection fraction $k/n$ & $(0,1)$ & $\text{Beta}(\alpha, \beta)$ \\
                Abstention cost (rescaled) & $(0,1)$ & $\text{Beta}(\alpha, \beta)$ \\
                Risk-aversion $\gamma$ (rescaled) & $(0,1)$ & $\text{Beta}(\alpha, \beta)$ \\
                Unbounded scale parameter & $(0, \infty)$ & $\text{LogNormal}(\mu, \sigma^2)$ or $\text{Gamma}(\alpha, \beta)$ \\
                \bottomrule
            \end{tabular}
        \end{table}

        For a Beta$(\alpha, \beta)$ prior with $\alpha, \beta > 1$, the mode is
        $(\alpha - 1) / (\alpha + \beta - 2)$, and the concentration grows with
        $\alpha + \beta$.
        The mode can therefore be fixed at a domain-motivated value and
        the concentration can be controlled independently by scaling $\alpha$ and
        $\beta$ together.

    \subsection{Diagnostics}\label{app:priors_diagnostics}

        Before reporting PWU results, we recommend the following checks.

        \paragraph{Visualization}
            Plot the density of $\pi_\theta$ and verify that it concentrates
            mass where expected.
            We do this in \cref{fig:priors}.

        \paragraph{Quantile check}
            Compute the 5th, 50th, and 95th percentiles of $\pi_\theta$.
            Each should correspond to a plausible deployment scenario.
            For example, under our $\pi_c = \text{Beta}(2, 10)$, the 5th, 50th,
            and 95th percentiles of $c$ are approximately $0.033$, $0.148$, and
            $0.364$, corresponding to false-negative-to-false-positive cost
            ratios of roughly $29{:}1$, $5.8{:}1$, and $1.7{:}1$. If any of
            these quantiles seems implausible for the target deployment
            context, the prior should be reconsidered.

        \paragraph{Degeneracy check}
            Verify that $\pi_\theta$ does not place substantial mass on regions
            where the decision becomes trivial or where UQ is least needed
            (e.g., $c \to 0$ or $c \to 1$ for binary decisions, $\lambda \to 0$
            for selective prediction).
            This is exactly the pathology we identify in \cref{sect:theory} for
            the implicit priors of conventional metrics.
            If the prior is degenerate in this sense, truncate or otherwise reshape it.

        \paragraph{Sensitivity check}
            Re-run the benchmark with at least one perturbed prior (e.g.,
            shifting the mode by $50\%$) and verify that conclusions are
            qualitatively stable.
            This makes the dependence of results on the prior transparent and
            guards against overfitting the prior to a particular outcome.
            Our sensitivity analysis in \cref{app:sens_analysis} provides a
            template for such checks.

    \subsection{Worked examples: prior choices used in our experiments}\label{app:priors_ours}

        We document our specific prior choices below as worked examples of
        the procedure in \cref{app:priors_protocol}.
        All priors were specified \emph{a priori} from domain heuristics and
        held fixed across all experiments; they were not selected, tuned, or
        adjusted based on test-set performance, model rankings, or alignment
        results.
        The sensitivity analysis in \cref{app:sens_analysis} characterizes
        robustness to misspecification of these choices.

        \paragraph{Classification}

            For the binary decision problem, our PWU metric coincides with the
            well-known PSR integral representation \eqref{eq:psr_bd_integral}
            discussed in \cref{sect:bc_bd}.
            For the form \eqref{eq:psr_bd_integral},
            \citet{buja_loss_2005} proposed PSRs that use
            $\text{Beta}(\alpha,\beta)$ priors over $c$.
            We follow this approach and set $\alpha=2$ and $\beta=10$.
            These are reasonable choices as the result is a distribution with
            modal value $c=0.1$ and the majority of probability mass on small
            values of $c$.
            This reflects the cost asymmetry in many binary decisions where
            the cost of a false negative $1-c$ is much higher than the cost
            of a false positive $c$.
            In the case of top-$k$ selection, we place a prior over the
            fraction $k/n$.
            For simplicity, we denote this as $\pi_k(k)$, but it should be
            understood as $\pi_{k/n}(k/n)$.
            We again propose a Beta distribution, this time targeting a lower
            mode of $0.01$ (pick top $1\%$) as many top-$k$ selection studies
            are about picking only a small number of items from a large
            dataset.
            Therefore, we set $\alpha=1.2$ and $\beta=20.8$.

        \paragraph{Regression}

            For selective prediction, we consider the abstention cost to be
            roughly $10\%$ of the data variance ($\alpha=2,\beta=10$), so
            that abstention is often cheaper than the typical squared error
            but not negligible.
            For top-$k$ selection, we use the same prior as in classification
            for $k/n$, and say that the distribution for $\gamma$ is
            parameterized by $\alpha=2$ and $\beta=6$.\footnote{
                Note that although the natural domain of $\lambda$ and
                $\gamma$ is $\mathbb{R}_{>0}$, we place priors on $[0,1]$;
                in practice, samples are rescaled by multiplying with the
                empirical variance of $\boldsymbol y$.
            }
            This corresponds to a moderate risk-aversion ($\gamma=0$
            corresponds to risk-neutrality).
            We write $\pi_c,\pi_\lambda$ for $\text{Beta}(2,10)$, $\pi_{k}$
            for $\text{Beta}(1.2,20.8)$, and $\pi_\phi$ for the product
            density of $\text{Beta}(1.2,20.8)$ and $\text{Beta}(2,6)$
            (assuming independence of $k/n$ and $\gamma$).

        \paragraph{On cross-validating the prior}

            We do not cross-validate or otherwise tune the priors on labeled
            data.
            PWU priors are not hyperparameters in the conventional sense---they
            encode a belief about the downstream decision context, which is
            exogenous to the data.
            Selecting a prior to maximize alignment on a validation split
            would conflate ``the prior best representing beliefs about a decision scenario '' with
            ``the prior under which a particular model looks best,''
            reintroducing precisely the misalignment problem PWU metrics are
            designed to avoid.
            The appropriate analog of cross-validation is the sensitivity
            analysis in \cref{app:sens_analysis}, which characterizes
            robustness to misspecification rather than tuning the prior.
   
\section{Details on our experiments}\label{app:experiments}

    In both the controlled experiments and the case studies, we aim to measure the \emph{alignment} between model rankings by a metric $M$ and by a utility $U_\theta$.
    Throughout all experiments, we fit the same models, use the same metrics $M$ (all metrics from \cref{sect:theory} and our PWU metrics from \cref{sect:method}), and apply the same alignment score computation.
    
    \paragraph{Model implementations}

        For binary classification, we implement logistic regression (LogReg), random forests (RFs), gradient boosting (GB), sparse variational Gaussian processes (GPs)~\cite{hensman_scalable_2015}, multilayer perceptrons (MLPs), TabPFN~\cite{hollmann_tabpfn_2023}, the FT-Transformer~\cite{gorishniy_revisiting_2021}, SAINT~\cite{somepalli_saint_2022}, the ResNet-MLP~\cite{gorishniy_revisiting_2021}, and a deep ensemble~\cite{lakshminarayanan_simple_2017}.
        For regression, we implement the same suite of models, replacing LogReg with linear regression (LinReg), GB with natural gradient boosting (NGB)~\cite{duan_ngboost_2020}, and MLPs with heteroscedastic MLPs.
        Our codebase is primarily based on the following packages: \texttt{scikit-learn} \cite{scikit-learn} (BSD 3-Clause License), \texttt{gpytorch} \cite{gardner_gpytorch_2018} (MIT license), \texttt{ngboost} \cite{duan_ngboost_2020} (Apache-2.0 license), and \texttt{tabpfn} \cite{hollmann_tabpfn_2023} (Prior Labs License).

        \emph{Linear and Logistic Regression}
        We use standard L2-regularized logistic regression and ordinary least squares, where the predictive variance is analytically available and can be estimated from the data.

        \emph{Random Forests}
        Each ensemble contains 200 trees.
        Predictive uncertainty combines inter-tree variability (epistemic) and out-of-bag residual error (aleatoric).
        
        \emph{(Natural) Gradient Boosting}
        We employ gradient boosting for classification and \texttt{NGBoost}~\cite{duan_ngboost_2020} for regression, using a Gaussian likelihood.
        
        \emph{Sparse Variational Gaussian Processes}
        We use the formulation of~\citet{hensman_scalable_2015}, employing an RBF kernel with automatic relevance determination.
        The model is trained by maximizing the variational evidence lower bound (ELBO).
        
        \emph{Multilayer Perceptrons}
        We train two-layer ReLU networks (128 units per layer) with early stopping.
        For regression, a heteroscedastic variant outputs both mean and variance, and is trained by maximum likelihood.
        
        \emph{TabPFN}
        We subsample the training sets to $10{,}000$ instances when necessary, as required by the model.
        For regression, predictive uncertainty is obtained from the $15.87\%$ and $84.13\%$ quantiles of the predictive distribution, corresponding to $\pm 1$ standard deviation under a Gaussian assumption.
        
        \emph{FT-Transformer}
        We use a $[\mathrm{CLS}]$ token, $d_{\mathrm{token}} = 64$, $4$ attention heads, $3$ layers, and dropout $0.1$.
        For regression, the model outputs both a mean $\mu$ and a variance $\sigma^2$ (via a softplus activation), trained by minimizing the Gaussian negative log-likelihood.
        
        \emph{SAINT}
        Architecture and training details match those of the FT-Transformer.
        For regression, a heteroscedastic head is used, analogous to the FT-Transformer.
        
        \emph{ResNet-MLP}
        We use $3$ residual blocks with hidden dimension $128$, batch normalization, GELU activations, and dropout $0.1$.
        Each block contains two linear layers with a skip connection.
        For regression, a heteroscedastic variant with separate mean and variance heads is trained by Gaussian negative log-likelihood.
        
        \emph{Deep Ensemble}
        Following~\citet{lakshminarayanan_simple_2017}, we train an ensemble of $5$ MLPs with distinct random initializations.
        For classification, the predictive probability is the average of the member probabilities.
        For regression, each member is a heteroscedastic MLP outputting mean and variance; the ensemble predictive mean and variance are obtained by the standard mixture-of-Gaussians formulas.

    \paragraph{Alignment score computation}

        We now explain how we compute the alignment score for one metric--utility pair $(M,U_\theta)$ on one dataset $\mathcal{D}$, and this follows equally for all considered metric--utility pairs on each dataset.
        We train and test each model $\mathcal{M}_j$, $j=1,\dots,5$, on $\mathcal{D}$ using $5$-fold cross-validation (80/20 train--test partition) with fixed random seed, and repeat this train/test loop for $100$ different seeds (we have a slightly different setup in the electricity market case study, see below).
        So for each model $\mathcal{M}_j$, we obtain $100 \times 5$ metric/utility values:
        $\{m_{\text{rep},\text{fold}}\}_{\mathcal{M}_j}, \{u_{\text{rep},\text{fold}}\}_{\mathcal{M}_j}$, $\text{rep}=1,\dots,100$, $\text{fold}=1,\dots,5$.
        In each of the $100$ repeats, we average the metric/utility value over the $5$ folds to obtain a stable metric/utility value for each repeat:
        $\{\bar{m}_{\text{rep}}\}_{\mathcal{M}_j}, \{\bar{u}_{\text{rep}}\}_{\mathcal{M}_j}$.
        Using these $100$ metric/utility values, we can determine the corresponding model rankings
        $\boldsymbol r_{M,\text{rep}}=(r_{M,\text{rep},\mathcal{M}_1},\dots,r_{M,\text{rep},\mathcal{M}_5})$, $\boldsymbol r_{U_\theta,\text{rep}}=(r_{U_\theta,\text{rep},\mathcal{M}_1},\dots,r_{U_\theta,\text{rep},\mathcal{M}_5})$, $r_{M,\text{rep},\mathcal{M}_j},r_{U_\theta,\text{rep},\mathcal{M}_j} \in \{0,\dots,5\}$.
        We then measure the \emph{alignment} of $\boldsymbol r_{M,\text{rep}},\boldsymbol r_{U_\theta,\text{rep}}$ by computing Kendall's $\tau$.
        Through this procedure, we obtain $100$ metric-utility alignment scores $\{\tau_{(M,U_\theta),\text{rep}}\}$.

    \paragraph{Choice of Kendall's \texorpdfstring{$\boldsymbol \tau$}{tau}}

        We aim to evaluate decision-alignment (\cref{def:decision_alignment}) between metrics and downstream utilities.
        \cref{prop:preservation} shows that decision-alignment is equivalent to strict order- and tie-preservation between the rankings induced by a metric  and a utility.
        Therefore, evaluating decision-alignment reduces to assessing agreement between the two rankings.
        Standard measures for ranking similarity include Spearman’s $\rho$ and Kendall’s $\tau$.
        We choose Kendall's $\tau$ because it is robust and interpretable for small rankings, as in our case.
        For example, $\tau_{(M,U_\theta)} = 0.8$ means that $80\%$ of all pairwise model comparisons induced by the metric $M$ agree with the downstream utility $U_\theta$.
        Other possibilities to compare two ranking vectors are top-$k$ agreement or cardinal comparisons (e.g., Pearson's $\rho$).
        As discussed above, these do not evaluate decision-alignment in the sense of \cref{prop:preservation}.
        That said, these other metrics can be valid for different evaluation goals: top-$k$ if the objective is identifying the best models, and cardinal correlations (e.g., Pearson’s $\rho$) if magnitudes matter.
        So as an insightful ablation, we also report the top-$1$ and top-$3$ agreement and Pearson's $\rho$ for our benchmark experiments in Figures \ref{fig:benchmark_top1}, \ref{fig:benchmark_top3}, and \ref{fig:benchmark_pearson}.
        In each case, our PWU metrics perform better overall compared to common UQ metrics.

\begin{figure}[ht]
    \begin{center}
    \centerline{\includegraphics[width=.9\textwidth]{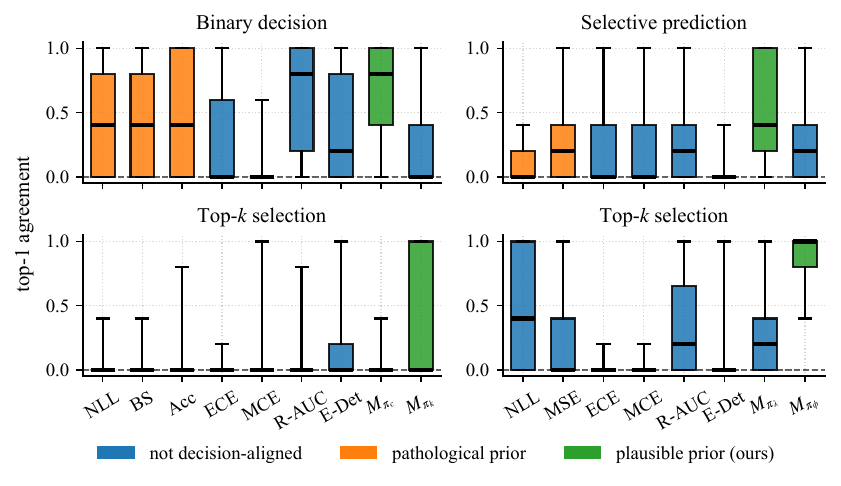}}
        \caption{
          Top-$1$ agreement in classification \emph{(left)} and regression \emph{(right)}, averaged over our five datasets.
          The coloring corresponds to our theoretical findings from \cref{sect:theory}.
          The PWU metrics perform best in identifying the best model for their corresponding utility families.
        }
        \label{fig:benchmark_top1}
    \end{center}
    \vskip -0.2in
\end{figure}
\begin{figure}[ht]
    \begin{center}
    \centerline{\includegraphics[width=.9\textwidth]{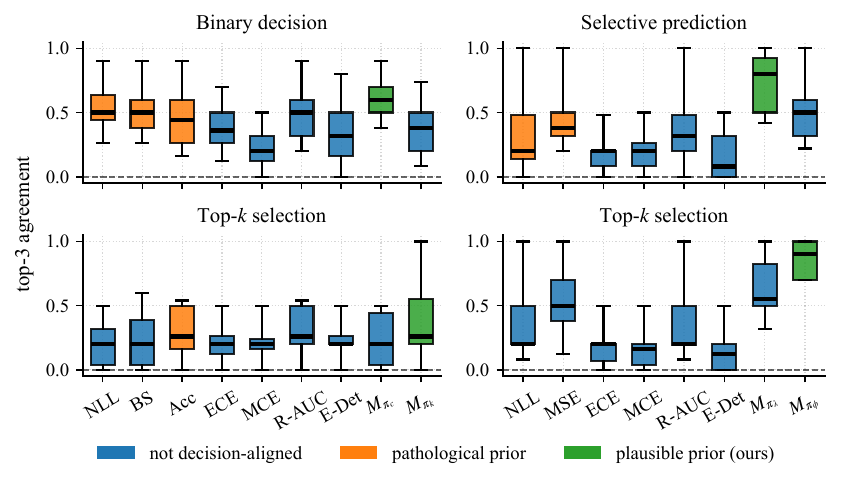}}
        \caption{
          Top-$3$ agreement in classification \emph{(left)} and regression \emph{(right)}, averaged over our five datasets.
          The coloring corresponds to our theoretical findings from \cref{sect:theory}.
          The PWU metrics perform best in identifying the top-$3$ models for their corresponding utility families.
        }
        \label{fig:benchmark_top3}
    \end{center}
    \vskip -0.2in
\end{figure}
\begin{figure}[ht]
    \begin{center}
    \centerline{\includegraphics[width=.9\textwidth]{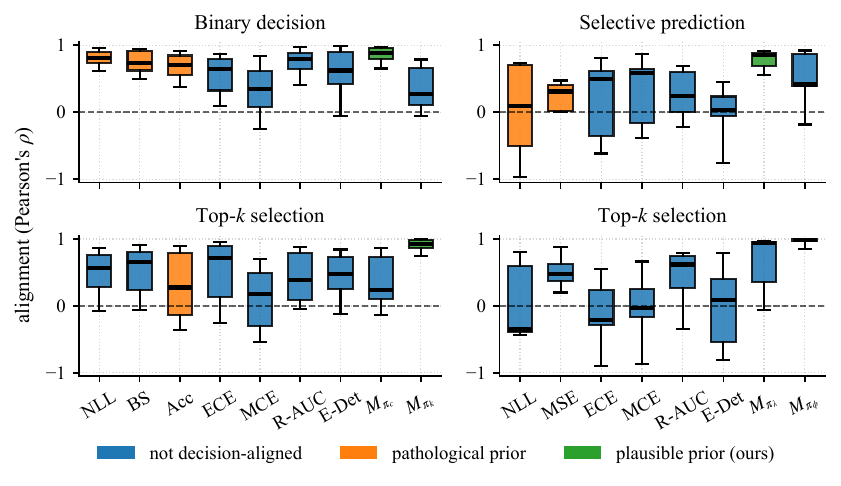}}
        \caption{
          Metric--utility correlation (Pearson's $\rho$) in classification \emph{(left)} and regression \emph{(right)}, averaged over our five datasets.
          The coloring corresponds to our theoretical findings from \cref{sect:theory}.
          The PWU metrics exhibit the strongest cardinal metric correlation with respect to their corresponding utility families.
        }
        \label{fig:benchmark_pearson}
    \end{center}
    \vskip -0.2in
\end{figure}

    \subsection{Experiments on benchmark datasets}

        \paragraph{Datasets}

            We use the following benchmark datasets from the UCI repository~\cite{kelly_uci_2025}:
            bank marketing~\cite{moro_bank_2014}, heart disease~\cite{janosi_heart_1989}, ionosphere~\cite{sigillito_ionosphere_1989}, mushroom~\cite{schlimmer_mushroom_1981}, and sonar~\cite{sejnowski_connectionist_1988} for classification, and air quality~\cite{vito_air_2008}, auto MPG~\cite{quinlan_auto_1993}, energy efficiency~\cite{tsanas_energy_2012}, power plant~\cite{tfekci_combined_2014}, and wine quality~\cite{cortez_wine_2009} for regression.
            All datasets use a CC BY 4.0 license.

        \paragraph{Utilities}

            We measure the alignment of our metrics $M$ to the four utilities $U_\theta$, $\theta \in \{c,k,\lambda,\phi\}$, considered in \cref{sect:theory}.
            We do not settle on one specific value for $\theta$, but sample $5$ different $\theta_j$ from the corresponding priors $\pi_\theta$, $\theta \in \{c,k,\lambda,\phi\}$, that we define in \cref{app:priors}.
            We then average the alignment scores of a metric $M$ with $U_{\theta_j}$, $\tau_{(M,U_{\theta_j})}$ across the respective decision family to obtain $\tau_{(M,U_{\theta})}=\tfrac 15 \sum_{j=1}^5 \tau_{(M,U_{\theta_j})}$.
        
            We report the median and 5th, 95th percentiles for each dataset in Tables~\ref{tab:benchmark_bc_bd}, \ref{tab:benchmark_bc_topk}, \ref{tab:benchmark_reg_selpred}, \ref{tab:benchmark_reg_topk}, and visualize the results, averaged over all datasets, in \cref{fig:benchmark}.

    \subsection{Applied case studies}\label{app:casestudies}

\begin{figure}[ht]
    \begin{center}
    \centerline{\includegraphics[width=.9\textwidth]{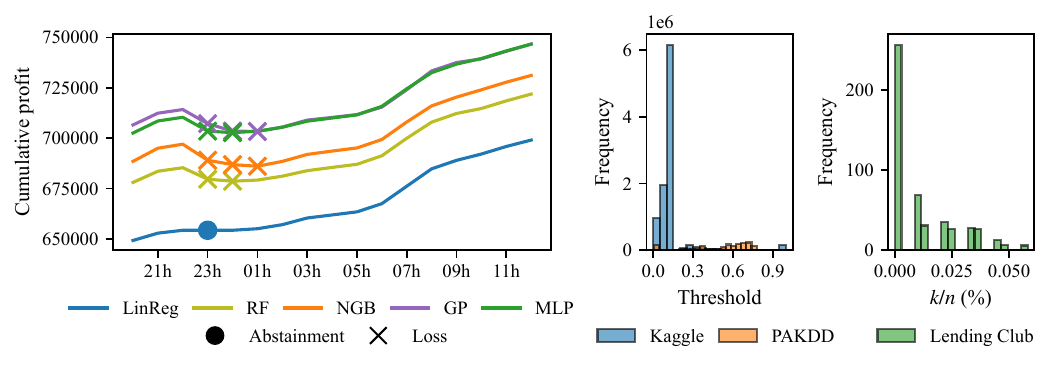}}
        \caption{
          Illustrations of our three case studies:
          cumulative electricity market trading profit over the night of Dec 03 to Dec 04, 2024, showing when which models led to bid abstention or loss \emph{(left)}, Bayes-optimal decision thresholds for the credit customers from our two datasets \emph{(center)}, and the share of accepted loans with a fixed budget $B$ for our 500 different test sets \emph{(right)}.
          We can see that the PAKDD thresholds and the $k/n$ distribution differ significantly from our chosen priors (see \cref{fig:priors})---note that $k/n$ is given in percent.
          Nevertheless, our PWU metrics perform well (see \cref{fig:credit_p2p}), demonstrating their robustness under prior misspecification.
        }
        \label{fig:cumsum_thresholds_ks}
    \end{center}
    \vskip -0.2in
\end{figure}

        \paragraph{Electricity market bidding}

            We consider a wind farm operator who wants to sell electricity in the \emph{day-ahead market}.
            In the day-ahead market, bids must be placed one day in advance, at noon, for each hour $h$ of the next day.
            Hence, the wind farm operator needs a \emph{forecasting model} for the next-day electricity generation $E_h$ to decide on the bidding amounts $y_h, h=0,\dots,23$.
            If a forecast is imprecise and the operator bids too much or too little electricity, they risk a penalty for the resulting shortfall or surplus, as, in both cases, the electricity grid’s supply-demand balance needs to be restored.
            Consequently, when forecast uncertainty is high, it may be better to abstain from bidding, resulting in a typical \emph{selective prediction} decision problem.
            In~\citet{bruninx_day-ahead_2025}, the optimization problem for the bidding amount $y_h^*$ for hour $h$ is defined as follows:
            \begin{align*}
                \max_{y_h} \quad & \mathbb{E}\left[ R_h^\text{DA} y_h + R_h^\text{B} (E_h - y_h) \right] \\
                \text{s.t.} \quad & \mathbb{E}\left[ (E_h - y_h)^2 \right] \leq \alpha, \\
                & 0 \leq y_h \leq \beta,
            \end{align*}
            where $\beta$ is the wind farm's capacity,
            $E_h$ is the realized electricity generation,
            $R_h^\text{DA}$ is the day-ahead market price (the unit payoff for the bid quantity $y_h$),
            and $R_h^\text{B}$ is the balancing market price at hour $h$ (the unit cost or reward for the deviation $E_h - y_h$).
            $\alpha$ is a pre-specified imbalance penalty termed \emph{risk certificate}.
            It can range from $\alpha_\text{min} = \text{Var}\left[E_h\right]$ to $\alpha_\text{max} = \text{Var}\left[E_h\right] + (\beta-\mathbb{E}\left[E_h\right])^2$.
            In their paper, the authors run experiments on several values of the \emph{relative} $\tilde{\alpha} = (\alpha - \alpha_\text{min})/(\alpha_\text{max} - \alpha_\text{min})$ and we settle on $\tilde{\alpha} = 1\%$ for our experiments.
            \citet{bruninx_day-ahead_2025} derive the optimal bidding amount $y_h^*$ for hour $h$ analytically. It is given as
            \begin{align*}
                y_h^* =
                \begin{cases}
                    \min\{ \mathbb{E}\left[ E_h \right] + \Delta_h; \beta \} & \text{if } \mathbb{E}\left[ R_h^\text{DA} \right] > \mathbb{E}\left[ R_h^\text{B} \right], \\
                    \max\{ \mathbb{E}\left[ E_h \right] - \Delta_h; 0 \} & \text{else }, \\
                \end{cases}
            \end{align*}
            where $\Delta_h = \sqrt{\alpha - \text{Var}\left[ E_h \right]}$.
            Hence, when the expected balancing price is larger than the expected day-ahead price and the uncertainty (variance) is too high, the optimal decision is abstainment.
            As in~\citet{bruninx_day-ahead_2025}, we use Belgian balancing price data from~\cite{elia_elia_2025} and day-ahead price and wind power data (\emph{Belwind Phase 1} wind farm) from~\cite{entso-e_2025}.
            The data from~\cite{elia_elia_2025} is licensed under CC BY 4.0.
            The data from~\cite{entso-e_2025} has no explicit license, but our use complies with their terms and conditions.
            We train on data from June to November 2024 to predict wind power generation, using the same suite of regression models as in our benchmark experiments.
            We then employ the trained models to estimate $\mathbb{E}[E_h]$ and $\mathrm{Var}[E_h]$ for each hour $h$ in December 2024, based on which we compute $y_h^*$ and the payoff.
            To quantify the variability of the metric--utility alignment scores, we bootstrap daily blocks (24 hours) of the 31 days of December, using $B = 100$ resamples.
            For each bootstrap replicate, we compute the metrics and utilities on the resampled set of days; the \emph{bidding utility} is the sum of payoffs over all hours in the replicate,
            \[
                \text{Bid-Util} = \sum_h \left( R_h^{\text{DA}}\, y_h^* + R_h^{\text{B}} (E_h - y_h^*) \right).
            \]
            We show the results in \cref{fig:bidding} and \cref{tab:bidding} and see that our PWU metrics exhibit the strongest utility-alignment.
            On the left of Figure~\ref{fig:cumsum_thresholds_ks}, we show the \emph{cumulative payoff} on the night of Dec 03 to Dec 04, 2024.
            Whenever a loss occurred (so payoff $<0$), we mark the hour with a cross.
            Abstainments are marked as circles.
            At 11 pm, for example, we can see that most models led to a risky trade, resulting in imbalance penalties, while LinReg correctly quantified too high uncertainty and abstained from trading.

\begin{figure}[ht]
 \vskip 0.2in
  \begin{center}
    \centerline{\includegraphics[width=.45\textwidth]{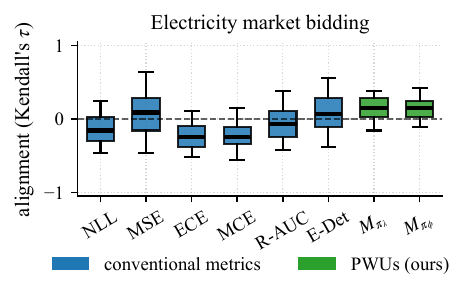}}
    \caption{
      Metric--utility alignment in the electricity market case study.
      The PWU metrics are the only metrics with stable positive bidding utility alignment.
    }
    \label{fig:bidding}
  \end{center}
  \vskip -0.2in
\end{figure}

        \paragraph{Credit approval}

            In our second case study, we consider a dataset of credit requests.
            The decision-maker (the financial institution) needs to decide for each customer whether to grant or decline a requested credit.
            The modeling task is probabilistic binary classification, predicting the probability of default ($y=1$).
            We follow the setup from \citet{bahnsen_example-dependent_2014}, where the cost of a false negative (credit granted that defaults) is 75\% of the credit amount, and the cost of a false positive (credit declined that would have been paid back) is the lost profit:
            \[
                C_{\text{FN},i} = Cl_i\cdot 0.75, \quad C_{\text{FP},i}=r_i +  C_{\text{FP}}^a,
            \]
            where $Cl_i$ is the customer's credit line, $r_i$ is the loss in profit by rejecting a good customer, and $-C_{\text{FP}}^a$ is the gain through giving out the loan to an alternative customer.
            $Cl_i$ and $r_i$ are not readily available from the data, but \citet{bahnsen_example-dependent_2014} provide a way to calculate these quantities in Appendix A of their paper.
            The resulting pointwise utility is
            \[
                -\sum_{i=1}^n y_i (1-c_{\text{FN},i})\boldsymbol{1}_{f_i \le \tau_i} + (1-y_i)c_{\text{FP},i}\boldsymbol{1}_{f_i>\tau_i},
            \]
            where we choose the Bayes-optimal $\tau_i = \tfrac{c_{\text{FP},i}}{c_{\text{FP},i} + c_{\text{FN},i}}$ for each customer separately.
            We show a histogram of the $\tau_i$ in the center of \cref{fig:cumsum_thresholds_ks}.
            The problem is similar to a binary decision, but the false negative and false positive costs are customer-dependent and highly variable.
            We use the same two datasets as in \citet{bahnsen_example-dependent_2014}: the PAKDD and Kaggle credit datasets \cite{pakdd_credit_2009,cukierski_givemesomecredit_2011}.
            The Kaggle dataset has no associated license, but our use respects corresponding competition rules.
            The PAKDD data is openly available through the Python package \texttt{costcla} \cite{bahnsen_costcla_2016} that uses a BSD 3-Clause License.
            We show the results on the left of \cref{fig:credit_p2p}, averaged over the two datasets, and see that the binary decision PWU has the strongest alignment with the credit approval utility.
            This is particularly remarkable, because we can see in the center of \cref{fig:cumsum_thresholds_ks} that the prior $\pi_c$ we chose for our PWU metric is misspecified for the PAKDD dataset.
            Nevertheless, the PWU metrics robustly achieve strong utility alignment.

        \paragraph{Peer-to-peer lending}

            In peer-to-peer (P2P) lending, the task is to lend money to a few candidates from a large pool of borrowers, given a fixed budget $B$.
            We follow the setup from \citet{byanjankar_datadriven_2021}, who use $B=15,\!000$ and the Lending Club dataset \cite{nathan_lending_2020} that uses a CC0 1.0 Universal license.
            The modeling task is probabilistic binary classification, predicting the probability of a good loan ($y=1$).
            In our implementation, we greedily select the loans with the highest predicted probabilities until the budget $B$ is exhausted.
            The utility is then computed as the return through the borrower for those loans that are paid back, minus the lost lent money for those loans that have not been paid back.
            To make the task nontrivial for our models, we considered only loans with grade $6$ or higher, corresponding to worse gradings.
            The task resembles top-$k$ selection, but the payoff is customer-dependent and highly variable.
            In addition, there is no fixed-$k$, but $k$ is implicitly determined through the fixed budget and the loan amounts of the chosen borrowers---we visualize the empirical $k/n$-distribution on our 500 different test sets on the right of \cref{fig:cumsum_thresholds_ks}.
            We show the results on the right of \cref{fig:credit_p2p}.
            We can see that the top-$k$ PWU has the second strongest alignment with the P2P lending utility---even though its prior is misspecified in light of the case study.

\begin{figure}[ht]
    \begin{center}
        \centerline{\includegraphics[width=.9\textwidth]{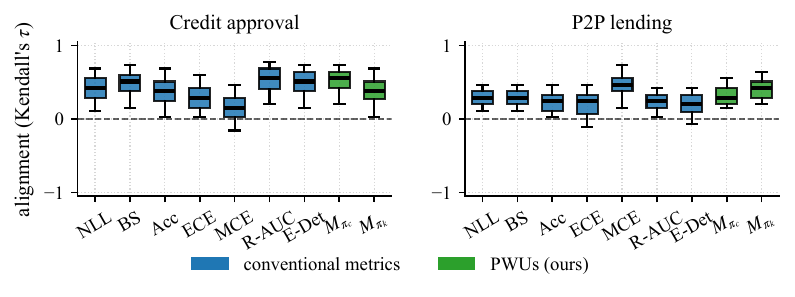}}
        \caption{
          Metric--utility alignment in the credit approval case study \emph{(left)} and the P2P lending case study \emph{(right)}.
          We can see that the PWU metrics exhibit the strongest \emph{(left)} and second strongest \emph{(right)} alignment with the utilities in these two case studies.
        }
        \label{fig:credit_p2p}
    \end{center}
    \vskip -0.2in
\end{figure}

    \subsection{Sensitivity analysis}\label{app:sens_analysis}

\begin{figure}[ht]
    \begin{center}
        \centerline{\includegraphics[width=.9\textwidth]{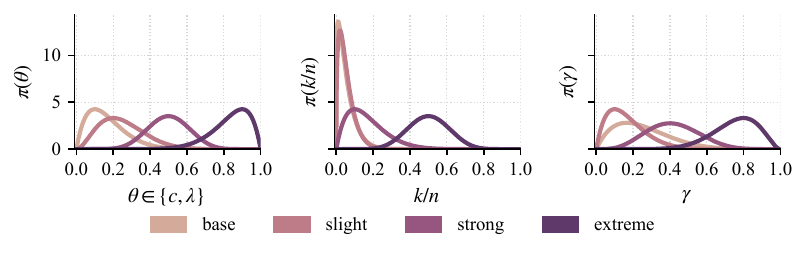}}
        \caption{
          Prior perturbations considered in our sensitivity analysis.
          The base priors anticipated by our PWU metrics are shown in the lightest color, while darker shades correspond to increasing levels of prior misspecification.
        }
        \label{fig:priors_perturbed}
    \end{center}
    \vskip -0.2in
\end{figure}

        When constructing a PWU metric for a given decision family, one must commit to a specific prior.
        In this section, we analyze the sensitivity of PWU metrics to \emph{prior misspecification}.
        We rerun our benchmark dataset experiments, but instead of measuring alignment to utilities $U_\theta$ originating from the same prior assumption $\pi_\theta$, we measure the alignment to utilities stemming from perturbed priors $\tilde{\pi}_\theta$, so that our PWU metric priors are misspecified.
        We show the choice of our perturbed priors $\tilde{\pi}_\theta$ in \cref{fig:priors_perturbed}, alongside the \emph{base prior} our PWU anticipates.
        We consider a \emph{slight}, \emph{strong}, and \emph{extreme} perturbation.
        In \cref{fig:sens_analysis_bc} and \ref{fig:sens_analysis_reg}, we show how the metric--utility alignments change across these perturbation levels in the binary classification and regression experiments respectively.

\begin{figure}[ht]
    \begin{center}
        \centerline{\includegraphics[width=.8\textwidth]{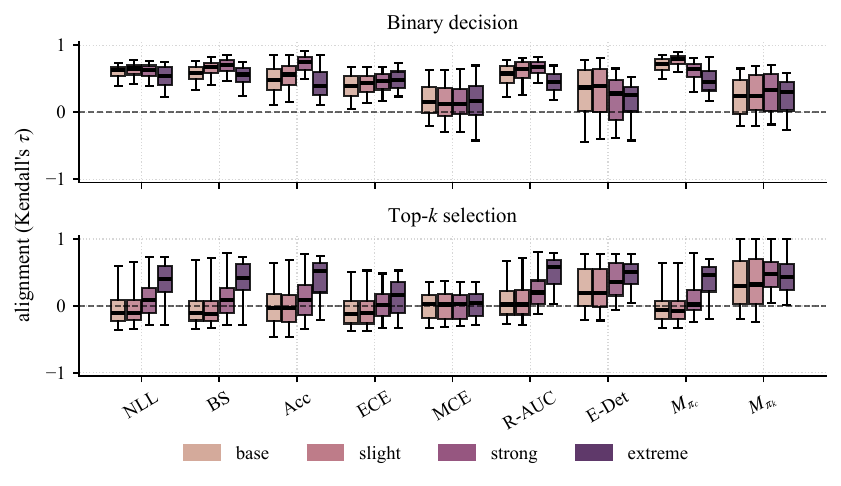}}
        \caption{
            Metric--utility alignment in binary classification under increasing levels of prior perturbation.
            The boxes corresponding to the base prior coincide with those shown in \cref{fig:benchmark}.
            Our PWU metrics exhibit strong robustness even under extreme prior misspecification.
        }
        \label{fig:sens_analysis_bc}
    \end{center}
    \vskip -0.2in
\end{figure}
        
        \paragraph{Binary decision}

            Our base prior for the binary decision problem has a mode at $10\%$, reflecting asymmetric costs in which false negatives are substantially more costly than false positives.
            The perturbed priors shift the mode to $20\%$ (slight), $50\%$ (strong), and $90\%$ (extreme).
            The metric $M_{\pi_c}$ is the PWU metric that is decision-aligned with respect to the binary decision problem under the base prior.
            We can see that a slight perturbation does not affect its utility-alignment, and the strong and extreme perturbation slightly worsens it.
            However, even after extreme perturbation, our PWU metric is still among the most aligned ones, demonstrating its robustness against prior-misspecification.
            Notably, even $M_{\pi_k}$ exhibits fair alignment that is entirely robust against different prior choices for $c$.
            Another interesting observation is the development of Acc.
            Remember that Acc is decision-aligned with respect to the binary decision problem for $\pi_c=\delta_{0.5}$, so we would expect a strong alignment for the Beta prior with mode at $0.5$---which is exactly what we can observe.

        \paragraph{Top-\texorpdfstring{$\boldsymbol k$}{k} selection (classification)}

            Our base prior for the top-$k$ selection problem has a mode at $1\%$, reflecting the task of selecting a small number of items from a large pool.
            The perturbed priors shift the mode to $2\%$ (slight), $10\%$ (strong), and $50\%$ (extreme).
            We first observe that increasing levels of perturbation improve the alignment of metrics that evaluate performance over the entire dataset.
            This behavior is expected, as larger values of $k$ imply that a greater fraction of instances contributes to the utility $U_k$.
            In particular, we expect this trend to be most pronounced for accuracy (Acc), since Acc is decision-aligned w.r.t.\ $U_k$ when $k=n$ (see \cref{sect:theory})---which is exactly what is observed.
            Turning to $M_{\pi_k}$, the PWU metric that is decision-aligned with respect to the top-$k$ selection problem under the base prior, we find that slight and strong perturbations even lead to a modest improvement in alignment.
            A plausible explanation is that for $k/n=1\%$, the selection task may be trivial on many benchmark datasets, with all evaluated models successfully selecting only positive instances.\footnote{
                We nevertheless argue that the base prior remains well motivated, as it reflects realistic application scenarios.
                Moreover, our case studies demonstrate that even smaller values of $k/n$ can yield non-trivial decision problems depending on the dataset and task.
            }
            As $k$ increases, the task becomes less trivial, enabling a more meaningful assessment of alignment.
            Even under extreme perturbation, however, the alignment of $M_{\pi_k}$ degrades only moderately, demonstrating the robustness of our PWU metric to prior misspecification.
            
\begin{figure}[ht]
    \begin{center}
        \centerline{\includegraphics[width=.8\textwidth]{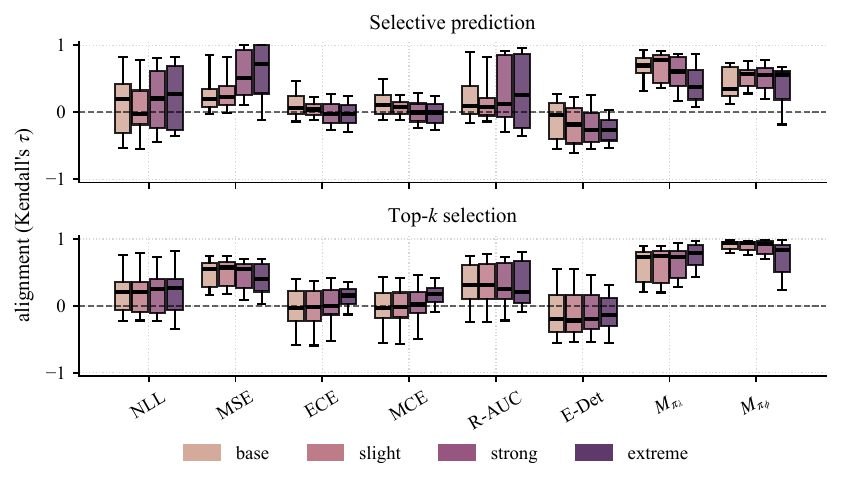}}
        \caption{
            Metric--utility alignment in regression under increasing levels of prior perturbation.
            The boxes corresponding to the base prior coincide with those shown in \cref{fig:benchmark}.
            Our PWU metrics exhibit strong robustness even under extreme prior misspecification.
        }
        \label{fig:sens_analysis_reg}
    \end{center}
    \vskip -0.2in
\end{figure}

        \paragraph{Selective prediction}

            Our base prior for selective prediction has a mode at 10\% (abstention is cheaper about 10\% of the time), while the perturbations consider the modes 20\% (slight), 50\% (strong), and 90\% (extreme).
            $M_{\pi_\lambda}$ is our PWU that is decision-aligned with respect to the selective prediction problem under the base prior.
            We can see that a slight perturbation does not affect its utility-alignment, and the strong and extreme perturbation slightly worsens it.
            However, even after extreme perturbation, our PWU metric is still among the most aligned ones, demonstrating its robustness against prior-misspecification.
            Notably, even $M_{\pi_\phi}$ exhibits fair alignment that is robust against different prior choices for $\lambda$.
            Another interesting observation is the development of the MSE.
            Remember that the MSE is decision-aligned for $\pi_\lambda=\delta_{\infty}$, so we would expect a strong alignment for the Beta prior with a high mode---which is exactly what we can observe.

        \paragraph{Top-\texorpdfstring{$\boldsymbol k$}{k} selection (regression)}

            Our base prior for the top-$k$ selection problem has a mode at $k/n = 1\%$ and a moderate risk-aversion parameter~$\gamma$.
            The perturbed priors consider $k/n = 2\%$ with low risk aversion (slight), $k/n = 10\%$ with strong risk aversion (strong), and $k/n = 50\%$ with extreme risk aversion (extreme).
            The metric $M_{\pi_\phi}$ is the PWU metric that is decision-aligned with respect to the top-$k$ selection problem under the base prior.
            Slight and strong perturbations have essentially no effect on its utility alignment, while extreme perturbation leads to a moderate degradation.
            Nevertheless, even under extreme perturbation, $M_{\pi_\phi}$ remains among the most utility-aligned metrics, demonstrating robustness to prior misspecification.
            Notably, even $M_{\pi_\lambda}$ exhibits strong alignment that is robust against different prior choices for $\phi$.

            Overall, these results show that PWU metrics are robust to prior misspecification, further underscoring their suitability in general-purpose UQ benchmarking.

            \begin{remark}[Utility misspecification robustness]
                The sensitivity analysis above studies the robustness of a PWU metric $M_\pi$ when the prior $\pi$ is misspecified \emph{within} its target utility family.
                A complementary form of misspecification, however, occurs when the entire utility family is misspecified.
                Our experiments already provide empirical evidence on this scenario, since we evaluate every applicable PWU against every utility family considered.
                Inspecting Figure~\ref{fig:benchmark}, we observe that most PWUs remain among the most decision-aligned metrics even when applied across utility families: $M_{\pi_k}$ exhibits strong alignment with the binary decision utility, and analogously $M_{\pi_\lambda}$ and $M_{\pi_\phi}$ transfer between selective prediction and top-$k$ selection in the regression setting.
                A plausible explanation is that all of our PWUs are proper scoring rules and thus a fair heuristic in any application considered.
                Combined with the prior-level sensitivity analysis, this suggests that PWU metrics degrade gracefully under both forms of misspecification, further supporting their use as default metrics in general-purpose UQ benchmarking.
            \end{remark}

    \subsection{Reproducibility}\label{app:reprod}

        Our codebase is provided at \url{https://github.com/fortuinlab/prior-weighted-utilities}.
        Essentials on the experiments are described in this manuscript.
        All further details can be directly extracted from the codebase.
        The code is readily executable and documented in its \texttt{README.md}.
        All experiments are \emph{deterministic} through fixed seeds.
        We ran all experiments on an internal high-performance compute cluster.
        For PWU evaluation we used CPU nodes (Intel Xeon Gold 6248R, 2 cores
        and 60~GB of memory per job), with each evaluation taking less than
        one minute.
        For model training we used GPU nodes, where each job was
        allocated one NVIDIA A100 (40~GB MIG slice), 16 CPU cores, and 60~GB
        of memory.
        Training all models in a single repeat--fold combination
        takes approximately 14--17 minutes for the binary classification,
        univariate regression, multiclass classification, and multivariate
        regression benchmarks, and 6--16 minutes for the credit approval and
        P2P lending case studies.
        The electricity market case study has no repeat--fold structure and takes 59 minutes for a single full run.
        Reproducing the full benchmark suite (10 models for binary
        classification and univariate regression, 5 models for the multiclass
        and multivariate settings, each across 5 datasets and 500
        repeat--fold combinations) and the case studies therefore requires
        on the order of 2{,}500 GPU-hours in total.

\begin{table}[ht]
    \caption{
        Metric--utility alignment (Kendall's $\tau$ of model rankings) over 100 repeats in the binary decision problem.
        We mark the highest median in \textbf{bold}.
        $M_{\pi_c}$ aligns best with respect to the binary decision utility family on all datasets.
    }
    \vspace{0.7em}
    \label{tab:benchmark_bc_bd}
    \centering
                \begin{tabular}{ccccccccc}
                    \toprule
                    NLL & BS & Acc & ECE & MCE & R-AUC & E-Det & $M_{\pi_c}$ & $M_{\pi_k}$\\
                    \midrule
                    \multicolumn{9}{c}{bank marketing}\\
                    \midrule
                    0.69 & 0.72 & 0.62 & 0.40 & 0.14 & 0.74 & 0.76 & \textbf{0.79} & 0.58 \\
                    \tiny{{[}0.61, 0.78{]}} & \tiny{{[}0.64, 0.81{]}} & \tiny{{[}0.46, 0.73{]}} & \tiny{{[}0.17, 0.57{]}} & \tiny{{[}-0.11, 0.39{]}} & \tiny{{[}0.64, 0.80{]}} & \tiny{{[}0.63, 0.85{]}} & \tiny{{[}0.69, 0.85{]}} & \tiny{{[}0.41, 0.72{]}} \\
                    \midrule
                    \multicolumn{9}{c}{heart disease}\\
                    \midrule
                    0.50 & 0.45 & 0.24 & 0.37 & 0.18 & 0.34 & 0.20 & \textbf{0.58} & 0.09 \\
                    \tiny{{[}0.32, 0.64{]}} & \tiny{{[}0.23, 0.62{]}} & \tiny{{[}0.02, 0.49{]}} & \tiny{{[}0.12, 0.57{]}} & \tiny{{[}-0.09, 0.56{]}} & \tiny{{[}0.11, 0.52{]}} & \tiny{{[}-0.17, 0.40{]}} & \tiny{{[}0.41, 0.70{]}} & \tiny{{[}-0.16, 0.34{]}} \\
                    \midrule
                    \multicolumn{9}{c}{ionosphere}\\
                    \midrule
                    0.64 & 0.57 & 0.42 & 0.16 & -0.03 & 0.65 & 0.54 & \textbf{0.73} & 0.46 \\
                    \tiny{{[}0.45, 0.73{]}} & \tiny{{[}0.45, 0.70{]}} & \tiny{{[}0.23, 0.59{]}} & \tiny{{[}-0.03, 0.34{]}} & \tiny{{[}-0.30, 0.29{]}} & \tiny{{[}0.56, 0.76{]}} & \tiny{{[}0.36, 0.72{]}} & \tiny{{[}0.66, 0.84{]}} & \tiny{{[}0.28, 0.63{]}} \\
                    \midrule
                    \multicolumn{9}{c}{mushroom}\\
                    \midrule
                    0.65 & 0.65 & 0.71 & 0.61 & 0.57 & 0.56 & -0.38 & \textbf{0.80} & -0.13 \\
                    \tiny{{[}0.54, 0.75{]}} & \tiny{{[}0.55, 0.79{]}} & \tiny{{[}0.61, 0.94{]}} & \tiny{{[}0.52, 0.74{]}} & \tiny{{[}0.46, 0.72{]}} & \tiny{{[}0.47, 0.72{]}} & \tiny{{[}-0.62, -0.24{]}} & \tiny{{[}0.66, 0.86{]}} & \tiny{{[}-0.43, -0.02{]}} \\
                    \midrule
                    \multicolumn{9}{c}{sonar}\\
                    \midrule
                    0.57 & 0.53 & 0.40 & 0.40 & 0.04 & 0.44 & 0.33 & \textbf{0.64} & 0.21 \\
                    \tiny{{[}0.41, 0.70{]}} & \tiny{{[}0.32, 0.65{]}} & \tiny{{[}0.18, 0.61{]}} & \tiny{{[}0.15, 0.65{]}} & \tiny{{[}-0.18, 0.31{]}} & \tiny{{[}0.24, 0.60{]}} & \tiny{{[}0.03, 0.61{]}} & \tiny{{[}0.43, 0.79{]}} & \tiny{{[}-0.02, 0.48{]}} \\
                    \bottomrule
                \end{tabular}
\end{table}

\begin{table}[ht]
    \caption{
        Metric--utility alignment (Kendall's $\tau$ of model rankings) over 100 repeats in the top-$k$ decision problem (binary classification).
        We mark the highest median in \textbf{bold}.
        $M_{\pi_k}$ aligns best with respect to the top-$k$ utility family on most datasets.
    }
    \vspace{0.7em}
    \label{tab:benchmark_bc_topk}
    \centering
                \begin{tabular}{ccccccccc}
                    \toprule
                    NLL & BS & Acc & ECE & MCE & R-AUC & E-Det & $M_{\pi_c}$ & $M_{\pi_k}$\\
                    \midrule
                    \multicolumn{9}{c}{bank}\\
                    \midrule
                    0.55 & \textbf{0.63} & 0.59 & 0.43 & 0.16 & \textbf{0.63} & 0.53 & 0.57 & 0.62 \\
                    \tiny{{[}0.40, 0.68{]}} & \tiny{{[}0.50, 0.75{]}} & \tiny{{[}0.44, 0.72{]}} & \tiny{{[}0.16, 0.57{]}} & \tiny{{[}-0.08, 0.41{]}} & \tiny{{[}0.49, 0.76{]}} & \tiny{{[}0.35, 0.70{]}} & \tiny{{[}0.44, 0.71{]}} & \tiny{{[}0.50, 0.72{]}} \\
                    \midrule
                    \multicolumn{9}{c}{heartdisease}\\
                    \midrule
                    -0.17 & -0.16 & -0.01 & -0.16 & 0.15 & -0.02 & 0.00 & -0.15 & \textbf{0.24} \\
                    \tiny{{[}-0.38, 0.13{]}} & \tiny{{[}-0.39, 0.10{]}} & \tiny{{[}-0.29, 0.34{]}} & \tiny{{[}-0.38, 0.08{]}} & \tiny{{[}-0.15, 0.46{]}} & \tiny{{[}-0.29, 0.32{]}} & \tiny{{[}-0.33, 0.38{]}} & \tiny{{[}-0.36, 0.12{]}} & \tiny{{[}-0.07, 0.49{]}} \\
                    \midrule
                    \multicolumn{9}{c}{ionosphere}\\
                    \midrule
                    -0.12 & -0.15 & -0.33 & -0.10 & -0.02 & -0.16 & \textbf{0.02} & -0.16 & -0.11 \\
                    \tiny{{[}-0.29, 0.07{]}} & \tiny{{[}-0.33, 0.07{]}} & \tiny{{[}-0.51, -0.03{]}} & \tiny{{[}-0.36, 0.09{]}} & \tiny{{[}-0.33, 0.25{]}} & \tiny{{[}-0.29, 0.01{]}} & \tiny{{[}-0.25, 0.32{]}} & \tiny{{[}-0.29, 0.03{]}} & \tiny{{[}-0.31, 0.18{]}} \\
                    \midrule
                    \multicolumn{9}{c}{mushroom}\\
                    \midrule
                    -0.24 & -0.24 & -0.02 & -0.29 & -0.24 & 0.09 & 0.69 & -0.11 & \textbf{1.00} \\
                    \tiny{{[}-0.42, -0.11{]}} & \tiny{{[}-0.42, -0.16{]}} & \tiny{{[}-0.42, 0.11{]}} & \tiny{{[}-0.42, -0.16{]}} & \tiny{{[}-0.42, -0.16{]}} & \tiny{{[}-0.29, 0.24{]}} & \tiny{{[}0.51, 0.78{]}} & \tiny{{[}-0.38, -0.02{]}} & \tiny{{[}1.00, 1.00{]}} \\
                    \midrule
                    \multicolumn{9}{c}{sonar}\\
                    \midrule
                    -0.07 & -0.06 & -0.07 & -0.14 & 0.05 & -0.04 & 0.04 & -0.08 & \textbf{0.09} \\
                    \tiny{{[}-0.29, 0.12{]}} & \tiny{{[}-0.22, 0.14{]}} & \tiny{{[}-0.24, 0.11{]}} & \tiny{{[}-0.39, 0.11{]}} & \tiny{{[}-0.20, 0.26{]}} & \tiny{{[}-0.20, 0.14{]}} & \tiny{{[}-0.22, 0.25{]}} & \tiny{{[}-0.31, 0.19{]}} & \tiny{{[}-0.18, 0.31{]}} \\
                    \bottomrule
                \end{tabular}
\end{table}

\begin{table}[ht]
    \caption{
        Metric--utility alignment (Kendall's $\tau$ of model rankings) over 100 repeats in the selective decision problem.
        We mark the highest median in \textbf{bold}.
        $M_{\pi_\lambda}$ aligns best with respect to the selective decision utility family on all datasets.
    }
    \vspace{0.7em}
    \label{tab:benchmark_reg_selpred}
    \centering
                \begin{tabular}{cccccccc}
                    \toprule
                    NLL & MSE & ECE & MCE & R-AUC & E-Det & $M_{\pi_\lambda}$ & $M_{\pi_\phi}$\\
                    \midrule
                    \multicolumn{8}{c}{air quality}\\
                    \midrule
                    0.39 & 0.16 & 0.44 & 0.47 & 0.36 & 0.21 & \textbf{0.67} & 0.16 \\
                    \tiny{{[}0.30, 0.47{]}} & \tiny{{[}0.02, 0.26{]}} & \tiny{{[}0.36, 0.51{]}} & \tiny{{[}0.39, 0.56{]}} & \tiny{{[}0.27, 0.43{]}} & \tiny{{[}0.10, 0.31{]}} & \tiny{{[}0.63, 0.72{]}} & \tiny{{[}0.07, 0.24{]}} \\
                    \midrule
                    \multicolumn{8}{c}{auto mpg}\\
                    \midrule
                    0.20 & 0.27 & 0.04 & 0.08 & 0.08 & -0.42 & \textbf{0.79} & 0.69 \\
                    \tiny{{[}0.00, 0.31{]}} & \tiny{{[}0.11, 0.44{]}} & \tiny{{[}-0.12, 0.19{]}} & \tiny{{[}-0.06, 0.24{]}} & \tiny{{[}-0.04, 0.24{]}} & \tiny{{[}-0.63, -0.25{]}} & \tiny{{[}0.72, 0.84{]}} & \tiny{{[}0.57, 0.78{]}} \\
                    \midrule
                    \multicolumn{8}{c}{energy efficiency}\\
                    \midrule
                    0.81 & 0.84 & 0.19 & 0.22 & 0.88 & -0.43 & \textbf{0.92} & 0.69 \\
                    \tiny{{[}0.72, 0.88{]}} & \tiny{{[}0.79, 0.87{]}} & \tiny{{[}0.08, 0.31{]}} & \tiny{{[}0.13, 0.33{]}} & \tiny{{[}0.82, 0.92{]}} & \tiny{{[}-0.56, -0.28{]}} & \tiny{{[}0.87, 0.93{]}} & \tiny{{[}0.64, 0.75{]}} \\
                    \midrule
                    \multicolumn{8}{c}{power plant}\\
                    \midrule
                    -0.27 & 0.03 & -0.02 & -0.02 & -0.01 & -0.03 & \textbf{0.33} & 0.28 \\
                    \tiny{{[}-0.36, -0.16{]}} & \tiny{{[}-0.08, 0.10{]}} & \tiny{{[}-0.09, 0.10{]}} & \tiny{{[}-0.08, 0.11{]}} & \tiny{{[}-0.10, 0.06{]}} & \tiny{{[}-0.17, 0.13{]}} & \tiny{{[}0.27, 0.42{]}} & \tiny{{[}0.22, 0.38{]}} \\
                    \midrule
                    \multicolumn{8}{c}{wine quality}\\
                    \midrule
                    -0.48 & 0.20 & -0.08 & -0.04 & -0.10 & 0.09 & \textbf{0.66} & 0.32 \\
                    \tiny{{[}-0.61, -0.35{]}} & \tiny{{[}0.02, 0.34{]}} & \tiny{{[}-0.20, 0.06{]}} & \tiny{{[}-0.16, 0.09{]}} & \tiny{{[}-0.25, 0.05{]}} & \tiny{{[}-0.05, 0.27{]}} & \tiny{{[}0.51, 0.76{]}} & \tiny{{[}0.18, 0.44{]}} \\
                    \bottomrule
                \end{tabular}
\end{table}

\begin{table}[ht]
    \caption{
        Metric--utility alignment (Kendall's $\tau$ of model rankings) over 100 repeats in the top-$k$ decision problem (regression).
        We mark the highest median in \textbf{bold}.
        $M_{\pi_\phi}$ aligns best with respect to the selective decision utility family on all datasets.
    }
    \vspace{0.7em}
    \label{tab:benchmark_reg_topk}
    \centering
                \begin{tabular}{cccccccc}
                    \toprule
                    NLL & MSE & ECE & MCE & R-AUC & E-Det & $M_{\pi_\lambda}$ & $M_{\pi_\phi}$\\
                    \midrule
                    \multicolumn{8}{c}{air quality}\\
                    \midrule
                    -0.18 & 0.24 & -0.19 & -0.14 & -0.19 & -0.42 & 0.24 & \textbf{0.81} \\
                    \tiny{{[}-0.29, -0.09{]}} & \tiny{{[}0.14, 0.35{]}} & \tiny{{[}-0.26, -0.11{]}} & \tiny{{[}-0.23, -0.06{]}} & \tiny{{[}-0.32, -0.09{]}} & \tiny{{[}-0.50, -0.26{]}} & \tiny{{[}0.13, 0.32{]}} & \tiny{{[}0.78, 0.85{]}} \\
                    \midrule
                    \multicolumn{8}{c}{auto mpg}\\
                    \midrule
                    0.24 & 0.28 & -0.03 & -0.01 & 0.16 & -0.47 & 0.76 & \textbf{0.96} \\
                    \tiny{{[}0.09, 0.41{]}} & \tiny{{[}0.14, 0.48{]}} & \tiny{{[}-0.19, 0.19{]}} & \tiny{{[}-0.18, 0.22{]}} & \tiny{{[}0.01, 0.29{]}} & \tiny{{[}-0.68, -0.28{]}} & \tiny{{[}0.63, 0.85{]}} & \tiny{{[}0.94, 0.99{]}} \\
                    \midrule
                    \multicolumn{8}{c}{energy efficiency}\\
                    \midrule
                    0.72 & 0.62 & 0.34 & 0.40 & 0.72 & -0.21 & 0.78 & \textbf{0.94} \\
                    \tiny{{[}0.62, 0.82{]}} & \tiny{{[}0.55, 0.70{]}} & \tiny{{[}0.24, 0.45{]}} & \tiny{{[}0.29, 0.50{]}} & \tiny{{[}0.66, 0.79{]}} & \tiny{{[}-0.32, -0.04{]}} & \tiny{{[}0.72, 0.83{]}} & \tiny{{[}0.92, 0.96{]}} \\
                    \midrule
                    \multicolumn{8}{c}{power plant}\\
                    \midrule
                    0.31 & 0.57 & 0.20 & 0.13 & 0.58 & 0.08 & 0.87 & \textbf{0.96} \\
                    \tiny{{[}0.16, 0.39{]}} & \tiny{{[}0.45, 0.66{]}} & \tiny{{[}0.08, 0.29{]}} & \tiny{{[}0.00, 0.24{]}} & \tiny{{[}0.48, 0.66{]}} & \tiny{{[}-0.15, 0.30{]}} & \tiny{{[}0.76, 0.92{]}} & \tiny{{[}0.93, 0.97{]}} \\
                    \midrule
                    \multicolumn{8}{c}{wine quality}\\
                    \midrule
                    -0.01 & 0.70 & -0.53 & -0.52 & 0.32 & 0.49 & 0.41 & \textbf{0.87} \\
                    \tiny{{[}-0.20, 0.13{]}} & \tiny{{[}0.53, 0.83{]}} & \tiny{{[}-0.65, -0.35{]}} & \tiny{{[}-0.62, -0.33{]}} & \tiny{{[}0.16, 0.44{]}} & \tiny{{[}0.32, 0.65{]}} & \tiny{{[}0.26, 0.52{]}} & \tiny{{[}0.76, 0.94{]}} \\
                    \bottomrule
                \end{tabular}
\end{table}

\section{Multiclass and Multivariate Experiments}\label{app:multi_exp}

    We now extend our binary and univariate experiments from \cref{sect:benchmarks} to the multiclass and multivariate setting.
    We consider the metrics and utilities defined in \cref{app:multi}.
    For the experimental evaluation of the multiclass-decision PWU, we use the same $\text{Beta}(2, 10)$ prior on $c$ as in the binary case (Appendix~\ref{app:priors}), with $j$ sampled uniformly over $[K]$.
    For the experimental evaluation of the multivariate selective-prediction PWU, we use the same $\text{Beta}(2, 10)$ prior on the dimensionless abstention-cost factor $\lambda$ as in the univariate case (Appendix~\ref{app:priors}), with $\lambda$ rescaled per dimension by the empirical variance inside the utility.
    As probabilistic ML models, we again use LogReg, RFs, GB, MLPs, and TabPFN for classification and LinReg, RFs, NGBoost, MLPs, and deep ensembles for regression.
    We use covertype~\cite{blackard_covertype_1998}, dry bean~\cite{koklu_dry_2020}, iris~\cite{fisher_iris_1936}, pen-based digits~\cite{alpaydin_pen-based_1996}, and wine~\cite{aeberhard_wine_1992} as classification datasets and air quality~\cite{vito_air_2008}, energy efficiency~\cite{tsanas_energy_2012}, solar flare~\cite{bradshaw_solar_1989}, SGEMM~\cite{paredes_sgemm_2017}, and parkinsons telemonitoring~\cite{tsanas_parkinsons_2009} as regression datasets.
    All datasets use a CC BY 4.0 license.
    We can see the results in \cref{fig:benchmark_multi}.
    In classification, the PWU metric exhibits the strongest utility alignment alongside the BS.
    In regression, the ES has a stronger median alignment, but a significantly lower, even negative, 5\% quantile, while the PWU metric is the only metric with consistent positive utility alignment.

\begin{figure*}[ht]
  \begin{center}
    \centerline{\includegraphics[width=.9\linewidth]{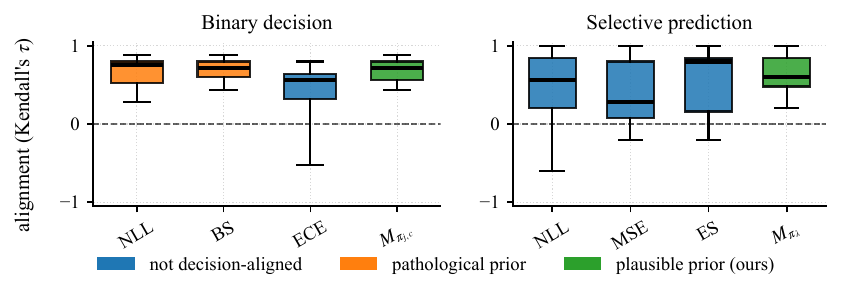}}
    \caption{
      Metric--utility alignment in classification \emph{(left)} and regression \emph{(right)}, averaged over our five datasets.
      The coloring corresponds to our theoretical findings from Appendix \ref{app:multi}.
      In classification, the PWU metric exhibits the strongest utility alignment alongside the BS.
      In regression, the PWU metric is the only metric with consistent positive utility alignment.
    }
    \label{fig:benchmark_multi}
  \end{center}
  \vskip -0.2in
\end{figure*}

\section{Recommendations for UQ benchmarking}\label{app:recomm}

    Our findings suggest a concrete revision to the standard UQ evaluation
    protocol.
    We do not advocate replacing NLL, BS, or ECE---these metrics
    carry useful auxiliary information about likelihood and calibration---but we do advocate against treating them as the
    primary evidence that a UQ method is ``good.'' Specifically, we
    recommend that papers proposing new UQ methods:
    \begin{enumerate}
        \item \textbf{Report PWU metrics for at least two distinct decision
        families} that plausibly reflect deployment use cases, e.g., a
        cost-sensitive binary decision PWU and a selective-prediction PWU.
        \item \textbf{State the prior $\pi$ explicitly}, following our recommendations in Appendix~\ref{app:priors}:
        Priors should be elicited from the deployment context where
        possible and from generic plausible defaults (such as the Beta
        priors we use). Hidden
        priors---those implicit in NLL, BS, or accuracy---are exactly what
        our analysis shows to be problematic.
        \item \textbf{Report conventional metrics alongside, but interpret
        them in light of their implicit priors} (Table~\ref{tab:theory_summary}).
        NLL improvements at the tails of $c$ may be irrelevant for a
        deployment in which the cost ratio is plausibly bounded away from
        $0$ and $1$; ECE improvements may not translate into utility gains
        at all.
    \end{enumerate}
    This is a modest amount of additional reporting---a few extra columns
    in a results table---but it shifts UQ benchmarking from implicitly
    assuming a single pathological prior to explicitly evaluating against a
    small panel of plausible decision contexts.



\end{document}